\newtheorem{defn}{Definition}
\newtheorem{lma}{Lemma}
\newtheorem{assm}{Assumption}
\DeclareMathOperator*{\argmin}{arg\,min}
\DeclareSymbolFont{bbold}{U}{bbold}{m}{n}
\DeclareSymbolFontAlphabet{\mathbbold}{bbold}
\title{Towards an Understanding of Benign Overfitting in Neural Networks}
\author{%
  Zhu Li\\
  Gatsby Computational Neuroscience Unit\\
  University College London\\
  London, UK, W1T 4JG \\
  \texttt{zhu.li@ucl.ac.uk} \\
  \And
  Zhi-Hua Zhou \\
  National Key Laboratory for Novel Software Technology \\
  Nanjing University\\
  Nanjing, China, 210023 \\
  \texttt{zhouzh@lamda.nju.edu.cn} \\
  \And
  Arthur Gretton\\
  Gatsby Computational Neuroscience Unit\\
  University College London\\
  London, UK, W1T 4JG\\
  \texttt{arthur.gretton@gmail.com} \\
  
}
\begin{document}

\maketitle
\vspace{-1em}
\begin{abstract}
Modern machine learning models often employ a huge number of parameters and are typically optimized to have zero training loss; yet surprisingly, they possess near-optimal prediction performance, contradicting classical learning theory. We examine how these benign overfitting phenomena occur in a two-layer neural network setting where sample covariates are corrupted with noise. We address the high dimensional regime, where the data dimension $d$ grows with the number $n$ of data points. Our analysis combines an upper bound on the bias with matching upper and lower bounds on the variance of the interpolator (an estimator that interpolates the data). These results indicate that the excess learning risk of the interpolator decays under mild conditions. We further show that it is possible for the two-layer ReLU network interpolator to achieve a near minimax-optimal learning rate, which to our knowledge is the first generalization result for such networks. Finally, our theory predicts that the excess learning risk starts to increase once the number of parameters $s$ grows beyond  $O(n^2)$, matching recent empirical findings.
\end{abstract}
\vspace{-.5em}
\section{Introduction}
In modern machine learning, overparameterized models have been applied to a wide range of learning tasks, such as natural language processing \citep{hinton2012deep} and computer vision \citep{he2015delving}. Despite their empirical success, the theoretical properties of these overparameterized models remain unclear. In particular, recent empirical studies \citep{zhang2016understanding,belkin2018understand,belkin2019reconciling} demonstrate that overparameterized models can not only perfectly fit the noisy training data, but also achieve near-optimal prediction accuracy. Furthermore, a plot of the prediction error as a function of the number of parameters reveals that the learning risk for overparameterized models displays a \textit{double descent} behavior \citep{belkin2019reconciling}.  

The phenomenon where overparameterized models can generalize well has recently been characterized as \textit{benign overfitting} \citep{bartlett2020benign} and has attracted significant research interest~\citep[see, e.g.,][]{bartlett2020benign,belkin2018understand,belkin2019reconciling,belkin2019two,mei2019generalization,liao2020random,hastie2019surprises,deng2019model,derezinski2019exact,zhou2021over,liang2020just,liang2020multiple}. Existing research has been focused on studying the learning risk behavior of the \textit{interpolator} (the estimator that interpolates the training data) under two regimes. The first regime investigates the asymptotic learning risk of the interpolator~\citep{mei2019generalization,hastie2019surprises,liao2020random,wu2020optimal,richards2021asymptotics}. These works derive the asymptotic learning risk by assuming that the data dimension $d$ and number of training samples $n$ (and number of parameters $s$ in the non-linear regression case) grow simultaneously while their ratio is kept fixed. These results indicate that the prediction error can exhibit a double descent curve or a multiple descent curve with respect to the ratio between $d,n$ and $s$. However, asymptotic results are often limited to linear regression or random feature learning, and require further restrictive assumptions such as a Gaussian kernel or a Gaussian data generating distribution. In addition, these results do not provide a finite sample analysis of the learning risk, and hence do not give insight into the learning rate of the interpolator.

To overcome the above limitations, the second line of research studies the finite sample behavior of the learning risk. In linear regression, \cite{bartlett2020benign} provide a first finite sample study of the interpolator. By assuming that the covariates belong to an infinite-dimensional Hilbert space, and that the data generating distribution follows a subgaussian distribution, they derive a matching upper and lower bound on the excess risk. These results indicate the condition for the interpolator to be consistent: the covariance operator spectrum has to decay slowly enough so that the sum of the tail of its eigenvalues is large compared to $n$. Following this, \cite{chinot2020benign} extend the analysis into the large deviation regime where the label noise exhibits heavy tail behavior and \cite{bunea2020interpolation} investigate the benign overfitting phenomenon for the widely used factor regression models. Their finite sample analysis reveals that when the response and the covariate are jointly low dimensional, the interpolator can obtain the optimal prediction accuracy. \cite{mahdaviyeh2019risk} further study the learning risk of the interpolator under the spike covariance model. They show that the excess learning risk of the interpolator can vanish as long as a fixed number of leading eigenvalues grow with $n$ and are significantly larger than the rest. \cite{muthukumar2020harmless} explore the overparameterized regime in linear regression with noisy data, and provide the lower bound for the learning risk of the interpolator. Their results show that the lower bound can decay to zero and overparameterization is necessary for harmless interpolation. In kernel learning, \cite{liang2020just} study the kernel interpolator and demonstrate that the prediction error converges provided that the kernel function has a nice curvature property. Generalization properties of high dimensional kernel ridge regression are investigated in \cite{liu2021kernel} in both underfitted and overfitted regime. Depending on the values of $d$ and $n$, the learning risk displays unimodal or monotonically decreasing behavior. Finally, in \cite{li2020benign}, the finite sample risk bound is studied in the noisy random feature setting. With no additional assumptions on the data generating distribution and the kernel structure, they derive the double descent learning curve under the setting where the features are corrupted with subgaussian noise.

While many explanations of benign overfitting have been proposed, existing results mainly focus on either linear or kernel regression models. In particular, we do not have a thorough understanding of the overparameterized neural network model, which is the setting where benign overfitting phenomenon was observed in the first place. Therefore, in this paper, we study the finite sample risk behavior for a two-layer ReLU neural network. We aim to paint a more comprehensive picture by studying the generalization properties of the interpolator under two assumptions: i) the high dimensional regime where $d = O(n^{\alpha}), \alpha \in (0,1)$; and ii) the noisy covariates regime where the sample covariates are corrupted by independent noise (see more details in Section \ref{sec:main_result}). Our results shed light on how benign overfitting occurs. Specifically, we make the following contributions:
\begin{itemize}
    \item Theorem \ref{theo:var_upper} provides an upper bound on the \textit{variance} of the interpolator. Our analysis reveals that the variance of the interpolator converges as long as the decay rate of the covariate noise is lower than that of the spectrum of the covariance matrix. Moreover, the convergence holds even in the heavily parameterized setting where the order of the number of parameters ($s$) can be taken up to $O(nd)$. In addition, Theorem \ref{theo:var_lower} provides a matching lower bound on the variance, which indicates that our upper bound is tight;
    
    \item We study the properties of the \textit{bias} term in Section \ref{sec:bias}, where Theorem \ref{theo:bias_upper} provides a finite sample convergence bound. Our results show that convergence of the bias depends on the interplay between the data dimension $d$ and the covariate noise decay rate. In particular, if the data lives in a relatively low dimension ($\alpha \leq 1/2$) or the covariate noise has fast decay, the convergence of the bias is guaranteed;
    
    \item Section \ref{sec:ex_risk} discusses the generalization properties of the interpolator for two-layer ReLU network. We show that the interpolator can achieve a near minimax optimal learning rate $O(\sqrt{\log n /n})$ under mild conditions on the data dimension and noise decay rate. Moreover, we demonstrate that the prediction error displays a multiple descent, instead of a double descent, behavior in the presence of covariate noise. Our findings are supported by empirical evidence \citep{d2020triple,nakkiran2020optimal,adlam2020neural}. 
\end{itemize}
\vspace{-.5em}
\subsection{Additional Related Work}
Besides regression, benign overfitting has also been studied in the linear classification setting \citep{cao2021risk,muthukumar2020classification,wang2020benign,chatterji2020finite}. Specifically, using the overparameterized linear model with Gaussian feature, \cite{muthukumar2020classification} demonstrate that the solution of the hard-margin support vector machine (SVM) is equivalent to the minimum norm interpolator under square loss. Utilizing the equivalence, they provide the first non-asymptotic risk bounds of the minimum norm interpolator for the classification task. The benign overfitting phenomenon in the subgaussian/Gaussian mixture models is also studied in \cite{cao2021risk,chatterji2020finite} and \cite{wang2020benign}. Through the equivalence result between classification and regression or the implicit bias of gradient descent for logistic regression, they provide population risk bounds for overparameterized classification models.

Our work is also related to recent studies on understanding the risk of the estimators in the high dimensional regime. For example, \cite{rakhlin2019consistency} derive the risk bound for the Laplace kernel interpolator and show that the risk does not converge unless the data dimension $d$ grows with $n$. \cite{liang2020just} prove that the risk of the interpolator in the regime where $d$ grows with $n$ can be upper bounded by a small quantity provided that the kernel exhibits certain favorable spectral properties. In a similar high dimensional setting, \cite{liang2020multiple} provide the risk of the kernel interpolator and demonstrate that the risk curve has a multiple descent shape. \cite{ghorbani2021linearized} study random feature and neural tangent kernel regression in the high dimensional setting. They show that the two estimators are equivalent in fitting certain order of polynomials and can achieve near-optimal prediction accuracy for vanishing ridge regularization.

\section{Background}
\subsection{Shallow ReLU Network and Regularized Empirical Risk Minimization}
Let $x$ and $y$ be random variables with joint probability distribution $\rho(x,y)  = \rho(x)\rho(y|x)$. Given training samples $\mathcal{D} = \{(x_i,y_i)\}_{i=1}^n$ drawn independently and identically distributed (i.i.d) from $\rho(x,y)$, let $X = [x_1, \dots,x_n]^T$ and $Y = [y_1,\cdots,y_n]^T$ denote the training covariates and outputs. We consider the following function class to perform learning \[\mathcal{H} =: \left\{\sum_{i=1}^n  \sigma(\mathbf{W}x_i)^T\beta, \beta = [\beta_1,\dots,\beta_s]^T \in \mathbb{R}^s\right\},\] where $\sigma(x) = \max\{0,x\}$ is the ReLU activation function applied entrywise. $\mathbf{W} \in \mathbb{R}^{s\times d}$ is a random matrix with each entry $\mathbf{W}_{ij}$ being i.i.d according to $\pmb{\mathsf{w}} \sim \mathcal{N}(0,\sigma_w^2)$, where a typical choice is $\sigma_w^2 = s^{-1}$. In other words, we are using a two-layer ReLU neural network to perform learning. Thus, given a testing data point $x'$, our prediction at this point is \[f(x',\mathbf{W},\beta) =  \beta^T\sigma(\mathbf{W}x').\]

To simplify our presentation, we let \[D_x(\mathbf{W}) = \text{diag}\left\{\mathbbold{1}_{ \{\mathbf{w}_1^Tx >0 \}}, \dots, \mathbbold{1}_{\{\mathbf{w}_s^Tx> 0\}} \right\},\] where $\mathbbold{1}_{A}$ is the indicator function for event $A$ and $\mathbf{w}_i \in \mathbb{R}^d$ is the $i$-th row of $\mathbf{W}$. We will also denote the feature vector and feature matrix as \[\mathbf{z}_{x}(\mathbf{W}) = \sigma(\mathbf{W}x) \in \mathbb{R}^{s},~~~\mathbf{Z}(\mathbf{W}) = [\mathbf{z}_{x_1}(\mathbf{W}),\dots,\mathbf{z}_{x_n}(\mathbf{W})]^T \in \mathbb{R}^{n\times s}.\] When the context is clear, we will write $D_x(\mathbf{W})$, $\mathbf{z}_{x}(\mathbf{W})$ and $\mathbf{Z}_X(\mathbf{W})$ as $D_x$, $\mathbf{z}_x$ and $\mathbf{Z}$ respectively. Thus, we have \[\mathbf{z}_{x} = D_x \mathbf{W}x, ~~~\mathbf{Z} = [D_{x_1}\mathbf{W}x_1, \dots, D_{x_n}\mathbf{W}x_n]^T,~~~f(x',\mathbf{W},\beta) = \beta^T\mathbf{z}_{x'}.\]

Throughout the paper, we consider the regression problem with $x \in \mathbb{R}^d, y \in \mathbb{R}$, and the squared loss $l(y,f(x))=(y-f(x))^2$. Under this setting, we formulate the regularized empirical risk minimization (ERM) learning as
\begin{IEEEeqnarray}{rCl}
\hat{f}^{\lambda} := \argmin_{f\in\mathcal{H}}\ \frac{1}{n}\sum_{i=1}^n (y_i-f(x_i))^2 + \lambda \Omega(f),\nonumber
\end{IEEEeqnarray}
where $\lambda$ is the hyperparameter and $\Omega(f)$ is some measure of the function complexity.

\subsection{Interpolator and the Excess Learning risk} \label{sec:regression}
As discussed before, we will be interested in the estimator that can fit the data perfectly. Since there are infinitely many of such estimators, we focus on the minimum norm estimator defined below:

\begin{defn}(Minimum Norm Estimator) \label{def:min_est}
Given covariates $X$ and response variables $Y$, we define the minimum norm least square (MNLS) estimator for one-hidden layer ReLU neural network as
\begin{IEEEeqnarray}{rCl}
\min_{\beta \in \mathbb{R}^s} \|\beta\|^2,~~~ \textnormal{such~that:~} \|\mathbf{Z}\beta-Y\|^2 = \min_{\theta}\|\mathbf{Z}\theta-Y\|^2. \nonumber 
\end{IEEEeqnarray}   
\end{defn}
By the projection theorem, it is easy to see that the closed form solution of the MNLS estimator is 
\begin{IEEEeqnarray}{rCl}
\tilde{\beta} = \mathbf{Z}(\mathbf{Z}\mathbf{Z}^T)^{\dagger}Y = (\mathbf{Z}^T\mathbf{Z})^{\dagger}\mathbf{Z}^TY, \label{eqn:mnls}
\end{IEEEeqnarray}  
where $A^{\dagger}$ denotes the pseudoinverse for matrix $A$.

To investigate the generalization property of the MNLS estimator, we rely on the notion of the excess learning risk. In the regularized ERM with squared loss function, the optimal estimating regression function at a point $x_0$ is given by \[f_*(x_0) = \mathbb{E}(y|x = x_0).\] Given the function $\hat{f}$ estimated based on $(X,Y)$, we define the excess learning risk as in~\cite{caponnetto2007optimal}
\begin{equation}
    \mathcal{E}(\hat{f}) = \mathbb{E}_{x,y}\left[(\hat{f}(x)-y)^2-  (f_*(x)-y)^2\right]. \nonumber
\end{equation}

Since $\hat{f}$ is estimated from $(X,Y)$, studying the excess risk $ \mathcal{E}(\hat{f}) $ requires us to understand the distribution of the label noise, which we denote as \[\epsilon = y - f_*(x),~~~\pmb{\epsilon} = [y_1 -f_*(x_1),\dots,y_n-f_*(x_n)]^T.\] However, knowing the exact distribution of $\epsilon$ is not feasible in practice. As a result, we will study the average risk instead to avoid making restrictive assumption on $\epsilon$: \[R(\hat{f})= \mathbb{E}_{\pmb{\epsilon}}(\mathcal{E}(\hat{f}) ).\]

\subsection{Bias-Variance Decomposition}
Our aim is to analyze the excess learning risk for the MNLS estimator, which starts with the bias-variance decomposition. We present the decomposition and introduce some relevant notation here to ease our subsequent discussion. 

Given a feature matrix $\mathbf{Z}$, we denote the Hilbert space it spans as $\mathcal{H}$, and the best estimator in $\mathcal{H}$ as \[f_{\mathcal{H}} := \argmin_{f \in \mathcal{H}} \mathbb{E}_{\rho}(f(x)-y)^2.\] Since $f_{\mathcal{H}} \in \mathcal{H}$, we let $f_{\mathcal{H}} (x) = \mathbf{z}_x^T \beta_{\mathcal{H}}$ for some $\beta_{\mathcal{H}} \in \mathbb{R}^s$. Given an estimator $\tilde{f}$, we define \[\tilde{f}(X) = [\tilde{f}(x_1),\cdots, \tilde{f}(x_n)]^T.\] Recall $\pmb{\epsilon} = [y_1-f_*(x_1),\dots,y_n-f_*(x_n)]^T$, we define the following two terms
\[\mathbf{B}_R  =  \mathbb{E}_{x}\left[\left(\mathbb{E}_{\pmb{\epsilon}} [\tilde{f}(x)] -f_{\mathcal{H}}(x)\right)^2\right], 
~~~~~~\mathbf{V}_R =  \mathbb{E}_{x} \textnormal{Var}_{\pmb{\epsilon}}( \tilde{f}(x)).\]
 
In the rest of the manuscript, we refer to $\mathbf{B}_R$ as the (squared) \textit{Bias} and $\mathbf{V}_R$ as the \textit{Variance} of the estimator $\tilde{f}$. Their relationship to the excess learning risk is described in the following lemma (proof in Appendix \ref{sec:b-v}).
\begin{restatable}{lma}{BiaVar} \label{lma:bia_var}
Let $\tilde{\beta}$ be the MNLS estimator from Eq.(\ref{eqn:mnls}) associated with feature matrix $\mathbf{Z}$, and $\Pi = \left(\mathbf{Z}^T\mathbf{Z}\right)^{\dagger}\mathbf{Z}^T\mathbf{Z} -I_s$. Assuming that $f^* \notin \mathcal{H}$, the bias and variance can be written as
\begin{IEEEeqnarray}{rCl} 
\mathbf{B}_R & = & \mathbb{E}_{x}\left\|\mathbf{z}_{x}^T\Pi\beta_{\mathcal{H}}\right\|^2, \nonumber\\
\mathbf{V}_R &= & \mathbb{E}_{x}\left\{\mathbb{E}_{\pmb{\epsilon}}\left\| \mathbf{z}_{x}^T(\mathbf{Z}^T\mathbf{Z})^{\dagger}\mathbf{Z}^T(Y -f_{\mathcal{H}}(X))\right\|^2\right\}. \nonumber
\end{IEEEeqnarray}
In addition, the misspecification error is defined as
\[\mathbf{M}_R := \mathbb{E}_{x} \left\{\mathbf{z}_{x}^T(\mathbf{Z}^T\mathbf{Z})^{\dagger}\mathbf{Z}^T \left(f_*(X) - f_{\mathcal{H}}(X) \right) \right\}^2 + \mathbb{E}_{x} \left(f_*(x) - f_{\mathcal{H}}(x)\right)^2, \]
then the following decomposition of the excess learning risk of $\tilde{\beta}$ holds
\begin{IEEEeqnarray}{rCl} 
R(\tilde{\beta}) \leq   3 (\mathbf{M}_R + \mathbf{B}_{R} + \mathbf{V}_{R}). \nonumber
\end{IEEEeqnarray}
\end{restatable}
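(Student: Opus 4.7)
The plan is to pivot the pointwise error $\tilde{f}(x) - f_*(x)$ around the oracle function $f_{\mathcal{H}}(x) = \mathbf{z}_x^T \beta_{\mathcal{H}}$, split it into three contributions that match the expressions for $\mathbf{V}_R$, $\mathbf{B}_R$, and part of $\mathbf{M}_R$, and then finish by squaring via $(a+b+c)^2 \le 3(a^2+b^2+c^2)$.

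First I would reduce the risk to a squared $L^2$ discrepancy. Writing \[(\tilde{f}(x)-y)^2 - (f_*(x)-y)^2 = (\tilde{f}(x)-f_*(x))^2 - 2(y-f_*(x))(\tilde{f}(x)-f_*(x))\] and using $\mathbb{E}(y\mid x) = f_*(x)$, the cross term vanishes under $\mathbb{E}_{y\mid x}$. Since $\tilde{\beta}$ depends on the training noise $\pmb{\epsilon}$ only and the fresh test point is independent of the training data, this gives \[R(\tilde{\beta}) = \mathbb{E}_{\pmb{\epsilon}}\mathbb{E}_x\bigl(\tilde{f}(x)-f_*(x)\bigr)^2.\]

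Next I would carry out the one non-routine algebraic step. With $\tilde{\beta} = (\mathbf{Z}^T\mathbf{Z})^{\dagger}\mathbf{Z}^T Y$ and $f_{\mathcal{H}}(X) = \mathbf{Z}\beta_{\mathcal{H}}$, adding and subtracting $(\mathbf{Z}^T\mathbf{Z})^{\dagger}\mathbf{Z}^T f_{\mathcal{H}}(X) = (\mathbf{Z}^T\mathbf{Z})^{\dagger}\mathbf{Z}^T\mathbf{Z}\beta_{\mathcal{H}}$ yields \[\tilde{\beta} - \beta_{\mathcal{H}} = (\mathbf{Z}^T\mathbf{Z})^{\dagger}\mathbf{Z}^T(Y - f_{\mathcal{H}}(X)) + \bigl[(\mathbf{Z}^T\mathbf{Z})^{\dagger}\mathbf{Z}^T\mathbf{Z} - I_s\bigr]\beta_{\mathcal{H}},\] in which the bracketed operator is exactly $\Pi$. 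Pre-multiplying by $\mathbf{z}_x^T$ and subtracting $f_*(x)$ produces the clean three-term splitting \[\tilde{f}(x) - f_*(x) = \mathbf{z}_x^T(\mathbf{Z}^T\mathbf{Z})^{\dagger}\mathbf{Z}^T(Y - f_{\mathcal{H}}(X)) + \mathbf{z}_x^T\Pi\beta_{\mathcal{H}} + \bigl(f_{\mathcal{H}}(x) - f_*(x)\bigr).\]

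Finally I would apply $(a+b+c)^2 \le 3(a^2+b^2+c^2)$ pointwise and integrate against $\mathbb{E}_x\mathbb{E}_{\pmb{\epsilon}}$. The three resulting averages are, by the formulas in the lemma, exactly $3\mathbf{V}_R$ (first summand), $3\mathbf{B}_R$ (second summand, which is $\pmb{\epsilon}$-free), and $3\mathbb{E}_x(f_{\mathcal{H}}(x)-f_*(x))^2$, the last of which is dominated by $3\mathbf{M}_R$ since $\mathbf{M}_R$ is the sum of that quantity and the nonnegative projected-misspecification term $\mathbb{E}_x[\mathbf{z}_x^T(\mathbf{Z}^T\mathbf{Z})^{\dagger}\mathbf{Z}^T(f_*(X)-f_{\mathcal{H}}(X))]^2$. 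Summing delivers the claimed inequality. The only real piece of insight is the pivot choice $f_{\mathcal{H}}(X) = \mathbf{Z}\beta_{\mathcal{H}}$ in the second step, which is what collapses the non-noise residual into $\Pi\beta_{\mathcal{H}}$ and thereby cleanly separates the null-space bias from the noise-driven variance; everything else is routine linear-algebra bookkeeping.
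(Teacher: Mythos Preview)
Your proof is correct and follows essentially the same route as the paper: reduce the excess risk to $\mathbb{E}_{x,\pmb{\epsilon}}(\tilde f(x)-f_*(x))^2$, pivot around $f_{\mathcal{H}}$, and apply $(a+b+c)^2\le 3(a^2+b^2+c^2)$. The only cosmetic difference is the grouping: the paper first isolates the two misspecification pieces (which together form $\mathbf{M}_R$) and keeps $\mathbf{z}_x^T(\mathbf{Z}^T\mathbf{Z})^\dagger\mathbf{Z}^T\pmb{\epsilon}$ and $\mathbf{z}_x^T\Pi\beta_{\mathcal{H}}$ as a single term $A$, then splits $A=\mathbf{B}_R+\mathbf{V}_R$ exactly via the vanishing $\pmb{\epsilon}$-cross term; you instead group $Y-f_{\mathcal{H}}(X)$ in one piece (matching the lemma's $\mathbf{V}_R$ formula verbatim) and then dominate the leftover $\mathbb{E}_x(f_{\mathcal{H}}(x)-f_*(x))^2$ by the full $\mathbf{M}_R$. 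Both reach the stated bound.
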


Lemma \ref{lma:bia_var} states that the excess learning risk can be decomposed into the misspecification error ($\mathbf{M}_R$), the bias ($\mathbf{B}_R$) and the variance ($\mathbf{V}_R$). Classical learning theory \cite[Chapter 2.9]{friedman2001elements} indicates that when the model is relatively simple, $\mathbf{M}_R, \mathbf{B}_{R}$ are large but $\mathbf{V}_{R}$ is small. As the model complexity increases, $\mathbf{M}_R, \mathbf{B}_{R}$ decrease while $\mathbf{V}_{R}$ increases. This forms the famous U-shape learning curve. However, recent advancements in deep learning models \citep{zhang2016understanding,belkin2018understand,belkin2019reconciling} demonstrate that heavily overparameterized models or even interpolated models can still generalize well. Hence, in this paper, we are interested in the generalization properties of the ReLU neural networks in the overparameterized setting, and aim to provide the conditions under which benign overfitting occurs. Note that in the overparameterized regime, $\mathbf{M}_R$ is likely to be much smaller than $\mathbf{B}_R$ and $\mathbf{V}_R$. As a result, we will mainly focus on analyzing $\mathbf{B}_R$ and $\mathbf{V}_R$ by assuming $f_* = f_{\mathcal{H}}$, implying $\beta_* = \beta_{\mathcal{H}}$ and $\mathbf{M}_R =0$.

\section{Main Results}\label{sec:main_result}
We provide the main results concerning the generalization properties of overparameterized ReLU networks in this section. We will first list our assumptions below.

\begin{assm}\label{assm:regre}
We assume that both $x$ and $y$ have zero mean and $y = f_*(x) + \epsilon$ with $\mathbb{E}(\epsilon) = 0$ and $\text{Var}(\epsilon) = \sigma_0^2$.
\end{assm}
Assumption \ref{assm:regre} is typical in the regression setting where both $x$ and $y$ are standardized. Our next two assumptions concern the dimension and distribution of the covariate $x$.

\begin{assm}\label{assm:dimen}
Assume that the covariate is in the high-dimensional setting: $x \in \mathbb{R}^{d}$ and $d=O(n^{\alpha}), \alpha \in (0,1)$.
\end{assm}
Recent research interest has focused on analyzing the MNLS estimator in the high dimensional setting where $d$ grows with $n$~\citep[see, e.g.,][]{bartlett2020benign,hastie2019surprises,liang2020just,rakhlin2019consistency,liang2020multiple}. In particular, \cite{liang2020multiple} analyze the kernel interpolator under the general scaling regime $d = O(n^{\alpha}),~\alpha \in (0,1)$ and demonstrate the multiple descent behavior of the learning risk. By making Assumption \ref{assm:dimen}, we adopt a similar dimension scaling regime and analyze the risk behavior of the MNLS estimator under the two-layer neural network setting.

\begin{assm}\label{assm:x_dis}
Define $\Sigma = \mathbb{E}(xx^T)$ to be the covaraince matrix. We assume $x= \Sigma^{1/2}u$, where $u \in \mathbb{R}^d$ is a random vector and each entry is i.i.d subgaussian with $0$ mean and unit variance. In addition, denote $\{\lambda_i\}_{i=1}^d$ as the eigenvalues of $\Sigma$, and assume $\lambda_i \propto i^{-\gamma},~\gamma > 1$.
\end{assm}
Assumption \ref{assm:x_dis} is a common assumption used in analyzing the statistical learning risk~\citep[see, e.g.,][]{bartlett2020benign,jacot2020implicit,li2020benign}. In particular, when $d \rightarrow \infty$, Assumption \ref{assm:x_dis} indicates that $\Sigma$ is of trace-class as $\textnormal{Tr}(\Sigma) < \infty$. Assumptions \ref{assm:dimen} and \ref{assm:x_dis} together assume that the covariate $x$ lives in a high dimensional setting with $d = O(n^{\alpha})$. However, we require the covariate $x$ to have a low effective dimension, since $\mathbb{E}_{x}\|x\|_2^2 = \textnormal{Tr}(\Sigma) < \infty$. The idea of Assumptions \ref{assm:dimen} and \ref{assm:x_dis} is that the data used in practice (such as image and text) often exhibits a low effective dimensional representation property.

\begin{assm}\label{assm:feature_noise}
Given a training sample $\mathcal{D} = \{(x_i,y_i)\}_{i=1}^n$, we assume that the each of the sample covariates $x_i$ is corrupted by some i.i.d noise $\xi_i \sim \xi$, where $\xi \in \mathbb{R}^d$ and each entry of $\xi$ is i.i.d subgaussian with $0$ mean and variance $\sigma_{\xi}^2$. Furthermore, we assume $\sigma_{\xi}^2 = O(d^{-\zeta}), \zeta \geq 1$.
\end{assm}
Assumption \ref{assm:feature_noise} is motivated by the well-known fact that adding noise can help improve generalization performance. For example, in \cite{bishop1995training}, it is pointed out that training with noisy data amounts to adding regularization, and hence significantly improves prediction. In \cite{bartlett2020benign}, the influence of the covariate noise is studied under an overparameterized linear regression regime. They show that benign overfitting occurs when feature noise and the spectrum of the covariance operator both display exponential decay. Moreover, covariate noise has been further explored under the linear factor regression \citep{bunea2020interpolation} and random feature regression settings \citep{li2020benign}. Both demonstrate that the noise in the covariate can serve as implicit regularization and the interpolator can achieve near-optimal prediction accuracy under suitable conditions on the covariate noise.

In light of the effect of covariate noise, we study the benign overfitting phenomenon under the neural network setting by assuming that our sample covariate $x_i$ contains extra noise. Assumptions \ref{assm:feature_noise} details how the covariate is affected by the noise $\xi$. Based on Assmptions \ref{assm:x_dis} and \ref{assm:feature_noise}, we can see that the corrupted covariates $x_i + \xi_i$ can be written as
\begin{IEEEeqnarray}{rCl}
x_i + \xi_i = \Sigma_{\xi}^{1/2}u_i,~~~~\Sigma_{\xi} = \Sigma + \sigma_{\xi}^2I_{d} . \label{eqn:sigmaxi_def}
\end{IEEEeqnarray}
The requirement of $\zeta \geq 1$ is to ensure that $\text{Tr}(\Sigma_{\xi}) < \infty$ as $d \rightarrow \infty$ \footnote{We would like to point out that our analysis also applies to the case where $\zeta < 1$. In this case, however, $\text{Tr}(\Sigma_{\xi}) \rightarrow \infty$ as $d \rightarrow \infty,$ and $\Sigma_{\xi}$ is dominated by the covariate noise $\xi$. As a result, we would be regressing on pure noise, which is not interesting.}.

Since our training sample is now corrupted with $\xi$, the feature vectors will also contain the noise 
\begin{IEEEeqnarray}{rCl}
\mathbf{z}_{x_i+\xi_i} = \sigma\left(\mathbf{W}(x_i +\xi_i)\right) = D_{x_i+\xi_i}\mathbf{W}(x_i+\xi_i),~~ \mathbf{Z}_{\xi} = [\mathbf{z}_{x_1+\xi_1},\dots,\mathbf{z}_{x_n+\xi_n}]^T. \label{eqn:Zxi_def}
\end{IEEEeqnarray}

\subsection{Variance}
In this section, we analyze the behavior of the variance $\mathbf{V}_R$. Theorem \ref{theo:var_upper} (proof in Appendix \ref{sec:var_proof}) provides an insight on when the MNLS estimator $\tilde{\beta}$ can generalize well for the overparameterized model in the high dimensional setting ($s >n > d$)\footnote{Note that the condition $n >d$ is due to Assumption \ref{assm:dimen}.}.

\begin{restatable}{theorem}{var}\label{theo:var_upper}
Given Assumptions \ref{assm:regre}--\ref{assm:feature_noise} and assuming the high dimensional overfitting regime ($s > n > d$), let $\tilde{\beta}$ be the MNLS estimator defined in Eq.~(\ref{eqn:mnls}), $\{\lambda_i\}_{i=1}^d$ be the eigenvalues of the covariance matrix $\Sigma$ in descending order, and $\lambda_i^{\xi} = \lambda_i+\sigma_{\xi}^2$. Suppose there exists a $k^* \in [d]$, such that
\begin{IEEEeqnarray}{rCl}
\sum_{i>k^*}^d\frac{\lambda_i^{\xi}}{\lambda_{k^*}^{\xi}} \geq \frac{1}{b}d. \label{assm:noisy}
\end{IEEEeqnarray}  
Then with probability $1-4e^{-d/c}-4e^{-s/c}$, we have 
\begin{IEEEeqnarray}{rCl}
\mathbf{V}_R \leq c\sigma_0^2\textnormal{Tr}\left(\Sigma\right) \frac{s}{nd}, \label{eqn:var_upper}
\end{IEEEeqnarray}   
where $b,c >1$ are universal constants.
\end{restatable}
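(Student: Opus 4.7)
The plan is to begin from Lemma \ref{lma:bia_var}: using the running simplification $f_* = f_\mathcal{H}$ (so $Y - f_\mathcal{H}(X) = \pmb{\epsilon}$ with $\mathbb{E}[\pmb{\epsilon}\pmb{\epsilon}^\top] = \sigma_0^2 I$), the variance collapses to
\[
\mathbf{V}_R \;=\; \sigma_0^2\, \mathrm{Tr}\bigl[\Sigma_z\,(\mathbf{Z}_\xi^\top \mathbf{Z}_\xi)^\dagger \bigr],\qquad \Sigma_z := \mathbb{E}_x[\mathbf{z}_x\mathbf{z}_x^\top].
\]
In the overparameterized regime $s > n$, one expects $\mathbf{Z}_\xi$ to have full row rank with high probability, so I would rewrite $(\mathbf{Z}_\xi^\top \mathbf{Z}_\xi)^\dagger = \mathbf{Z}_\xi^\top (\mathbf{Z}_\xi \mathbf{Z}_\xi^\top)^{-2} \mathbf{Z}_\xi$ and, via cyclic trace manipulations, reduce the problem to bounding
\[
\mathbf{V}_R \;=\; \sigma_0^2\,\mathrm{Tr}\bigl[(\mathbf{Z}_\xi\mathbf{Z}_\xi^\top)^{-1}\,\mathbf{Z}_\xi \Sigma_z \mathbf{Z}_\xi^\top \,(\mathbf{Z}_\xi\mathbf{Z}_\xi^\top)^{-1}\bigr],
\]
which splits naturally into a spectral analysis of $\mathbf{Z}_\xi\mathbf{Z}_\xi^\top$ on one hand and of $\mathbf{Z}_\xi\Sigma_z\mathbf{Z}_\xi^\top$ (or equivalently $\|\Sigma_z\|_\mathrm{op}$) on the other.

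Next, I would decouple the ReLU rows through the representation $x_i+\xi_i = \Sigma_\xi^{1/2}u_i$ and $\mathbf{z}_{x_i+\xi_i} = D_i\,\mathbf{W}\,\Sigma_\xi^{1/2}\, u_i$, where $u_i$ has i.i.d.\ subgaussian entries (Assumptions \ref{assm:x_dis}--\ref{assm:feature_noise}) and $D_i$ is the ReLU sign diagonal. Conditioning on $\mathbf{W}$ and on the sign patterns $\{D_i\}$ renders $\mathbf{Z}_\xi\mathbf{Z}_\xi^\top$ a bilinear form in the independent subgaussian vectors $u_i$ with data-dependent coefficients, to which Hanson--Wright and matrix-Bernstein inequalities apply. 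The dependence of $D_i$ on $u_i$ is a technical nuisance, which I would address by noting that each $D_i$ is a nonnegative projection and by exploiting the independence of the indicators across the $s$ rows of $\mathbf{W}$ to decouple signs from the quadratic-form concentration.

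The main concentration step is to lower-bound $\lambda_\min(\mathbf{Z}_\xi\mathbf{Z}_\xi^\top)$. Here I would show that $\mathbf{Z}_\xi\mathbf{Z}_\xi^\top$ concentrates around its conditional mean given $\mathbf{W}$, which is (a rescaling of) the arc-cosine-kernel matrix evaluated on the noisy sample. The effective-rank hypothesis \eqref{assm:noisy} then enters exactly as in \cite{bartlett2020benign}: the tail sum $\sum_{i>k^*}\lambda_i^\xi$ guarantees that the $u_i$ have enough effective isotropy in the tail subspace of $\Sigma_\xi$ that the sample features are nearly orthogonal in the relevant directions, which is what produces a lower bound on $\lambda_\min(\mathbf{Z}_\xi\mathbf{Z}_\xi^\top)$ with the correct joint scaling in $(n,d,s)$. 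The failure budget $4e^{-d/c}+4e^{-s/c}$ tracks precisely the two concentration scales in play: dimension $d$ (for each quadratic form $u_i^\top A u_i$ via Hanson--Wright) and hidden-layer width $s$ (for matrix Bernstein over the $s$ independent rows of $\mathbf{W}$).

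Finally, the plan assembles the pieces: after upper-bounding $\|\Sigma_z\|_\mathrm{op}$ using concentration of $\mathbf{W}^\top \mathbf{W}$ around $s\sigma_w^2 I_d$ (which contributes the $\mathrm{Tr}(\Sigma)$ factor, reflecting the effective test-feature energy), combining with the Gram-matrix lower bound yields the claimed rate $\mathbf{V}_R \le c\sigma_0^2 \mathrm{Tr}(\Sigma)\, s/(nd)$. The hard part will be the third step: getting the correct joint scaling on $\lambda_\min(\mathbf{Z}_\xi\mathbf{Z}_\xi^\top)$ requires delicately marrying the effective-rank condition \eqref{assm:noisy}, the ReLU sign-pattern dependency, and multi-scale subgaussian concentration; everything else amounts to standard PSD algebra, cyclic trace identities, and routine concentration over the weight matrix $\mathbf{W}$.
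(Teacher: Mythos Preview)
Your opening trace identity is correct and matches the paper's first step. From there, however, the paper takes a markedly simpler and more direct route than the one you outline, and your plan has a genuine gap at the point where you invoke the effective-rank hypothesis.

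\textbf{What the paper actually does.} Rather than passing to the $n\times n$ Gram matrix $\mathbf{Z}_\xi\mathbf{Z}_\xi^\top$ and the arc-cosine kernel, the paper stays with the $s\times s$ empirical second moment $\frac{1}{n}\mathbf{Z}_\xi^\top\mathbf{Z}_\xi$ and proves (Lemma~\ref{lma:zz_ww_diff}) an operator-norm approximation
\[
\Bigl\|\tfrac{1}{n}\mathbf{Z}_\xi^\top\mathbf{Z}_\xi - \mathbf{W}\Sigma_\xi\mathbf{W}^\top\Bigr\|_2 \;=\; O\!\left(\sqrt{d/n}\,d\sigma_w^2\right),
\]
i.e.\ it linearizes away both the ReLU and the sample randomness in one shot (using the identity $\sigma(t)-t=\sigma(-t)$ and the symmetry of $\mathbf{W}$). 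The effective-rank condition~\eqref{assm:noisy} is then applied not to the data vectors $u_i$ but to the \emph{columns} $\mathbf{W}_1,\dots,\mathbf{W}_d$ of $\mathbf{W}$, via the clean decomposition $\mathbf{W}\Sigma_\xi\mathbf{W}^\top=\sum_{j=1}^d \lambda_j^\xi \mathbf{W}_j\mathbf{W}_j^\top$ (Lemma~\ref{lma:w_sigam_w}). This immediately gives $\mu_{\min}\bigl(\tfrac{1}{n}\mathbf{Z}_\xi^\top\mathbf{Z}_\xi\bigr)\gtrsim d\sigma_w^2$, after which the bound $\mathbb{E}_x\|\mathbf{z}_x\|_2^2\le s\sigma_w^2\,\mathrm{Tr}(\Sigma)$ finishes the proof. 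No Hanson--Wright, no matrix Bernstein, no arc-cosine kernel.

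\textbf{Where your plan is underspecified.} You propose to concentrate $\mathbf{Z}_\xi\mathbf{Z}_\xi^\top$ around the (sample) arc-cosine kernel matrix and then claim ``the effective-rank hypothesis~\eqref{assm:noisy} enters exactly as in \cite{bartlett2020benign}.'' But in \cite{bartlett2020benign} the Gram matrix decomposes linearly as $\sum_j \lambda_j v_jv_j^\top$, which is exactly what makes the tail-sum condition bite. The arc-cosine kernel on the noisy samples does \emph{not} admit such a decomposition in the eigenvalues $\lambda_j^\xi$ of $\Sigma_\xi$; it is a nonlinear function of pairwise angles and norms. You have not explained how~\eqref{assm:noisy} would control $\lambda_{\min}$ of that kernel matrix, and this is precisely the step where the paper's linearization move pays off. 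Relatedly, your assumption that $\mathbf{Z}_\xi$ has full row rank $n$ is delicate: the paper's own approximation shows $\tfrac{1}{n}\mathbf{Z}_\xi^\top\mathbf{Z}_\xi$ is $O(\sqrt{d/n}\,d\sigma_w^2)$-close to a rank-$d$ matrix with $d<n$, so the $(n-d)$ trailing eigenvalues of $\mathbf{Z}_\xi\mathbf{Z}_\xi^\top$ are small and your inverse $(\mathbf{Z}_\xi\mathbf{Z}_\xi^\top)^{-1}$ is ill-conditioned. The paper sidesteps this entirely by working with the pseudoinverse and the least \emph{positive} eigenvalue of the $s\times s$ matrix.
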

Theorem \ref{theo:var_upper} provides a finite sample convergence bound for the variance of the MNLS estimator $\tilde{\beta}$. It demonstrates that as long as the order of $s$ is not larger than $O(nd)$, the variance can decay to zero.

\paragraph{Remark 1} Before we further analyze the upper bound, we discuss our key condition: Eq.~(\ref{assm:noisy}). To simplify our discussion, we temporarily assume that $\zeta = 1$, so $\sigma_{\xi}^2 = 1/d$. Recall that we have assumed that $\lambda_i = O(i^{-\gamma})$ and $\gamma > 1 = \zeta$. Condition~(\ref{assm:noisy}) is equivalent to \[\sum_{i> k^*}^d \frac{\lambda_i^{\xi} }{\lambda_{k^*}^{\xi}}= \sum_{i> k^*}^d \frac{\lambda_i + \sigma_{\xi}^2}{\lambda_{k^*}+\sigma_{\xi}^2} \geq \frac{1}{b}d.\]
Since $\gamma > \zeta$, there must exist a $k^* \in [d]$ such that $\lambda_{k^*} \leq \sigma_{\xi}^2$. In this case, for all $k^*< i \leq d$, \[\dfrac{\lambda_i + \sigma_{\xi}^2}{  \lambda_{k^*}+\sigma_{\xi}^2}  \geq \frac{1}{2}.\]

As a result, $\sum_{i> k^*}^d(\lambda_i^{\xi}/\lambda_{k^*}^{\xi}) \geq \frac{1}{2}(d-k^*)$. In addition, if $k^* \ll d$, we will have $\frac{1}{2}(d-k^*) \geq \frac{1}{b}d$ for some constant $b$. Therefore, condition~(\ref{assm:noisy}) is equivalent to requiring that $k^*\in [d]$ such that $k^* \ll d$ and $\lambda_{k^*} \leq \sigma_{\xi}^2 = 1/d$.

Based on the value of $\gamma$, we split the discussion into three scenarios:
\begin{itemize}
    \item[$1$.] $\gamma = \infty$: $\Sigma$ has finite rank. In this case there is some $r$ such that $\lambda_i = 0$ for $i > r$. As such, let $k^* = r$, and we have $\sum_{i> k^*}^d(\lambda_i^{\xi}/\lambda_{k^*}^{\xi}) = (d-r) \geq \frac{1}{b}d$ for some constant $b$;
    
    \item[$2$.] $\gamma \propto i$: The spectrum of $\Sigma$ decays exponentially, i.e.,~$\lambda_i = O(\exp(-i))$. If we let $k^* = \log d$, it is easy to see that $\lambda_{k^*} = \frac{1}{d} \leq \sigma_{\xi}^2$. We therefore have $\sum_{i> k^*}^d(\lambda_i^{\xi}/\lambda_{k^*}^{\xi}) \geq \frac{1}{2}(d-\log d) \geq \frac{1}{b}d$;
    
    \item[$3$.] $\gamma$ is a constant: $\Sigma$ has polynomial decay, i.e.,~$\lambda_i = O(i^{-\gamma})$. If we let $k^* = d^{1/\gamma}$, we have $\lambda_{k^*} = (d^{1/\gamma} )^{-\gamma} =  1/d \leq \sigma_{\xi}^2$. Therefore, $\sum_{i> k^*}^d(\lambda_i^{\xi}/\lambda_{k^*}^{\xi}) \geq \frac{1}{2}(d- d^{1/\gamma}) \geq \frac{1}{b}d$.
\end{itemize}

The analysis for the case where $\zeta > 1$ is similar. The key point is that there exists $k^* \ll d$ such that $\lambda_{k^*} \leq \sigma_{\xi}^2$. This requirement is guaranteed to hold if the noise $\xi$ has the property that $\zeta <\gamma$, i.e., as long as the decay rate of $\sigma_{\xi}$ is lower than that of $\lambda_i$, there is a $k^*$ such that Eq.~(\ref{assm:noisy}) holds. Therefore, we can see that Eq.~(\ref{assm:noisy}) is a mild requirement on the covariate noise $\xi$.

We now discuss the upper bound for the variance $\mathbf{V}_R$. It is easy to see that the convergence rate of $\mathbf{V}_R$ is governed by $O(s/nd) = O(s/n^{1+\alpha})$. As such, if we choose $s = O(n^{\kappa})$ with $\kappa < 1+\alpha$, the variance decays to zero asymptotically when $n \rightarrow \infty$. In this case, even if $s \gg n$ in the sense that $\lim_{n\rightarrow \infty} s/n \rightarrow \infty$ (i.e.,~the heavily overparameterized setting), as long as $\kappa < 1+ \alpha$, $\mathbf{V}_R$ still converges to zero. 

Moreover, if the data dimension $d$ is constant with respect to $n$, we can see that in the overparameterized case $s > n$, the variance as well as the excess learning risk of the MNLS estimator do not converge. We remark that \cite{rakhlin2019consistency} show that the MNLS estimator for Laplace kernel is \textit{not} consistent (i.e.,~the excess risk does not converges to zero with $n \rightarrow \infty$) if $d$ is constant with respect to $n$. It is interesting to see that our results yield similar findings to \cite{rakhlin2019consistency} for the two-layer ReLU neural network. Finally, we point out that Theorem \ref{theo:var_upper} also provides us the convergence rate of the variance term, which we will discuss in Section \ref{sec:ex_risk}.

\subsubsection{Lower Bound}
Having established the finite sample upper bound of the variance, our next theorem (proof in Appendix \ref{sec:lower_proof}) provides a lower bound, which demonstrates that our upper bound is tight. 

\begin{restatable}{theorem}{low}\label{theo:var_lower}
Given Assumptions \ref{assm:regre}--\ref{assm:feature_noise} and assuming $s > n > d$, let $\tilde{\beta}$ be the MNLS estimator defined in Eq.~(\ref{eqn:mnls}), for some universal constant $c>1$. With probability greater than $1-5e^{-d/c}-2e^{-s/c}$, we have 
\begin{IEEEeqnarray}{rCl}
\mathbf{V}_R \geq \frac{1}{c}\sigma_0^2\textnormal{Tr}\left(\Sigma\right)\frac{s}{nd}. \label{eqn:var_lower}
\end{IEEEeqnarray}   
\end{restatable}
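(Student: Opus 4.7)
The starting point is Lemma~\ref{lma:bia_var}. Under the paper's standing simplification $f_{*}=f_{\mathcal{H}}$, the noise vector $Y-f_{\mathcal{H}}(X)$ reduces to $\pmb\epsilon$ with $\mathbb{E}_{\pmb\epsilon}[\pmb\epsilon\pmb\epsilon^{\top}]=\sigma_{0}^{2}I_{n}$, so the inner expectation in $\mathbf{V}_{R}$ collapses to give
$$\mathbf{V}_{R}=\sigma_{0}^{2}\,\mathbb{E}_{x}\bigl[\mathbf{z}_{x}^{\top}(\mathbf{Z}_{\xi}^{\top}\mathbf{Z}_{\xi})^{\dagger}\mathbf{z}_{x}\bigr]=\sigma_{0}^{2}\,\textnormal{Tr}\bigl[(\mathbf{Z}_{\xi}^{\top}\mathbf{Z}_{\xi})^{\dagger}\Phi\bigr],\qquad \Phi:=\mathbb{E}_{x}[\mathbf{z}_{x}\mathbf{z}_{x}^{\top}].$$
I would then restrict to the high-probability event, already exhibited in the proof of Theorem~\ref{theo:var_upper}, on which $\mathbf{Z}_{\xi}$ has full row rank $n$; there $(\mathbf{Z}_{\xi}^{\top}\mathbf{Z}_{\xi})^{\dagger}=\mathbf{Z}_{\xi}^{\top}K^{-2}\mathbf{Z}_{\xi}$ with $K=\mathbf{Z}_{\xi}\mathbf{Z}_{\xi}^{\top}\in\mathbb{R}^{n\times n}$, so that $\mathbf{V}_{R}=\sigma_{0}^{2}\,\textnormal{Tr}[K^{-2}\mathbf{Z}_{\xi}\Phi\mathbf{Z}_{\xi}^{\top}]$.

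The plan is to obtain a matching lower bound by \emph{inverting} each spectral inequality that enters the proof of Theorem~\ref{theo:var_upper}. The cleanest device is the Cauchy--Schwarz inequality for pseudoinverses,
$$\textnormal{Tr}\bigl[(\mathbf{Z}_{\xi}^{\top}\mathbf{Z}_{\xi})^{\dagger}\Phi\bigr]\;\geq\;\frac{\textnormal{Tr}[P_{V}\Phi]^{2}}{\textnormal{Tr}[\mathbf{Z}_{\xi}\Phi\mathbf{Z}_{\xi}^{\top}]},$$
where $P_{V}=VV^{\top}$ denotes the orthogonal projection onto the row space of $\mathbf{Z}_{\xi}$; the bound is an elementary application of Cauchy--Schwarz to the eigen-expansion $\textnormal{Tr}[(\mathbf{Z}_{\xi}^{\top}\mathbf{Z}_{\xi})^{\dagger}\Phi]=\sum_{i}\mu_{i}^{-1}v_{i}^{\top}\Phi v_{i}$. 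This reduces the task to (i) a lower bound on $\textnormal{Tr}[P_{V}\Phi]$ and (ii) an upper bound on $\textnormal{Tr}[\mathbf{Z}_{\xi}\Phi\mathbf{Z}_{\xi}^{\top}]=\sum_{i=1}^{n}\mathbb{E}_{x}[(\mathbf{z}_{x_{i}+\xi_{i}}^{\top}\mathbf{z}_{x})^{2}]$. Both are quadratic forms in ReLU random features and are amenable to the same machinery used for Theorem~\ref{theo:var_upper}: conditional on $\mathbf{W}$ one evaluates the Gaussian expectations via the ReLU arc-cosine kernel, and matching subgaussian matrix concentration then transfers population quantities to realised ones. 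Condition~(\ref{assm:noisy}) enters exactly as in Theorem~\ref{theo:var_upper}: splitting at the index $k^{*}$ separates the head of $\Sigma_{\xi}$ from its noise-dominated tail, and it is the $\Theta(d)$-dimensional flat tail that supplies the $1/d$ factor in the final rate.

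The main obstacle will be ensuring that the Cauchy--Schwarz step is not lossy at the claimed scale. The spectrum of $K$ carries a rank-one ``spike'' of order $n$ arising from the non-negativity of the ReLU features (the feature mean $\mathbb{E}_{x}[\mathbf{z}_{x}]$ is non-zero), while its remaining $n-1$ eigenvalues form a tight bulk of order $\textnormal{Tr}(\Sigma)$; the naïve substitution $K^{-2}\succeq\lambda_{\max}(K)^{-2}I_{n}$ would accordingly drop an entire factor of $n$ from the rate. The Cauchy--Schwarz form above only avoids this loss if the spike eigenspace of $K$ is paired with the dominant rank-one component of $\Phi=\mathbb{E}_{x}[\mathbf{z}_{x}]\mathbb{E}_{x}[\mathbf{z}_{x}]^{\top}+\textnormal{Cov}(\mathbf{z}_{x})$---both of which point in the direction of $\mathbb{E}_{x}[\mathbf{z}_{x}]$---and quantifying this alignment under the high-dimensional, spectrally-decaying, noise-corrupted regime of Assumptions~\ref{assm:dimen}--\ref{assm:feature_noise} is the delicate technical core. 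It is also the reason the success probability degrades from $1-4e^{-d/c}-4e^{-s/c}$ in Theorem~\ref{theo:var_upper} to $1-5e^{-d/c}-2e^{-s/c}$ here, with the extra $e^{-d/c}$ coming from a union bound over the alignment event.
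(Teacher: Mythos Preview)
Your approach diverges substantially from the paper's and is considerably more involved than necessary. The paper obtains the lower bound by exactly the ``na\"ive'' spectral substitution you dismiss: it upper-bounds $\mu_1\bigl(\tfrac{1}{n}\mathbf{Z}_{\xi}^T\mathbf{Z}_{\xi}\bigr)$ by a constant multiple of $d\sigma_w^2$, whence $\bigl(\tfrac{1}{n}\mathbf{Z}_{\xi}^T\mathbf{Z}_{\xi}\bigr)^{\dagger}\succeq c^{-1}(d\sigma_w^2)^{-1}$ on its range, and then lower-bounds $\mathbb{E}_x[\mathbf{z}_x^T\mathbf{z}_x]$ by a constant times $s\sigma_w^2\,\textnormal{Tr}(\Sigma)$ via Lemma~\ref{lma:norm_sug}. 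No rank-one spike of order $n$ obstructs this, because Lemma~\ref{lma:zz_ww_diff} (the same lemma already used for the upper bound) establishes $\bigl\|\tfrac{1}{n}\mathbf{Z}_{\xi}^T\mathbf{Z}_{\xi}-\mathbf{W}\Sigma_{\xi}\mathbf{W}^T\bigr\|_2=O\bigl(\sqrt{d/n}\,d\sigma_w^2\bigr)$; combined with the upper eigenvalue bound $\mu_1(\mathbf{W}\Sigma_{\xi}\mathbf{W}^T)\leq b\,d\sigma_w^2$ from Lemma~\ref{lma:w_sigam_w}, this controls the largest eigenvalue of the empirical Gram matrix directly at order $d\sigma_w^2$. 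Your Cauchy--Schwarz detour, the arc-cosine kernel computation, and the spike-alignment analysis are therefore all unnecessary.

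Your proposal also contains a factual error: you assert that condition~(\ref{assm:noisy}) ``enters exactly as in Theorem~\ref{theo:var_upper}'', but Remark~2 explicitly states that the lower bound does \emph{not} require the existence of the index $k^*$ in~(\ref{assm:noisy}). In the proof of Theorem~\ref{theo:var_upper} that condition is used only to \emph{lower}-bound $\mu_{\min}(\mathbf{W}\Sigma_{\xi}\mathbf{W}^T)$; for Theorem~\ref{theo:var_lower} one instead needs to \emph{upper}-bound $\mu_1(\mathbf{W}\Sigma_{\xi}\mathbf{W}^T)$, and the upper half of Lemma~\ref{lma:w_sigam_w} delivers this without any tail-splitting at $k^*$. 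Correspondingly, the change in the probability budget reflects the different collection of spectral events being intersected (one further application of Lemma~\ref{lma:w_sigam_w} for the upper eigenvalue), not an ``alignment event'' as you conjecture.
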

\paragraph{Remark 2} Although the lower bound looks similar to the upper bound, the universal constant is different. Furthermore, the lower bound does not require the existence of $k^*$ in Eq.~(\ref{assm:noisy}).

Theorem \ref{theo:var_lower} indicates that our upper bound on the variance is tight. Together with Theorem \ref{theo:var_upper}, it shows that the variance decays as $O(s/nd)$. The lower bound also demonstrates that we cannot arbitrarily increase the number of parameters: the order of $s$ has to remain $O(nd)$ to ensure convergence. Since $d < n$, our theory predicts that the number of the parameter cannot exceed the order of $n^2$, i.e.,~if we choose $s = \Omega(n^2)$, the variance and hence the learning risk will start to grow again. 

We remark that a line of recent empirical work has shown that the excess learning risk curve exhibits  multiple descent, instead of double descent,  in many learning settings, including linear regression \citep{nakkiran2020optimal}, random Fourier feature regression \citep{d2020triple}, kernel regression \citep{liang2020multiple} and two-layer neural networks \citep{adlam2020neural}. In recent theoretical work, \cite{chen2020multiple} and \cite{li2020benign}  provide justifications for  multiple descent behavior in the linear regression and random feature settings, respectively. Our upper and lower bounds on the variance indicate that it is possible for a two-layer ReLU neural network to obtain a multiple descent learning risk curve, matching the recent empirical findings.


\subsection{Bias} \label{sec:bias}
In this section, we present the upper bound for the bias of the MNLS estimator. Our next theorem (proof in Appendix \ref{sec:bias_proof}) shows that under mild conditions, the bias also converges to zero for heavily overparameterized models in the high dimensional setting ($s > n >d$) .

\begin{restatable}{theorem}{bias}\label{theo:bias_upper}
Given Assumptions \ref{assm:regre}--\ref{assm:feature_noise} and assuming $s > n > d$, for any $\delta \in (0,1)$ and a universal constant $c> 1 $, with probability greater than $1- \delta -4e^{-d/c}-2e^{-s/c}$, we have 
\begin{IEEEeqnarray}{rCl}
\mathbf{B}_R \leq c \left\{\sqrt{\frac{1}{n}\log\frac{s}{\delta}} +\frac{d^2\sigma_{\xi}}{n} \right\}. \label{eqn:bias_upper}
\end{IEEEeqnarray}   
\end{restatable}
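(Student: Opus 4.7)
By Lemma \ref{lma:bia_var}, $\mathbf{B}_R = \beta_{\mathcal{H}}^T \Pi^T \Sigma_z \Pi \beta_{\mathcal{H}}$, with $\Sigma_z := \mathbb{E}_x[\mathbf{z}_x \mathbf{z}_x^T]$ and $\Pi = (\mathbf{Z}_\xi^T \mathbf{Z}_\xi)^\dagger \mathbf{Z}_\xi^T \mathbf{Z}_\xi - I_s$. Since $s > n$, on the high-probability event that $\mathbf{Z}_\xi$ has full row rank one has $\mathbf{Z}_\xi \Pi = 0$, so the empirical covariance $\hat{\Sigma}_z^\xi := \tfrac{1}{n}\mathbf{Z}_\xi^T \mathbf{Z}_\xi$ satisfies $\Pi^T \hat{\Sigma}_z^\xi \Pi = 0$. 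Subtracting this zero from the bias expression yields
\[ \mathbf{B}_R = \beta_{\mathcal{H}}^T \Pi^T (\Sigma_z - \hat{\Sigma}_z^\xi) \Pi \beta_{\mathcal{H}} \leq \|\Sigma_z - \hat{\Sigma}_z^\xi\|_\text{op}\, \|\beta_{\mathcal{H}}\|^2, \]
using that $-\Pi$ is an orthogonal projection. The problem thus reduces to a spectral-deviation bound, which I would split as $\Sigma_z - \hat{\Sigma}_z^\xi = (\Sigma_z - \Sigma_z^\xi) + (\Sigma_z^\xi - \hat{\Sigma}_z^\xi)$, where $\Sigma_z^\xi := \mathbb{E}_{x,\xi}[\mathbf{z}_{x+\xi}\mathbf{z}_{x+\xi}^T]$ is the noisy-input population covariance.

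For the empirical deviation $\Sigma_z^\xi - \hat{\Sigma}_z^\xi$, I would apply a matrix Bernstein / covering argument to the i.i.d.\ sum of rank-one matrices $\mathbf{z}_{x_i+\xi_i}\mathbf{z}_{x_i+\xi_i}^T$. Controlling the per-term tail through the $1$-Lipschitz property of $\sigma$, the bound $\|\mathbf{W}\|_\text{op} = O(1)$ (which holds with probability $1 - 2 e^{-s/c}$, given $\sigma_w^2 = 1/s$ and $s > d$), and a Hanson--Wright bound on $\|x+\xi\|^2$ obtained from Assumptions \ref{assm:x_dis}--\ref{assm:feature_noise}, will yield the $\sqrt{\log(s/\delta)/n}$ contribution. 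The distribution-shift term $\Sigma_z - \Sigma_z^\xi$ is analyzed by expanding $\mathbf{z}_{x+\xi}$ in $\xi$: away from the ReLU activation boundary, $\mathbf{z}_{x+\xi} = \mathbf{z}_x + D_x\mathbf{W}\xi$, and combining $\mathbb{E}[\xi]=0$, $\mathbb{E}[\xi\xi^T]=\sigma_\xi^2 I_d$ with a Gaussian anti-concentration estimate on $\mathbf{w}_j^T x$ to bound the number of sign-changing coordinates, should generate the $d^2 \sigma_\xi / n$ contribution.

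The main technical obstacle, I expect, is obtaining the sharp $d^2 \sigma_\xi / n$ scaling rather than the naive $O(\sigma_\xi^2)$ operator-norm estimate for the distribution-shift term. This requires jointly exploiting (a) the sign-change structure of the ReLU, handled by counting indices $j$ with $|\mathbf{w}_j^T x| \leq |\mathbf{w}_j^T \xi|$, and (b) the annihilation identity $\mathbf{Z}_\xi \Pi = 0$, which effectively introduces a $1/n$ improvement in the noise contribution after the $\Pi$-sandwich (so that one does not pay the full operator norm of $\Sigma_z - \Sigma_z^\xi$). A final union bound over the events required for $\|\mathbf{W}\|_\text{op} = O(1)$, for $\mathbf{Z}_\xi$ to have full row rank, and for the matrix concentration tail, will yield the stated probability $1 - \delta - 4 e^{-d/c} - 2 e^{-s/c}$.
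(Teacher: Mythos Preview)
Your opening reduction via the annihilation identity $\mathbf{Z}_\xi\Pi=0$ is exactly what the paper does, and your matrix--Bernstein plan for the empirical fluctuation is also the same in spirit. The gap is in the decomposition you choose for $\Sigma_z-\hat\Sigma_z^\xi$.

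You split at the \emph{noisy population} covariance, $\Sigma_z-\hat\Sigma_z^\xi=(\Sigma_z-\Sigma_z^\xi)+(\Sigma_z^\xi-\hat\Sigma_z^\xi)$. The first piece $\Sigma_z-\Sigma_z^\xi$ is a population quantity (deterministic given $\mathbf{W}$) and carries no dependence on $n$ whatsoever; its operator norm is of order $\sigma_\xi$ at best. You recognise this and appeal to the $\Pi$-sandwich to manufacture a $1/n$, but there is no mechanism for that: once you have written $\mathbf{B}_R\le\|\Sigma_z-\hat\Sigma_z^\xi\|_{\mathrm{op}}\|\beta_{\mathcal H}\|^2$ the projection is gone, and even if you keep $\Pi^T(\Sigma_z-\Sigma_z^\xi)\Pi$ intact, $-\Pi$ is a projection onto an $(s-\mathrm{rank}\,\mathbf{Z}_\xi)$-dimensional subspace determined by the noisy data, which gives no reason for the quadratic form of a fixed matrix to shrink like $1/n$. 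The sign-change / anti-concentration program you sketch would, if it worked, only sharpen the constant in an $n$-independent bound.

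The paper instead pivots at the \emph{clean empirical} covariance:
\[
\Sigma_z-\hat\Sigma_z^\xi=\bigl(\Sigma_z-\tfrac1n\textstyle\sum_i\mathbf{z}_{x_i}\mathbf{z}_{x_i}^T\bigr)+\bigl(\tfrac1n\textstyle\sum_i\mathbf{z}_{x_i}\mathbf{z}_{x_i}^T-\tfrac1n\textstyle\sum_i\mathbf{z}_{x_i+\xi_i}\mathbf{z}_{x_i+\xi_i}^T\bigr).
\]
The first bracket is handled by matrix Bernstein exactly as you propose. The second bracket is now an \emph{empirical} difference and equals (after a telescoping step) $\tfrac1n\Delta\mathbf{Z}^T\mathbf{Z}+\tfrac1n\Delta\mathbf{Z}^T\mathbf{Z}_\xi$ with $\Delta\mathbf{Z}_{ij}=\sigma(\mathbf{w}_j^Tx_i)-\sigma(\mathbf{w}_j^T(x_i+\xi_i))$. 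The $1/n$ is then free, and the remaining factors are controlled by the elementary Lipschitz bound $|\sigma(a+b)-\sigma(a)|\le|b|$ together with Frobenius-norm estimates, giving $\|\Delta\mathbf{Z}\|_2\lesssim d\sigma_\xi$ and $\|\mathbf{Z}\|_2\lesssim d$, hence $d^2\sigma_\xi/n$. No sign-counting or anti-concentration is needed. If you simply move your pivot from $\Sigma_z^\xi$ to $\hat\Sigma_z$, the rest of your outline goes through.
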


Theorem \ref{theo:bias_upper} demonstrates that the bias term $\mathbf{B}_R$ can converge, where the convergence rate depends on the covariate dimension $d$ and the noise level $\sigma_{\xi}$. Given $d = O(n^{\alpha})$ and $\sigma_{\xi}^2 = O(d^{-\zeta})$, it is easy to see that the bias $\mathbf{B}_R$ is governed by the rate $O(\sqrt{\log n/n} + n^{2\alpha-1-\alpha\zeta/2})$. Based on the values of $\alpha$ and $\zeta$, we have the following three scenarios:

\begin{itemize}
    \item[B.$1$] $\alpha \in (0, 1/2]$: In this case, $\mathbf{B}_R$ is governed by $O(\sqrt{\log n/n}+ \sigma_{\xi}) = O(\sqrt{\log n/n} + d^{-\zeta/2})$. Since $\zeta \geq 1$, $\mathbf{B}_R$ is guaranteed to converge;
    
    \item[B.$2$] $\zeta \geq 2$: When $\zeta \geq 2$, $d^2\sigma_{\xi} \leq d$. Hence, $\mathbf{B}_R$ is on the order of $O(\sqrt{\log n/n}+d/n)$. Then $\alpha< 1$ implies that $\mathbf{B}_R$ converges;
    
    \item[B.$3$] $\alpha \in (1/2,1) ~\&~ \zeta \in [1,2)$: In this case, $\mathbf{B}_R$ converges at rate $O(\sqrt{\log n/n}+ n^{2\alpha-1-\alpha\zeta/2})$. Therefore, convergence of the bias amounts to requiring that $2\alpha-1-\alpha\zeta/2 < 0$. This is equivalent to $\zeta > 4 - 2/\alpha$.
\end{itemize}

In summary, Theorem \ref{theo:bias_upper} reveals that the bias term converges under mild requirements. In particular, the convergence depends on the interplay between the following three quantities: $\alpha$ (the rate at which dimension grows with $n$), $\gamma$ (the decay rate of the spectrum of $\Sigma$) and $\zeta$ (the decay rate of the noise $\xi$). For instance, if the spectrum of $\Sigma$ exhibits fast decay ($\gamma \in (2, \infty)$), and $2 \leq \zeta < \gamma$, the bias is guaranteed to converge according to B.$2$. Note that the requirement for $\zeta < \gamma$ is to ensure the convergence of $\mathbf{V}_R$ (see Remark 1).

On the other hand, even if the spectrum of $\Sigma$ has a slow decay rate where $\gamma$ is close to $1$ and $\zeta < \gamma$, as long as the dimension of the covariate is low ($\alpha \in (0,1/2]$), by B.$1$, we can still observe the convergence of $\mathbf{B}_R$. In the worst scenario where the spectrum exhibits slow decay ($\gamma$ is close to $1$) and the covariate has relatively high dimension $\alpha > 1/2$, $\mathbf{B}_R$ converges only in the regime where $\zeta > 4 - 2/\alpha$. For example, if $\alpha = 4/5$, we require $\gamma > \zeta > 5/4$ to observe the convergence of the bias. 

Finally, we remark that if the decay rate of the noise $\zeta$ is not benign, $\mathbf{B}_R$ can diverge. For instance, if both $\alpha$ and $\zeta$ are close to $1$, $\mathbf{B}_R$ is on the order of $O(n^{1/2})$, which diverges as we increase $n$.

\subsection{Convergence of The Excess Learning Risk and Near Minimax Optimality}\label{sec:ex_risk}
In this section, we discuss the convergence of the excess learning risk and show that the two-layer ReLU network MNLS estimator can achieve near optimal learning rate in the minimax sense.

Theorems \ref{theo:var_upper} and \ref{theo:bias_upper} together imply that the excess learning risk $R(\tilde{\beta})$ for the MNLS estimator $\tilde{\beta}$ can be upper bounded by \[R(\tilde{\beta})\leq O\left(\sqrt{\frac{1}{n}\log s}+ \frac{d^2\sigma_{\xi}}{n} + \frac{s}{nd}\right).\] In particular, the upper bound indicates that if $\zeta < \gamma$ and $\zeta > 4 -2/\alpha$, the overparameterized ReLU network can generalize well, i.e.,~we observe benign overfitting. From this we can see that the decay of the excess learning risk for the MNLS estimator depends on the interplay between the properties of the covariate $x$ (as represented by $\alpha$ and $\gamma$) and the size of the noise $\xi$ (as represented by $\zeta$). Depending on the values of $\zeta$, the excess learing risk can either converge or diverge. In the case that the covariate noise is added manually by the user, our results state that adding noise to the covariate can serve as an implicit regularization during training, and thereby leads to significant improvements in generalization performance. Our findings match with previous results where interpolating noisy data can achieve near-optimal generalization performance~\citep[see, e.g.,][]{bishop1995training,bartlett2020benign,li2020benign,richards2021asymptotics}.

We further remark that the upper bound not only reveals the conditions for benign overfitting, but also demonstrates the learning rate of the MNLS estimator. For example, if $\alpha = 0.5$, $\gamma > \zeta = 2$ and we choose $s = O(n^{1.25})$, the excess learning risk converges at rate $O(n^{-0.25})$. 

Finally, our results indicate that the interpolator can obtain near minimax optimal learning rate depending on the properties of the data and covariate noise (i.e.,~$\alpha$, $\gamma$ and $\zeta$). For example, if $\alpha = 0.5$ and $\gamma > \zeta = 2$, and we choose the overparameterized regime by letting $s = \Omega(n)$ and $s/n = c$ for a constant $c > 1$, the excess learning risk is now at the $O\left(\sqrt{\log n/ n}\right)$ rate. Recalling that the optimal learning rate in the minimax sense is $O(n^{-1/2})$ \citep{caponnetto2007optimal} for a typical non-linear regression, we conclude that the MNLS estimator can achieve near minimax optimal learning rate in the presence of the covariate noise $\xi$. 

\section{Discussion}\label{sec:dis}
By assuming the high dimensional setting and the existence of the covariate noise, our results characterize the conditions under which benign overfitting occurs for a two-layer ReLU neural network. We derive a finite sample excess risk bound and show that the excess risk of the MNLS estimator can vanish in $n$ for a wide range of high dimensional settings $d = n^{\alpha}, \alpha \in (0,1)$. Our analysis reveals that the interplay between the dimension parameter $\alpha$, the decay rate of the spectrum of the covariance $\gamma$, and the covariate noise decay rate $\zeta$ plays an important role in determining the generalization performance of overparameterized models. In particular, when data lives in a relatively low dimension $\alpha \leq 1/2$, or the spectrum and the covariate noise have fast decay $\gamma > \zeta \geq 2$, a heavily overparameterized neural network can still achieve optimal prediction accuracy. Beyond those regimes, obtaining optimal prediction accuracy for the overparameterized models requires the $\alpha, \gamma, \zeta$ to exhibit benign conditions.

In addition, we also demonstrate that the MNLS estimator can achieve various learning rates. In particular, under suitable conditions, an overparameterized ReLU network that interpolates training data can enjoy a near minimax optimal learning rate $O\left(\sqrt{\log n / n}\right)$. Finally, we observe that the excess learning risk for overparameterized ReLU networks starts to increase once the number of parameter $s$ is beyond the $O(n^2)$ order, which generalizes the double descent phenomenon in linear regression and other models.

We would also like to point out some limitations of our work. We currently assume that the data generating distribution and the covariate noise are subgaussian. The restriction is largely due to the concentration inequalities used only applying for subgaussian distributions. Therefore, we do not have a clear understanding of benign overfitting when both data generating distribution and covariate noise have heavy tailed distributions. Furthermore, our analysis applies to the regime where $d < n$, and not  when $d > n$, which may occur in some real-world applications.  

Having discussed our limitations, we believe that there are several interesting directions to investigate. First, it would be interesting to extend our analysis to the settings where the data and noise have heavy tailed distributions. Second, providing analysis in the $d > n$ regime is an important direction. In addition, it would be interesting to investigate how our results can extend to different activation functions such as the sigmoid function, softplus, etc., since they are widely used in practice. Finally, when we analyze the properties of the MNLS estimator, the first layer weights of the ReLU neural network are kept constant during training. We would like to understand the settings where all the parameters from the ReLU network are optimized, where benign overfitting is also observed.

\bibliography{ref}
\bibliographystyle{abbrvnat}
\newpage
\appendix
\section{Notation}
In the appendix, we will use $a_1,a_2,\dots,b_1,b_2,\dots,c_1,c_2,\dots > 1$ to represent universal constants. We also use $\|\mathbf{a}\|_2$ to denote the $l_2$ norm for a vector $\mathbf{a}$ and $\|\mathbf{A}\|_2$ denote the operator norm for matrix $\mathbf{A}$. In addition, for matrix $\mathbf{A}\in\mathbb{R}^{n\times n}$, we denote its eigenvalues as $\mu_{1}(\mathbf{A})\geq \dots,\geq \mu_{n}(\mathbf{A})$ in descending order. In particular, if the rank of $A$ is less than $n$, we use $\mu_{\min}(A) > 0$ to denote the least positive eigenvalue of $A$.

For a matrix $\mathbf{W}\in\mathbb{R}^{s\times d}$, we use $\mathbf{W}_i \in \mathbb{R}^s$ to denote its $i$-th column and $\mathbf{w}_i \in \mathbb{R}^d$ to denote its $i$-th row. We use $\mathbf{1}_s \in \mathbb{R}^s$ to denote the vector $[1,\dots,1]^T$ and $I_s \in \mathbb{R}^{s\times s}$ to denote the $s$ dimensional identity matrix. Given integer $s$, we use $[s]$ to denote the set $\{1,\dots,s\}$.
\section{Bias-Variance Decomposition} \label{sec:b-v}
\BiaVar*
\begin{proof}
We decompose the excess risk as follows:
\begin{IEEEeqnarray}{rCl}
R(\tilde{\beta}) &= & \mathbb{E}_{\pmb{\epsilon}} \left\{\mathbb{E}_{x,y}\left[(\hat{f}(x)-y)^2-  (f_*(x)-y)^2\right] \right\} \nonumber\\
&=&\mathbb{E}_{x,\pmb{\epsilon}}\left(\tilde{f}(x)- f_*(x)\right)^2 = \mathbb{E}_{x,\pmb{\epsilon}} \left(\mathbf{z}_{x}^T\tilde{\beta} - f_*(x) \right)^2, \nonumber \\
& =& \mathbb{E}_{x,\pmb{\epsilon}} \left\{\mathbf{z}_{x}^T(\mathbf{Z}^T\mathbf{Z})^{\dagger}\mathbf{Z}^TY - f_{\mathcal{H}}(x) + \left[f_{\mathcal{H}}(x)-f_*(x)\right]\right\}^2, \nonumber \\
& =& \mathbb{E}_{x,\pmb{\epsilon}} \left\{\mathbf{z}_{x}^T(\mathbf{Z}^T\mathbf{Z})^{\dagger}\mathbf{Z}^T\left(f_*(X)- f_{\mathcal{H}}(X) + f_{\mathcal{H}}(X) + \pmb{\epsilon}\right) - f_{\mathcal{H}}(x) + \left[f_{\mathcal{H}}(x)-f_*(x)\right]\right\}^2, \nonumber \\
& \leq & 3 \mathbb{E}_{x,\pmb{\epsilon}} \left\{\mathbf{z}_{x}^T(\mathbf{Z}^T\mathbf{Z})^{\dagger}\mathbf{Z}^T \left(f_{\mathcal{H}}(X) + \pmb{\epsilon} \right)- f_{\mathcal{H}}(x)  \right\}^2 := A \nonumber \\
&& + 3\left\{\mathbb{E}_{x} \left\{\mathbf{z}_{x}^T(\mathbf{Z}^T\mathbf{Z})^{\dagger}\mathbf{Z}^T \left(f_*(X) - f_{\mathcal{H}}(X) \right) \right\}^2 + \mathbb{E}_{x} \left(f_*(x) - f_{\mathcal{H}}(x)\right)^2 \right\}:= \mathbf{M}_R.\nonumber
\end{IEEEeqnarray}
Now we can see that the risk has been decomposed into the $A$ term and the misspecification error term $\mathbf{M}_R$. For the $A$ term, we have
\begin{IEEEeqnarray}{rCl}
A &= & \mathbb{E}_{x,\pmb{\epsilon}} \bigg\{\mathbf{z}_x^T(\mathbf{Z}^T\mathbf{Z})^{\dagger}\mathbf{Z}^T( f_{\mathcal{H}}(X) + \pmb{\epsilon})- \mathbf{z}_x^T\beta_{\mathcal{H}}\bigg\}^2 \nonumber \\
& = & \mathbb{E}_{x,\pmb{\epsilon}} \bigg\{\mathbf{z}_x^T(\mathbf{Z}^T\mathbf{Z})^{\dagger}\mathbf{Z}^T \pmb{\epsilon} + \mathbf{z}_x^T\left((\mathbf{Z}^T\mathbf{Z})^{\dagger}\mathbf{Z}^T\mathbf{Z}-I\right)\beta_{\mathcal{H}}\bigg\}^2, \nonumber \\
& = & \int_{\mathcal{X}} \left\|\mathbf{z}_x^T\left[(\mathbf{Z}^T\mathbf{Z})^{\dagger}\mathbf{Z}^T\mathbf{Z} - I\right]\beta_{\mathcal{H}}\right\|^2d\rho(x) := \mathbf{B}_R \nonumber\\
&&+ \int_{\mathcal{X}} \mathbb{E}_{\pmb{\epsilon}}\left\| \mathbf{z}_x^T(\mathbf{Z}^T\mathbf{Z})^{\dagger}\mathbf{Z}^T\pmb{\epsilon}\right\|^2 d\rho(x):= \mathbf{V}_R.\nonumber
\end{IEEEeqnarray}
We then replace $\pmb{\epsilon} $ with $Y -f_*(X)$.
\end{proof}

\section{Proof of Theorem \ref{theo:var_upper}}\label{sec:var_proof}
\var*
\begin{proof}
To upper bound the variance, we first note that some simple algebra yields a basic result:
\begin{IEEEeqnarray}{rCl}
\mathbf{V}_{R} &= & \mathbb{E}_{x}\bigg\{\mathbb{E}_{\pmb{\epsilon}}\left[ \mathbf{z}_{x}^T(\mathbf{Z}_{\xi}^T\mathbf{Z}_{\xi})^{\dagger}\mathbf{Z}_{\xi}^T\pmb{\epsilon}\pmb{\epsilon}^T\mathbf{Z}_{\xi}(\mathbf{Z}_{\xi}^T\mathbf{Z}_{\xi})^{\dagger}\mathbf{z}_{x}\right]\bigg\},\nonumber \\
& = & \sigma_0^2 \mathbb{E}_{x}\left\{ \mathbf{z}_{x}^T(\mathbf{Z}_{\xi}^T\mathbf{Z}_{\xi})^{\dagger}\mathbf{Z}_{\xi}^T\mathbf{Z}_{\xi}(\mathbf{Z}_{\xi}^T\mathbf{Z}_{\xi})^{\dagger}\mathbf{z}_{x} \right\},\nonumber \\
& =&\sigma_0^2 \mathbb{E}_x\left(\textnormal{Tr}\bigg[\mathbf{z}_{x}^T(\mathbf{Z}_{\xi}^T\mathbf{Z}_{\xi})^{\dagger}\mathbf{z}_{x}\bigg]\right).\nonumber\\
&=& \frac{\sigma_0^2}{n}\mathbb{E}_x\left(\textnormal{Tr}\bigg[\mathbf{z}_{x}^T(\frac{1}{n}\mathbf{Z}_{\xi}^T\mathbf{Z}_{\xi})^{\dagger}\mathbf{z}_{x}\bigg]\right). \nonumber
\end{IEEEeqnarray}
Therefore, we need to study the least positive eigenvalue $\mu_{\min}\left(\frac{1}{n}\mathbf{Z}_{\xi}^T\mathbf{Z}_{\xi}\right)$. By Lemma \ref{lma:zz_ww_diff}, we know that with probability greater than  $1-2e^{-s/b_1}-2e^{-d/b_1}$, 
\[\left\| \frac{1}{n}\mathbf{Z}_{\xi}^T\mathbf{Z}_{\xi} - \mathbf{W}\Sigma_{\xi}\mathbf{W}\right\|_2 = O\left(\sqrt{\frac{d}{n}} d\sigma_w^2\right).\]
We hence can lower bound the least positive eigenvalue of $\frac{1}{n}\mathbf{Z}_{\xi}^T\mathbf{Z}_{\xi}$ as \[\mu_{\min}\left(\frac{1}{n}\mathbf{Z}_{\xi}^T\mathbf{Z}_{\xi}\right) \geq \mu_{\min}\left(\mathbf{W}\Sigma_{\xi}\mathbf{W} \right)- b_2\sqrt{\frac{d}{n}} d\sigma_w^2.\]
Let $A = \mathbf{W}\Sigma_{\xi}\mathbf{W}^T = \sum_{i=1}^d \lambda_i^{\xi} \mathbf{W}_i\mathbf{W}_i^T$, where $\lambda_i^{\xi} = \lambda_i + \sigma_{\xi}^2$. We also denote $A_k = \sum_{i>k}^d \lambda_i^{\xi} \mathbf{W}_i\mathbf{W}_i^T$. By Lemma \ref{lma:w_sigam_w}, we have with probability greater than $1-2e^{-d/\eta}$, \[\mu_{\min}(A) \geq \mu_{\min}(A_k)\geq (1-\frac{1}{\sqrt{\eta}})\sigma_w^2 \left( \sum_{i>k}\lambda_{i}^{\xi} - \frac{1}{\sqrt{\eta}}\lambda_k^{\xi}d\right).\] If we choose $k = k^*$, then we have $ \left( \sum_{i>k}\lambda_{i}^{\xi} - \frac{1}{\sqrt{\eta}}\lambda_k^{\xi}d\right) \geq \frac{1}{b_3} d$. We hence have \[\mu_{\min}\left(\frac{1}{n}\mathbf{Z}_{\xi}^T\mathbf{Z}_{\xi}\right) \geq \frac{1}{b_3}d\sigma_w^2 - b_2\sqrt{\frac{d}{n}} d\sigma_w^2 \geq \frac{1}{b_4}d\sigma_w^2,\] provided that $n$ is large enough.

We can now upper bound $\mathbf{V}_R$ as 
\begin{IEEEeqnarray}{rCl}
\mathbf{V}_{R} &\leq &b_4\frac{\sigma_0^2}{nd\sigma_w^2}\mathbb{E}_x\left(\textnormal{Tr}(\mathbf{z}_{x}^T\mathbf{z}_{x})\right) \nonumber\\
&\leq &b_4 \frac{\sigma_0^2}{nd\sigma_w^2} s\sigma_w^2 \mathbb{E}_x(\|x\|_2^2) \nonumber \\
&\leq & b_4\frac{s}{nd}\sigma_0^2\textnormal{Tr}(\Sigma)\nonumber\\
&\leq & b_5\frac{s}{nd}\sigma_0^2. \nonumber
\end{IEEEeqnarray}
Letting $c = \max\{b_1,\dots\}$ yields the final result.
\end{proof}

\begin{lma}\label{lma:zz_ww_diff}
Let $\Sigma_{\xi}$ and $\mathbf{Z}_{\xi}$ be defined as in Eq.~(\ref{eqn:sigmaxi_def}) and Eq.~(\ref{eqn:Zxi_def}) respectively, then we have with probability greater than $1-2e^{-s/c}-2e^{-d/c}$, \[\left\| \frac{1}{n}\mathbf{Z}_{\xi}^T\mathbf{Z}_{\xi} - \mathbf{W}\Sigma_{\xi}\mathbf{W}\right\|_2 = O\left(\sqrt{\frac{d}{n}} d\sigma_w^2\right).\]
\end{lma}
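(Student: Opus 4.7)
The plan is to apply a matrix concentration inequality to the sum of i.i.d.\ rank-one random matrices that make up $\mathbf{Z}_\xi^T\mathbf{Z}_\xi$, conditional on the weight matrix $\mathbf{W}$, and to combine this with a good event for $\mathbf{W}$ itself.

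First I would decompose the quantity of interest by writing $\tilde{x}_i := x_i + \xi_i = \Sigma_\xi^{1/2}u_i$ (justified by Eq.~(\ref{eqn:sigmaxi_def}) together with Assumptions~\ref{assm:x_dis} and~\ref{assm:feature_noise}, where the $u_i$ are i.i.d.\ isotropic subgaussian), so that
\[
\frac{1}{n}\mathbf{Z}_\xi^T\mathbf{Z}_\xi \;=\; \frac{1}{n}\sum_{i=1}^n M_i, \qquad M_i := D_{\tilde{x}_i}\mathbf{W}\tilde{x}_i\tilde{x}_i^T\mathbf{W}^T D_{\tilde{x}_i}.
\]
Conditional on $\mathbf{W}$ the $M_i$ are i.i.d.\ rank-one PSD matrices. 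The argument then splits into three pieces: (i) identifying the conditional mean $\mathbb{E}[M_i\mid\mathbf{W}]$ with $\mathbf{W}\Sigma_\xi\mathbf{W}^T$ up to a universal constant absorbed in the $O(\cdot)$ notation, via symmetry of $u_i$ about $0$ together with an arc-cosine kernel computation for the ReLU mask $D_{\tilde{x}_i}$; (ii) a good-event bound $\|\mathbf{W}\|_2 \le c\sigma_w\sqrt{s}$ holding with probability at least $1-2e^{-s/c}$ by subgaussian random-matrix estimates (Corollary~5.35 of Vershynin); and (iii) a subexponential matrix Bernstein-type inequality (for example Vershynin Theorem~5.44 or Tropp Theorem~1.6.2) for the conditionally i.i.d.\ sum.

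The probability term $2e^{-d/c}$ comes from the concentration event for the $u_i$, while the matrix variance and the subexponential norm of the $M_i$ together produce the deviation $O(\sqrt{d/n}\,d\sigma_w^2)$. A key supporting estimate is that the $d\times d$ sample covariance $\hat{\Sigma}_u := \frac{1}{n}\sum_i u_iu_i^T$ satisfies $\|\hat{\Sigma}_u - I_d\|_2 = O(\sqrt{d/n})$ with probability $1-2e^{-d/c}$ by Corollary~5.50 of Vershynin, which yields the $\sqrt{d/n}$ factor in the final rate.

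The main obstacle is obtaining the correct $d$-dependence rather than the coarser $s$-dependence that a naive analysis would produce. A direct bound $\|M_i\|_2 \le \|\mathbf{W}\tilde{x}_i\|^2$, combined with $\|\mathbf{W}\|_2^2 = O(s\sigma_w^2)$ and $\|\tilde{x}_i\|^2 = O(1)$, would give the looser rate $O(\sqrt{d/n}\,s\sigma_w^2)$, which is inadequate in the overparameterized regime $s \gg d$. To achieve the sharper $d\sigma_w^2$ scaling, the proof must exploit that $\mathbf{W}\Sigma_\xi\mathbf{W}^T$ has rank at most $d$ and that the fluctuation $\frac{1}{n}\sum_i M_i - \mathbb{E}[M_i\mid\mathbf{W}]$ lives in that $d$-dimensional image, so the matrix variance proxy scales with the effective dimension $d$ rather than the ambient $s$. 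Handling this in the presence of the non-linear ReLU gates $D_{\tilde{x}_i}$—which depend on $\tilde{x}_i$ and entangle the rank-one summands with sample-dependent projections—is the delicate technical point of the proof.
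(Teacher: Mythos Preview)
Your proposal contains two genuine gaps, both of which the paper's proof sidesteps by taking a different route.

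\medskip
\textbf{Gap 1: the conditional mean is not $\mathbf{W}\Sigma_\xi\mathbf{W}^T$.}
Your step (i) asserts that $\mathbb{E}[M_i\mid\mathbf{W}]$ equals $\mathbf{W}\Sigma_\xi\mathbf{W}^T$ up to a universal constant. This is false. Writing $a=\mathbf{w}_j^T\tilde x$ and $b=\mathbf{w}_k^T\tilde x$, the symmetry of $\tilde x$ gives
\[
\mathbb{E}\bigl[\sigma(a)\sigma(b)\bigr]=\tfrac{1}{4}\bigl(\mathbb{E}[ab]+\mathbb{E}[|a||b|]\bigr),
\]
and only the first summand equals $\mathbf{w}_j^T\Sigma_\xi\mathbf{w}_k$. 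The term $\mathbb{E}[|a||b|]$ depends on the angle between $\mathbf{w}_j$ and $\mathbf{w}_k$ through the arc-cosine kernel and is \emph{not} a scalar multiple of $\mathbf{w}_j^T\Sigma_\xi\mathbf{w}_k$. Hence matrix Bernstein applied to the $M_i$'s will concentrate them around the wrong target, leaving a systematic bias of order $\|\mathbf{W}\Sigma_\xi\mathbf{W}^T\|_2$ that does not vanish with $n$.

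\medskip
\textbf{Gap 2: the fluctuation does not live in a $d$-dimensional subspace.}
Your proposed fix for the $s$-versus-$d$ issue is to say that the fluctuation lives in the image of $\mathbf{W}\Sigma_\xi\mathbf{W}^T$, which has rank $d$. But $M_i=D_{\tilde x_i}\mathbf{W}\tilde x_i\tilde x_i^T\mathbf{W}^TD_{\tilde x_i}$ has image spanned by $D_{\tilde x_i}\mathbf{W}\tilde x_i$, and the sample-dependent mask $D_{\tilde x_i}$ pushes this vector \emph{outside} the column space of $\mathbf{W}$. Across $i$ the $M_i$'s can span a subspace of dimension up to $\min(n,s)\gg d$, so the low-rank argument fails.

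\medskip
\textbf{What the paper does instead.}
The paper never computes $\mathbb{E}[M_i\mid\mathbf{W}]$ and never applies matrix Bernstein to the ReLU terms. It inserts the \emph{linearized} intermediate $\frac{1}{n}\sum_i \mathbf{W}\tilde x_i\tilde x_i^T\mathbf{W}^T$ and bounds two pieces separately. For the second piece it factors as $\mathbf{W}\Sigma_\xi^{1/2}\bigl(\tfrac{1}{n}U^TU-I_d\bigr)\Sigma_\xi^{1/2}\mathbf{W}^T$ and uses exactly the sample-covariance concentration you mention, $\|\tfrac{1}{n}U^TU-I_d\|_2=O(\sqrt{d/n})$, together with $\|\mathbf{W}\Sigma_\xi\mathbf{W}^T\|_2=O(d\sigma_w^2)$; here the $d$-dependence is automatic because the linearized summands genuinely live in $\mathrm{range}(\mathbf{W})$. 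For the first piece (the ReLU-to-linear discrepancy) the paper does \emph{not} use concentration at all: it uses the crude deterministic bounds $\|D_{\tilde x_i}\|_2\le 1$ and $\sigma(t)-t=-\sigma(-t)$ to dominate the discrepancy by $2\bigl\|\tfrac{1}{n}\sum_i\mathbf{W}\tilde x_i\tilde x_i^T\mathbf{W}^T\bigr\|_2$, which again lands in the linearized world. This decoupling of the ReLU nonlinearity from the concentration step is precisely what your plan is missing.
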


\begin{proof}
We first notice that 
\begin{IEEEeqnarray}{rCl}
\frac{1}{n}\mathbf{Z}_{\xi}^T\mathbf{Z}_{\xi} - \mathbf{W}\Sigma_{\xi}\mathbf{W} &=& \frac{1}{n}\sum_{i=1}^n\mathbf{z}_{x_i+\xi_i}\mathbf{z}_{x_i+\xi_i}^T - \mathbf{W}\Sigma_{\xi}\mathbf{W}\nonumber\\
&=& \frac{1}{n}\sum_{i=1}^n\mathbf{z}_{x_i+\xi_i}\mathbf{z}_{x_i+\xi_i}^T - \frac{1}{n}\sum_{i=1}^n \mathbf{W}(x_i+\xi_i)(x_i+\xi_i)^T\mathbf{W}^T \label{eqn:z_w}\\
&& + \frac{1}{n}\sum_{i=1}^n \mathbf{W}(x_i+\xi_i)(x_i+\xi_i)^T\mathbf{W}^T- \mathbf{W}\Sigma_{\xi}\mathbf{W}. \label{eqn:w_sigma}
\end{IEEEeqnarray}

For Eq.~(\ref{eqn:z_w}), we can further write as
\begin{IEEEeqnarray}{rCl}
&&\frac{1}{n}\sum_{i=1}^n\mathbf{z}_{x_i+\xi_i}\mathbf{z}_{x_i+\xi_i}^T - \frac{1}{n}\sum_{i=1}^n \mathbf{W}(x_i+\xi_i)(x_i+\xi_i)^T\mathbf{W}^T \nonumber\\
&=&\frac{1}{n}\sum_{i=1}^n\mathbf{z}_{x_i+\xi_i}\mathbf{z}_{x_i+\xi_i}^T - \frac{1}{n}\sum_{i=1}^n\mathbf{z}_{x_i+\xi_i}(\mathbf{W}(x_i+\xi_i))^T \nonumber\\ 
&&+ \frac{1}{n}\sum_{i=1}^n\mathbf{z}_{x_i+\xi_i}(\mathbf{W}(x_i+\xi_i))^T -\frac{1}{n}\sum_{i=1}^n \mathbf{W}(x_i+\xi_i)(x_i+\xi_i)^T\mathbf{W}^T \nonumber\\
&=& \frac{1}{n}\sum_{i=1}^n\mathbf{z}_{x_i+\xi_i}\left(\mathbf{z}_{x_i+\xi_i}-\mathbf{W}(x_i+\xi_i)\right)^T + \frac{1}{n}\sum_{i=1}^n\left(\mathbf{z}_{x_i+\xi_i}-\mathbf{W}(x_i+\xi_i)\right) \left(\mathbf{W}(x_i+\xi_i)\right)^T\nonumber\\
&=& \frac{1}{n}\sum_{i=1}^n\mathbf{z}_{x_i+\xi_i}\mathbf{z}_{x_i+\xi_i}^T +\frac{1}{n}\sum_{i=1}^n\mathbf{z}_{x_i+\xi_i}(\mathbf{W}(x_i+\xi_i))^T \nonumber.
\end{IEEEeqnarray}
Note for the last step, we used the fact that $\sigma(x)-x = \sigma(-x)$ and $\mathbf{z}_{x_i+\epsilon_i} = \sigma(\mathbf{W}(x_i+\epsilon_i))$. In addition, for  a Gaussian random matrix $\mathbf{W}$ with mean $0$, $-\mathbf{W}$ and $\mathbf{W}$ have exactly the same distribution.




Hence, we can upper bound the norm of Eq.~(\ref{eqn:z_w}) with
\begin{IEEEeqnarray*}{rCl}
&&\left\|\frac{1}{n}\sum_{i=1}^n\mathbf{z}_{x_i+\xi_i}\mathbf{z}_{x_i+\xi_i}^T +\frac{1}{n}\sum_{i=1}^n\mathbf{z}_{x_i+\xi_i}(\mathbf{W}(x_i+\xi_i))^T\right\|_2 \\
&\leq& \left\| \frac{1}{n}\sum_{i=1}^n\mathbf{z}_{x_i+\xi_i}\mathbf{z}_{x_i+\xi_i}^T\right\|_2 + \left\| \frac{1}{n}\sum_{i=1}^n\mathbf{z}_{x_i+\xi_i}(\mathbf{W}(x_i+\xi_i))^T\right\|_2\\
&=& \left\| \frac{1}{n}\sum_{i=1}^nD_{x_i+\xi_i}\mathbf{W}(x_i+\xi_i)(x_i+\xi_i)^T\mathbf{W}^TD_{x_i+\xi_i}\right\|_2 \\
&&+ \left\| \frac{1}{n}\sum_{i=1}^n\mathbf{W}(x_i+\xi_i)(x_i+\xi_i)^T\mathbf{W}^TD_{x_i+\xi_i}\right\|_2\\
&\leq & 2 \left\| \frac{1}{n}\sum_{i=1}^n\mathbf{W}(x_i+\xi_i)(x_i+\xi_i)^T\mathbf{W}^T\right\|_2\\
&=&\frac{2}{n}\left\|\mathbf{W}\Sigma_{\xi}^{1/2}U^TU\Sigma_{\xi}^{1/2} \mathbf{W}^T \right\|_2\\
&\leq & 2\frac{d}{n}\left\|\mathbf{W}\Sigma_{\xi}\mathbf{W}^T \right\|_2\\
&\leq & 2\frac{d}{n}d\sigma_w^2.
\end{IEEEeqnarray*}
Note that for the second last step, we apply Lemma \ref{lma:eigen_A_bd} to $U^TU$ with $U = [u_1,\dots, u_n]^T$, where each $u_i$ is a subgaussian random vector defined in Eq.~(\ref{eqn:sigmaxi_def}). For the last step we apply Lemma \ref{lma:eigen_A_bd} to $\mathbf{W}\Sigma_{\xi}\mathbf{W}^T = \sum_{i=1}^d (\lambda_i+\sigma_{\xi}^2)\mathbf{W}_i\mathbf{W}_i^T$.

We now study Eq.~(\ref{eqn:w_sigma}),
\begin{IEEEeqnarray*}{rCl}
&&\frac{1}{n}\sum_{i=1}^n \mathbf{W}(x_i+\xi_i)(x_i+\xi_i)^T\mathbf{W}^T- \mathbf{W}\Sigma_{\xi}\mathbf{W} \\
&=& \frac{1}{n}\sum_{i=1}^n \mathbf{W}\Sigma_{\xi}^{1/2}u_iu_i^T\Sigma_{\xi}^{1/2}\mathbf{W}^T - \mathbf{W}\Sigma_{\xi}\mathbf{W}\\
&=& \mathbf{W}\Sigma_{\xi}^{1/2} \left(\frac{1}{n}U^TU -I_{d} \right)\Sigma_{\xi}^{1/2}\mathbf{W}^T. 
\end{IEEEeqnarray*}
Let $\mathbf{v}\in \mathbb{R}^{d}$ be a unit vector. We then have \[\mathbf{v}^T\left(\frac{1}{n}U^TU -I_{d} \right)\mathbf{v} = \frac{1}{n} \sum_{i=1}^n \left\{\left(\sum_{i=1}^d \mathbf{v}_j u_{i,j}\right)^2-1\right\}.\]

For each $i$, $\left(\sum_{i=1}^d \mathbf{v}_j u_{i,j}\right)^2-1$ is a subexponential random variable with $0$ mean. We apply Lemma \ref{lma: sub_exp_sum} to obtain with probability greater than $1-2e^{-t}$, \[\frac{1}{n} \sum_{i=1}^n \left\{\left(\sum_{i=1}^d \mathbf{v}_j u_{i,j}\right)^2-1\right\} \leq b_1 \left( \frac{t}{n}+ \sqrt{\frac{t}{n}}\right).\]
Similar to the proof of Lemma \ref{lma:eigen_A_bd}, we use  the $\epsilon$-net argument from Lemma \ref{lma:e-net}, to yield that with probability greater than $1-2e^{-d/b_{2}}$, \[\left\|\frac{1}{n}U^TU -I_{d}  \right\|_2 \leq b_3 \left(\frac{d}{n} + \sqrt{\frac{d}{n}}\right) \leq b_3\sqrt{\frac{d}{n}} .\]

Therefore, we have 
\begin{IEEEeqnarray*}{rCl}
\left\|\frac{1}{n}\sum_{i=1}^n \mathbf{W}(x_i+\xi_i)(x_i+\xi_i)^T\mathbf{W}^T- \mathbf{W}\Sigma_{\xi}\mathbf{W} \right\|_2
\leq  \left\|\mathbf{W}\Sigma_{\xi}\mathbf{W} \right\|_2
\leq  b_{4}\sqrt{\frac{d}{n}}d\sigma_w^2.
\end{IEEEeqnarray*}
Note that we used Lemma \ref{lma:eigen_A_bd} to upper bound the norm of $\mathbf{W}\Sigma_{\xi}\mathbf{W} = \sum_{i=1}^d (\lambda_i+\sigma_{\xi}^2) \mathbf{W}_i\mathbf{W}_i^T$.

Combining Eq.~(\ref{eqn:z_w}) and Eq.~(\ref{eqn:w_sigma}) together, and letting $c = \max\{b_1,\dots\}$ we have with probability greater than $1-2e^{-s/c}-2e^{-d/c}$,  \[\left\| \frac{1}{n}\mathbf{Z}_{\xi}^T\mathbf{Z}_{\xi} - \mathbf{W}\Sigma_{\xi}\mathbf{W}\right\|_2 = O\left(\sqrt{\frac{d}{n}} d\sigma_w^2\right).\]
\end{proof}

The following lemma provides the upper and lower bounds on the eigenspectrum of $\mathbf{W}\Sigma_{\xi}\mathbf{W}$. The lemma can be seen as a refined version of Lemma \ref{lma:eigen_A_bd} in Section \ref{sec:eigen_A}.
\begin{lma}\label{lma:w_sigam_w}
Let $A = \mathbf{W}\Sigma_{\xi}\mathbf{W}^T$, for some universal constant $c_1, c_2 > 1$, with probability greater than $1-2e^{-d/\eta}$, we have \begin{IEEEeqnarray}{rCl}
\sigma_w^2\left((1- \frac{1}{c_1\sqrt{\eta}})\sum_{i=1}^d \lambda_i^{\xi} - \frac{c_2}{\sqrt{\eta}}\lambda_1^{\xi}d\right) \leq \mu_{\min}(A) \leq \mu_1(A) \leq \sigma_w^2\left((1+ \frac{1}{c_1\sqrt{\eta}})\sum_{i=1}^d \lambda_i^{\xi} + \frac{c_2}{\sqrt{\eta}}\lambda_1^{\xi}d\right).  \nonumber
\end{IEEEeqnarray}
Let $A_k = \sum_{i>k}^d \lambda_i^{\xi}\mathbf{W}_i\mathbf{W}_i^T$, for some $1\leq k \leq d$ with the same probability bound we have 
\begin{IEEEeqnarray}{rCl}
\sigma_w^2\left((1- \frac{1}{c_1\sqrt{\eta}})\sum_{i>k}^d \lambda_i^{\xi} - \frac{c_2}{\sqrt{\eta}}\lambda_1^{\xi}d\right) \leq \mu_{\min}(A) \leq \mu_1(A) \leq \sigma_w^2\left((1+ \frac{1}{c_1\sqrt{\eta}})\sum_{i>k}^d \lambda_i^{\xi} + \frac{c_2}{\sqrt{\eta}}\lambda_1^{\xi}d\right).  \nonumber
\end{IEEEeqnarray}
\end{lma}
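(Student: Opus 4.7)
The plan is to reduce the lemma to a one-dimensional Bernstein/Hanson-Wright tail bound for the weighted chi-squared $v^T A v$ and then lift the single-vector estimate to a uniform bound on the range of $A$ via an $\varepsilon$-net. Since the Gaussian matrix $\mathbf{W}$ is right-rotationally invariant, I would first diagonalize $\Sigma_\xi = V\Lambda V^T$ and absorb $V$ into $\mathbf{W}$, so that $\Sigma_\xi = \operatorname{diag}(\lambda_1^\xi,\dots,\lambda_d^\xi)$ and
$$A \;=\; \sum_{i=1}^d \lambda_i^\xi\, \mathbf{W}_i \mathbf{W}_i^T,$$
with $\mathbf{W}_1,\dots,\mathbf{W}_d$ independent $\mathcal{N}(0,\sigma_w^2 I_s)$. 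Because $A$ has rank at most $d<s$, its smallest positive eigenvalue and its largest eigenvalue are attained on the random $d$-dimensional subspace $\mathcal{V} = \operatorname{range}(A) \subseteq \operatorname{span}(\mathbf{W}_1,\dots,\mathbf{W}_d)$, and this is where the $\varepsilon$-net will live.

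For any fixed unit $v\in\mathbb{R}^s$, the scalars $Z_i := \sigma_w^{-1} v^T\mathbf{W}_i$ are i.i.d.\ standard normal, so
$$v^T A v \;=\; \sigma_w^2 \sum_{i=1}^d \lambda_i^\xi Z_i^2$$
is a weighted chi-squared with mean $\sigma_w^2 \sum_i\lambda_i^\xi$. The Laurent--Massart tail (equivalently Hanson--Wright) gives, for every $t>0$ and with probability at least $1-2e^{-t}$,
$$\bigl|\,v^T A v - \sigma_w^2\textstyle\sum_i \lambda_i^\xi\,\bigr| \;\leq\; C\sigma_w^2\bigl(\sqrt{t}\,\|\lambda^\xi\|_2 + t\,\lambda_1^\xi\bigr).$$
Choosing $t=d/\eta$, using $\|\lambda^\xi\|_2 \leq \sqrt{\lambda_1^\xi \sum_i\lambda_i^\xi}$, and applying the weighted AM--GM inequality $\sqrt{ab} \leq \tfrac12(\sqrt{\eta}\,a + b/\sqrt{\eta})$ with $a = d\lambda_1^\xi/\eta$ and $b=\sum_i\lambda_i^\xi$ rewrites the right-hand side as $C'\sigma_w^2\,\eta^{-1/2}\bigl(d\lambda_1^\xi + \sum_i\lambda_i^\xi\bigr)$, which is exactly the asymmetric form claimed in the lemma for a single $v$.

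To upgrade this to the two-sided eigenvalue statement, I would fix a $\tfrac14$-net $\mathcal{N}$ of the unit sphere of $\mathcal{V}$ with $|\mathcal{N}|\leq 9^d$ and apply the single-vector tail with $t = d/\eta + d\log 9$ via a union bound, which preserves the probability $\geq 1-2e^{-d/\eta}$ after absorbing $\log 9$ into the constants $c_1,c_2$. A standard net-to-sphere inequality for quadratic forms ($\sup_{S\cap\mathcal{V}}|v^T Mv| \leq 2\sup_{\mathcal{N}}|v^T Mv|$ for a $\tfrac14$-net) then converts the uniform bound on $\mathcal{N}$ into the same bound on all unit vectors of $\mathcal{V}$, yielding the claimed inequalities for both $\mu_{\min}(A)$ and $\mu_1(A)$. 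The statement for $A_k = \sum_{i>k}\lambda_i^\xi\mathbf{W}_i\mathbf{W}_i^T$ follows identically with the sum restricted to $i>k$, since $\lambda_1^\xi$ and $d$ remain valid upper bounds on the largest weight and on the rank respectively.

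The main obstacle is making sure the $\varepsilon$-net lives on a $d$-dimensional subspace rather than on all of $\mathbb{R}^s$: a naive union bound over $S^{s-1}$ would force $t\gtrsim s$ and degrade the probability to $1-2e^{-s/\eta}$, which is strictly weaker than the stated $1-2e^{-d/\eta}$. Restricting to $\mathcal{V}=\operatorname{range}(A)$, which is almost surely $d$-dimensional, is what keeps the exponent at $d/\eta$. A secondary delicate point is the weighted AM--GM manipulation, without which one only recovers a symmetric Bernstein-type deviation; it is precisely this step that produces the asymmetric $\eta^{-1/2}\bigl(\sum_i\lambda_i^\xi + d\lambda_1^\xi\bigr)$ form and cleanly separates the ``mean'' contribution from the concentration slack in the final bound.
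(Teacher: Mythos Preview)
Your approach is the paper's own: single-vector subexponential concentration for $v^TAv$, then a $\tfrac14$-net on the $d$-dimensional range of $A$, then AM--GM to obtain the $\eta^{-1/2}$ form. The paper uses exactly this sequence (its Lemma~\ref{lma: sub_exp_sum} in place of Laurent--Massart, and the same net on the span of $A$).

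There is, however, a genuine gap in the net step that both your proposal and the paper share. The subspace $\mathcal{V}=\operatorname{range}(A)=\operatorname{span}(\mathbf{W}_1,\dots,\mathbf{W}_d)$ is \emph{random}: every $v\in\mathcal{N}\subset\mathcal{V}$ is a function of $\mathbf{W}$, so the scalars $Z_i=\sigma_w^{-1}v^T\mathbf{W}_i$ are no longer i.i.d.\ standard normal, and the one-dimensional tail bound you derived for fixed $v$ cannot be applied to them via a union bound. This is not a technicality that a sharper union bound repairs; the stated upper bound is in fact false when $s\gg d$. Already for $d=1$ one has $A=\lambda_1^\xi\mathbf{W}_1\mathbf{W}_1^T$ with sole nonzero eigenvalue $\lambda_1^\xi\|\mathbf{W}_1\|_2^2$, which concentrates at $s\sigma_w^2\lambda_1^\xi$, whereas the lemma's upper bound is of order $\sigma_w^2\lambda_1^\xi$, too small by a factor of $s$. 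More generally the nonzero spectrum of $A$ coincides with that of the $d\times d$ matrix $\Sigma_\xi^{1/2}\mathbf{W}^T\mathbf{W}\Sigma_\xi^{1/2}$; since $\mathbf{W}^T\mathbf{W}$ concentrates around $s\sigma_w^2 I_d$, the extreme nonzero eigenvalues of $A$ are of order $s\sigma_w^2\lambda_1^\xi$ and $s\sigma_w^2\lambda_d^\xi$, not $\sigma_w^2\sum_i\lambda_i^\xi$. You correctly flagged that netting on all of $S^{s-1}$ would cost an exponent $s$ instead of $d$, but restricting to the random $\mathcal{V}$ trades that cost for a fatal dependence between the net vectors and the randomness; neither route delivers the lemma as stated.
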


\begin{proof}
We write $A = \sum_{i=1}^d \lambda_i^{\xi}\mathbf{W}_i\mathbf{W}_i^T$, where we recall $\mathbf{W}_i \in \mathbb{R}^s$ is a subgaussian random vector and $\lambda_i^{\xi} =\lambda_{i}+\sigma_{\xi}^2$. For any unit vector $\mathbf{v} \in \mathbb{R}^s$, we have $\mathbf{v}^T\mathbf{W}_i$ is $\sigma_w^2$-subgaussian random variable. Hence for any $\mathbf{v}$, $\mathbf{v}^TA\mathbf{v} = \sum_{i=1}^n \lambda_i^{\xi} (\mathbf{v}^T\mathbf{w}_i)^2$. Applying Lemma \ref{lma: sub_exp_sum}, for any unit vector $\mathbf{v}$, there is a constant $\eta > 1$ and $t > 0$ such that with probability at least $1-2e^{-t/\eta}$, 
\begin{IEEEeqnarray}{rCl}
|\mathbf{v}^TA\mathbf{v} - \sigma_w^2\sum_{i=1} \lambda_i^{\xi}| &\leq & \sigma_w^2\max \left( \lambda_1^{\xi} \frac{t}{\eta}, \sqrt{\frac{t}{\eta}\sum_{i=1} \left(\lambda_i^{\xi}\right)^2}\right) \leq \frac{1}{\sqrt{\eta}}\sigma_w^2\left(\lambda_1^{\xi} t + \sqrt{t\sum_{i=1} \left(\lambda_i^{\xi}\right)^2}\right). \nonumber
\end{IEEEeqnarray}
Now since $A$ has at most $d$ positive eigenvalues, we let the $d$ dimensional subspace spanned by $A$ be $\mathcal{A}^d$, and let $\mathcal{N}_{\omega}$  be the $\omega$-net of $\mathcal{S}^{d-1}$ with respect to the Euclidean distance, where $\mathcal{S}^{d-1}$ is the unit sphere in $\mathcal{A}^d$. We let $\omega = \frac{1}{4}$, implying that $|\mathcal{N}_{\omega}| \leq 9^d$. Applying the union bound, for every $\mathbf{v} \in \mathcal{N}_{\epsilon}$, we have with probability at least $1-2 e^{-t/\eta}$ \[\left|\mathbf{v}^TA\mathbf{v} - \sum_{i=1} \lambda_i\right| \leq \frac{1}{\sqrt{\eta}}\sigma_w^2\left(\lambda_1^{\xi} (t + d\log 9) + \sqrt{(t+ d \log 9)\sum_{i=1} \left(\lambda_i^{\xi}\right)^2}\right).\] 
Applying the $\epsilon$-net argument, since $\omega = \frac{1}{4}$, for any $\mathbf{v} \in \mathcal{S}^{d-1}$, we have 
\begin{IEEEeqnarray}{rCl}
\left|\mathbf{v}^TA\mathbf{v} - \sum_{i=1} \lambda_i\right| 
 \leq  \frac{b_1}{\sqrt{\eta}}\sigma_w^2\left(\lambda_1^{\xi} (t + d\log 9) + \sqrt{(t+ d \log 9)\sum_{i=1} \left(\lambda_i^{\xi}\right)^2}\right):= \Lambda.\nonumber
\end{IEEEeqnarray}
Thus, with probability $1- 2e^{-t/\eta}$, we have \[\left\|A - \sum_{i=1} \lambda_i I_d \right\|_2 \leq \Lambda.\]
We now further simplify $\Lambda$. Note that when $t \leq d$, $(t + n \log 9) \leq b_2 d$. Hence,
\begin{IEEEeqnarray}{rCl}
\Lambda & \leq &\frac{b_1}{\sqrt{\eta}}\sigma_w^2 \left(b_2\lambda_1^{\xi} d + \sqrt{b_2d \sum_{i=1}^d \left(\lambda_i^{\xi}\right)^2} \right), \nonumber \\
& \leq & \frac{b_1}{\sqrt{\eta}}\sigma_w^2 \left(b_2\lambda_1^{\xi} d + \sqrt{b_2d \lambda_{1}^{\xi}\sum_{i=1}^d \lambda_i^{\xi}} \right),   \nonumber \\
& \leq & \frac{b_1}{\sqrt{\eta}}\sigma_w^2 \left(b_2\lambda_1^{\xi} d + \frac{b_2b_3d\lambda_1^{\xi}}{2} + \frac{1}{2b_3}\sum_{i=1}^d \lambda_i^{\xi} \right) ~~~ (\text{we~use~}\sqrt{xy} \leq \frac{x+y}{2}),\nonumber\\
& = & \frac{1}{\sqrt{\eta}} \sigma_w^2(b_4\lambda_1^{\xi} d + \frac{1}{b_5} \sum_{i=1}^n \lambda_i^{\xi}). \nonumber
\end{IEEEeqnarray}
Therefore, with probability $1- 2 e^{-d/\eta}$, we have:
\begin{IEEEeqnarray}{rCl}
\sigma_w^2\left((1- \frac{1}{b_5\sqrt{\eta}})\sum_{i=1}^d \lambda_i^{\xi} - \frac{b_6}{\sqrt{\eta}}\lambda_1^{\xi}d\right) \leq \mu_{\min}(A) \leq \mu_1(A) \leq \sigma_w^2\left((1+ \frac{1}{b_5\sqrt{\eta}})\sum_{i=1}^d \lambda_i^{\xi} + \frac{b_6}{\sqrt{\eta}}\lambda_1^{\xi}d\right).  \nonumber
\end{IEEEeqnarray}
Using the same proof for $A_k$, we obtain the bound for $\mu_{\min}(A_k)$ and $\mu_1(A_k)$.
\end{proof}

\subsection{Proof of Theorem \ref{theo:var_lower}} \label{sec:lower_proof}
\low*
\begin{proof}
First, recall that we showed the variance can be written as:
\begin{IEEEeqnarray}{rCl}
\mathbf{V}_{R} = \frac{\sigma_0^2}{n}\mathbb{E}_x\left(\textnormal{Tr}\bigg[\mathbf{z}_{x}^T(\frac{1}{n}\mathbf{Z}_{\xi}^T\mathbf{Z}_{\xi})^{\dagger}\mathbf{z}_{x}\bigg]\right). \nonumber
\end{IEEEeqnarray}

By Lemma \ref{lma:zz_ww_diff}, with probability greater than $1-2e^{-s/b_1}-2e^{-d/b_2}$, \[\left\| \frac{1}{n}\mathbf{Z}_{\xi}^T\mathbf{Z}_{\xi} - \mathbf{W}\Sigma_{\xi}\mathbf{W}\right\|_2 = O\left(\sqrt{\frac{d}{n}} d\sigma_w^2\right).\]

By Lemma \ref{lma:w_sigam_w}, with probability greater than $1-e^{-d/\eta}$, \[\mu_1\left(\mathbf{W}\Sigma_{\xi}\mathbf{W}\right) \leq (1+ \frac{1}{b_3\sqrt{\eta}})\sigma_w^2\sum_{i=1}^d \lambda_i^{\xi} + \frac{b_4}{\sqrt{c_2}}\sigma_w^2\lambda_1^{\xi}d \leq b_5d\sigma_w^2.\]

Therefore, it is easy to see that with probability greater than $1-2e^{-s/b_1}-3e^{-d/b_2}$, we have \[\mu_1\left(\frac{1}{n}\mathbf{Z}_{\xi}^T\mathbf{Z}_{\xi} \right) \leq b_5 d\sigma_w^2.\]

Hence, we have 
\begin{IEEEeqnarray}{rCl}
\mathbf{V}_{R} &=& \frac{\sigma_0^2}{n}\mathbb{E}_x\left(\textnormal{Tr}\bigg[\mathbf{z}_{x}^T(\frac{1}{n}\mathbf{Z}_{\xi}^T\mathbf{Z}_{\xi})^{\dagger}\mathbf{z}_{x}\bigg]\right) \nonumber\\
&\geq &  \frac{\sigma_0^2}{n} \frac{1}{b_5d\sigma_w^2}\mathbb{E}_x\left(\textnormal{Tr}\bigg[\mathbf{z}_{x}^T\mathbf{z}_{x}\bigg]\right)\nonumber \\
&\geq & b_6 \frac{\sigma_0^2}{nd\sigma_w^2} \frac{1}{b_7}s\sigma_w^2\mathbb{E}_{x}\left(\|x\|_2^2\right) ~~(\text{Lemma}~\ref{lma:norm_sug})\nonumber \\
&\geq& \frac{1}{b_8}\sigma_0^2\textnormal{Tr}\left(\Sigma\right)\frac{s}{nd}.\nonumber
\end{IEEEeqnarray}
The final result follows by letting $c = \max\{b_1,\dots\}$.

\end{proof}

\section{Proof of Theorem \ref{theo:bias_upper}} \label{sec:bias_proof}
\bias*
\begin{proof}
Recall from Lemma \ref{lma:bia_var} we have $\mathbf{B}_R = \mathbb{E}_{x}\|\mathbf{z}_{x}^T\Pi_{\xi}\beta_*\|^2$, where $\Pi_{\xi} = \left(\mathbf{Z}_{\xi}^T\mathbf{Z}_{\xi}\right)^{\dagger}\mathbf{Z}_{\xi}^T\mathbf{Z}_{\xi} -I_s$.\footnote{Note the $\mathbf{B}_R$ now contains $\Pi_{\xi}$ because our sample covaraites are corrupted with noise $\xi$.} This can be further written as:
\begin{IEEEeqnarray}{rCl}
\mathbf{B}_R &= & \mathbb{E}_{x} \left( \beta_*^T\Pi_{\xi} \mathbf{z}_{x}\mathbf{z}_{x}^T\Pi_{\xi}\beta_*\right), \nonumber\\
&=&\mathbb{E}_{x} \left\{ \beta_*^T\Pi_{\xi} \left(\mathbf{z}_{x}\mathbf{z}_{x}^T- \frac{1}{n}\mathbf{Z}_{\xi}^T\mathbf{Z}_{\xi}\right)\Pi_{\xi}\beta_*\right\},\nonumber\\
& =& \beta_*^T\Pi_{\xi} \bigg\{\mathbb{E}_{x}\left(\mathbf{z}_{x}\mathbf{z}_{x}^T\right) - \frac{1}{n}\sum_{i=1}^n \mathbf{z}_{x_i+\xi_i}\mathbf{z}_{x_i+\xi_i}^T\bigg\}\Pi_{\xi}\beta_*, \label{eqn:bias_eqn}
\end{IEEEeqnarray}
where in the second step, we used the fact that  \[\mathbf{Z}_{\xi}^T\mathbf{Z}_{\xi} \Pi_{\xi} =\mathbf{Z}_{\xi}^T\mathbf{Z}_{\xi}\left(I -(\mathbf{Z}_{\xi}^T\mathbf{Z}_{\xi})^{\dagger}\mathbf{Z}_{\xi}^T\mathbf{Z}_{\xi}\right) = 0.\]
Hence, to be able to find the behavior of $\mathbf{B}_R$, we need to study $\mathbb{E}_{x}\left(\mathbf{z}_{x}\mathbf{z}_{x}^T\right) - \frac{1}{n}\sum_{i=1}^n \mathbf{z}_{x_i+\xi_i}\mathbf{z}_{x_i+\xi_i}^T$. 

We first notice that
\begin{IEEEeqnarray}{rCl}
&&\mathbb{E}_{x}\left(\mathbf{z}_{x}\mathbf{z}_{x}^T\right) - \frac{1}{n}\sum_{i=1}^n \mathbf{z}_{x_i+\xi_i}\mathbf{z}_{x_i+\xi_i}^T \nonumber \\
&=&\mathbb{E}_{x}\left(\mathbf{z}_{x}\mathbf{z}_{x}^T\right) - \frac{1}{n}\sum_{i=1}^n \mathbf{z}_{x_i}\mathbf{z}_{x_i}^T\nonumber\\
&& + \frac{1}{n}\sum_{i=1}^n \mathbf{z}_{x_i}\mathbf{z}_{x_i}^T - \frac{1}{n}\sum_{i=1}^n \mathbf{z}_{x_i+\xi_i}\mathbf{z}_{x_i+\xi_i}^T\nonumber\\
&=& \mathbb{E}_{x}\left(\mathbf{z}_{x}\mathbf{z}_{x}^T\right) - \frac{1}{n}\sum_{i=1}^n \mathbf{z}_{x_i}\mathbf{z}_{x_i}^T \label{eqn:bias_exp}\\
&& + \frac{1}{n}\sum_{i=1}^n \mathbf{z}_{x_i}\mathbf{z}_{x_i}^T - \frac{1}{n}\sum_{i=1}^n \mathbf{z}_{x_i+\xi_i}\mathbf{z}_{x_i}^T \label{eqn:zz_zxi}\\
&& + \frac{1}{n}\sum_{i=1}^n \mathbf{z}_{x_i+\xi_i}\mathbf{z}_{x_i}^T-\frac{1}{n}\sum_{i=1}^n \mathbf{z}_{x_i+\xi_i}\mathbf{z}_{x_i+\xi_i}^T.\label{eqn:zxi_xixi}
\end{IEEEeqnarray}

For Eq.~(\ref{eqn:bias_exp}), we apply Lemma \ref{lma:bern} to obtain that with probability greater than $1-\delta-2e^{-s/b_1}$, we have \begin{eqnarray*}
\left\|\mathbb{E}_{x}\left(\mathbf{z}_{x}\mathbf{z}_{x}^T\right) - \frac{1}{n}\sum_{i=1}^n \mathbf{z}_{x_i}\mathbf{z}_{x_i}^T\right\| \leq b_2\sqrt{\frac{1}{n}\log\frac{s}{\delta}}.
\end{eqnarray*}

For Eq.~(\ref{eqn:zz_zxi}), we have 
\begin{IEEEeqnarray}{rCl}
&&\frac{1}{n}\sum_{i=1}^n \mathbf{z}_{x_i}\mathbf{z}_{x_i}^T - \frac{1}{n}\sum_{i=1}^n \mathbf{z}_{x_i+\xi}\mathbf{z}_{x_i}^T \nonumber\\
&=& \frac{1}{n}\sum_{i=1}^n \left(\mathbf{z}_{x_i}-\mathbf{z}_{x_i+\xi_i}\right)\mathbf{z}_{x_i}^T \nonumber\\
&=& \frac{1}{n} \Delta\mathbf{Z}^T\mathbf{Z}. \nonumber
\end{IEEEeqnarray}
Applying Lemma \ref{lma:deltaz}, we have with probability greater than $1-2e^{-d/b_3}$, \[\left\|\frac{1}{n} \Delta\mathbf{Z}^T\mathbf{Z}\right\| \leq b_4\frac{d^2\sigma_{\xi}}{n}.\]

Eq.~(\ref{eqn:zxi_xixi}) can be written as $\frac{1}{n}\Delta\mathbf{Z}\mathbf{Z}_{\xi}$. We notice that $\mathbf{Z}_{\xi} = \sigma(\mathbf{W}(x_i+\epsilon_i))$, which is similar to $\mathbf{Z} =  \sigma(\mathbf{W}x_i)$, since $x_i + \epsilon_i$ is also a subgaussian random variable by our assumption. Therefore, we apply Lemma \ref{lma:deltaz} similarly and obtain with probability greater than $1-2e^{-d/b_5}$, \[\left\|\frac{1}{n} \Delta\mathbf{Z}^T\mathbf{Z}_{\xi}\right\| \leq b_6\frac{d^2\sigma_{\xi}}{n}.\]

Combining the results together and letting $c = \max\{b_1,\dots\}$, we have with probability greater than $1-\delta-2e^{-s/c}-4e^{-d/c}$, \[\mathbf{B}_R \leq c \left\{\sqrt{\frac{1}{n}\log\frac{s}{\delta}} +\frac{d^2\sigma_{\xi}}{n} \right\}.\]
\end{proof}

\begin{lma}\label{lma:bern}
With probability greater than $1-\delta-2e^{-s/c}$, we have 
\begin{eqnarray*}
\left\|\mathbb{E}_{x}\left(\mathbf{z}_{x}\mathbf{z}_{x}^T\right) - \frac{1}{n}\sum_{i=1}^n \mathbf{z}_{x_i}\mathbf{z}_{x_i}^T\right\| \leq c\sqrt{\frac{1}{n}\log\frac{s}{\delta}},
\end{eqnarray*}
where $\delta \in (0,1)$ and $c$ are some universal constants.
\end{lma}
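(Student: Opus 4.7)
The statement is a matrix concentration bound on the empirical second-moment matrix of the ReLU features, so the natural tool is a (sub-exponential) matrix Bernstein inequality. Conditioning on $\mathbf{W}$, the summands $\mathbf{z}_{x_i}\mathbf{z}_{x_i}^T$ are i.i.d.\ rank-one $s\times s$ random matrices, and the issue is that $\|\mathbf{z}_x\|$ is not almost-surely bounded; however, since $x$ is subgaussian and $\sigma_w^2=1/s$, the quadratic form $\|\mathbf{z}_x\|^2\leq\|\mathbf{W}x\|^2$ turns out to be sub-exponential with $\psi_1$-norm $O(1)$, which is enough.

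The plan is as follows. First, define the centered, i.i.d., mean-zero symmetric matrices
\[
M_i \;:=\; \mathbf{z}_{x_i}\mathbf{z}_{x_i}^T-\mathbb{E}_x\!\left[\mathbf{z}_{x}\mathbf{z}_{x}^T\right], \qquad i=1,\dots,n,
\]
so the target quantity is $\|\tfrac{1}{n}\sum_i M_i\|$. I will condition on the ``good'' event
\[
\mathcal{G}_{\mathbf{W}} \;=\; \Bigl\{\|\mathbf{W}\Sigma\mathbf{W}^T\|_{\mathrm{op}}\leq b_1,\;\; \mathrm{Tr}(\mathbf{W}\Sigma\mathbf{W}^T)\leq b_2\Bigr\},
\]
which by a Wishart-style concentration for $\mathbf{W}$ (essentially a special case of Lemma~\ref{lma:w_sigam_w} applied with the noiseless covariance $\Sigma$ and $\sigma_w^2=1/s$) holds with probability $\geq 1-2e^{-s/c}$; this accounts for the $2e^{-s/c}$ term in the statement.

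Second, on $\mathcal{G}_{\mathbf{W}}$ I control the two ingredients required by sub-exponential matrix Bernstein. For the tail of $\|M_i\|$: since $\|M_i\|\leq \|\mathbf{z}_{x_i}\|^2+\mathbb{E}\|\mathbf{z}_{x}\|^2\leq \|\mathbf{W}x_i\|^2+\mathrm{Tr}(\mathbf{W}\Sigma\mathbf{W}^T)$, Hanson--Wright applied to the subgaussian vector $u_i=\Sigma^{-1/2}x_i$ with matrix $\mathbf{W}\Sigma\mathbf{W}^T$ yields, on $\mathcal{G}_{\mathbf{W}}$, $\|M_i\|_{\psi_1}\leq R$ for some absolute constant $R$. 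For the matrix variance: the operator norm of $\mathbb{E}[M_i^2]$ equals
\[
\bigl\|\mathbb{E}[\|\mathbf{z}_x\|^2\,\mathbf{z}_x\mathbf{z}_x^T]-(\mathbb{E}[\mathbf{z}_x\mathbf{z}_x^T])^2\bigr\|_{\mathrm{op}} \;\leq\; \sup_{\|v\|=1}\mathbb{E}[\|\mathbf{z}_x\|^2(v^T\mathbf{z}_x)^2],
\]
which by Cauchy--Schwarz is at most $\sqrt{\mathbb{E}\|\mathbf{z}_x\|^4}\sqrt{\mathbb{E}(v^T\mathbf{z}_x)^4}$; both factors are $O(1)$ on $\mathcal{G}_{\mathbf{W}}$ by standard subgaussian fourth-moment estimates, giving $\|\mathbb{E}[M_i^2]\|_{\mathrm{op}}\leq \sigma^2$ for an absolute constant $\sigma^2$.

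Third, apply the sub-exponential matrix Bernstein inequality (e.g., Theorem~6.2 of Tropp or Theorem~2.8.2 of Vershynin) to obtain
\[
\Pr\!\Bigl(\bigl\|\tfrac{1}{n}\textstyle\sum_{i=1}^n M_i\bigr\|\geq t \;\Big|\; \mathcal{G}_{\mathbf{W}}\Bigr) \;\leq\; 2s\,\exp\!\left(-c\,n\min\!\left(\frac{t^2}{\sigma^2},\frac{t}{R}\right)\right).
\]
Setting the right-hand side equal to $\delta$ and solving: for $n$ large enough so the sub-gaussian regime dominates, $t\leq c\sqrt{\log(s/\delta)/n}$. A final union bound over $\mathcal{G}_{\mathbf{W}}$ (failure prob.\ $2e^{-s/c}$) and the Bernstein event (failure prob.\ $\delta$) delivers the claim.

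The main obstacle I anticipate is verifying that the conditional sub-exponential norm and matrix variance remain $O(1)$ uniformly in $(s,d)$; this is where the scaling $\sigma_w^2=1/s$ is essential, as it makes $\mathbf{W}\Sigma\mathbf{W}^T$ a bounded operator on $\mathcal{G}_{\mathbf{W}}$ and allows the Hanson--Wright-based bounds to be dimension-free. Everything else is routine application of matrix Bernstein.
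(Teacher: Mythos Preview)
Your proposal is correct and follows essentially the same route as the paper: condition on a high-probability event for $\mathbf{W}$ (which produces the $2e^{-s/c}$ term), bound the operator norm of each summand and the matrix variance proxy using $\|\mathbf{z}_x\|^2\le\|\mathbf{W}x\|^2$ together with $\|\mathbf{W}\Sigma\mathbf{W}^T\|_{\mathrm{op}}=O(1)$, and then apply a matrix Bernstein inequality with dimension parameter $s$. The only notable difference is that you invoke the \emph{sub-exponential} form of matrix Bernstein via a Hanson--Wright $\psi_1$ bound on $\|M_i\|$, whereas the paper applies the \emph{bounded} version (its Lemma~\ref{lma:matx_con}) after establishing $\|R_i\|\le L$ only with high probability; your handling of the unboundedness is arguably cleaner, but the structure and final rate are identical.
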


\begin{proof}
We adopt the matrix concentration inequality in Section \ref{sec:mat_bern}. Define $R_i = \mathbf{z}_{x_i}\mathbf{z}_{x_i}^T $ and $R = \mathbb{E}_{x}\left(\mathbf{z}_{x}\mathbf{z}_{x}^T\right)$ and recall \[\mathbf{z}_{x_i} = [(x_i^T\mathbf{w}_1)\mathbbold{1}_{\{x_i^T\mathbf{w}_1>0 \}}, \dots, (x_i^T\mathbf{w}_s)\mathbbold{1}_{\{x_i^T\mathbf{w}_s >0 \}}]^T = D_{x_i}\mathbf{W}x_i.\] Clearly, we have $\mathbb{E}_{x}(R_i) = R.$ In addition,

\[\|R_i\|_2 = \|\mathbf{z}_{x_i}\mathbf{z}_{x_i}^T\|_2 = \mathbf{z}_{x_i}^T\mathbf{z}_{x_i}.\]
By Lemma \ref{lma:zx_stats}, each entry of $\mathbf{z}_{x_i}$ is i.i.d subgaussian with mean $b_1 \sigma_w\|x_i\|$ and variance $b_2\sigma_w^2\|x_i\|_2^2$. By Lemma \ref{lma:norm_sug}, we have with probability greater than $1- 2e^{-s/b_3}$, 
\begin{eqnarray}
\|R_i\|_2\leq b_4 s \sigma_w^2 \|x_i\|_2^2 = b_4 u_i^T \Sigma u_i, \nonumber
\end{eqnarray}
where we used the assumption that $x_i = \Sigma u_i$. Since $u_i$ is a subgaussian random vector with i.i.d entries, then applying Lemma 36 in \cite{page2019ivanov}, we have with probability greater than $1-2e^{-t}$, 
\begin{eqnarray}
\|R_i\|_2\leq b_4 4t \textnormal{Tr}(\Sigma) \leq b_5t. \label{eqn:Ri_bound}
\end{eqnarray}

$R_i$ is symmetric, so $R_i^TR_i = R_iR_i^T$ and 
\begin{IEEEeqnarray*}{rCl}
R_iR_i^T &=& D_{x_i}\mathbf{W}x_ix_i^T\mathbf{W}^TD_{x_i}D_{x_i}\mathbf{W}x_ix_i^T\mathbf{W}^TD_{x_i}\\
&\preceq & D_{x_i}\mathbf{W}x_ix_i^T\mathbf{W}^T\mathbf{W}x_ix_i^T\mathbf{W}^TD_{x_i}\\
&=& \left\| \mathbf{W}x_i\right\|_2 D_{x_i}\mathbf{W}x_ix_i^T\mathbf{W}^TD_{x_i} \\
& \preceq & b_6s\sigma_w^2\mathbf{W}x_ix_i^T\mathbf{W}^T,
\end{IEEEeqnarray*}
where for the last inequality, we used the fact that $\mathbf{W}x_i$ is a Gaussian random vector with i.i.d entries and Lemma \ref{lma:norm_sug}. We also used that $\mathbf{W}x_ix_i^T\mathbf{W}^T$ is positive semidefinite. We now have \[\mathbb{E}_{x}\left(R_iR_i^T\right) \preceq b_6\mathbf{W} \Sigma\mathbf{W}^T = b_6\sum_{i=1}^d \lambda_i \mathbf{W}_i\mathbf{W}_i^T.\] Recall that we have $d < s$. Applying Lemma \ref{lma:eigen_A_bd}, we have with probability greater than $1-2e^{-s/b_7}-2e^{-d/b_8},$ \[\mathbb{E}_{x}\left(R_iR_i^T\right) \preceq b_6\sigma_w^2  (\sum_{i=1}^d \lambda_i + d\lambda_1)I_s\preceq b_9d\sigma_w^2 I_s \preceq b_9 I_s.\] We are now ready to apply Lemma \ref{lma:matx_con}. With probability greater than $1-2e^{-t}-2e^{-s/b_7}-2e^{-d/b_8}$, we have 
\[P\left(\left\|\mathbb{E}_{x}\left(\mathbf{z}_{x}\mathbf{z}_{x}^T\right) - \frac{1}{n}\sum_{i=1}^n \mathbf{z}_{x_i}\mathbf{z}_{x_i}^T\right\| \geq \epsilon\right) \leq 4s \exp\left(\frac{-n\epsilon^2}{b_9 + 2b_5t\epsilon/3}\right) := \delta.\] 

If we let $t \leq \frac{d}{b_{10}}$ and $ \sqrt{b_9 /n}+ \frac{2b_5t}{3n} \leq  \epsilon  \leq \frac{1}{t}$, and rearrange the above equation, we have with probability greater than $1-\delta-2e^{-d/b_{10}}-2e^{-s/b_7}-2e^{-d/b_8}$,
\begin{eqnarray*}
\left\|\mathbb{E}_{x}\left(\mathbf{z}_{x}\mathbf{z}_{x}^T\right) - \frac{1}{n}\sum_{i=1}^n \mathbf{z}_{x_i}\mathbf{z}_{x_i}^T\right\|\leq b_{11}\sqrt{\frac{1}{n}\log\frac{s}{\delta}}.
\end{eqnarray*}
The final result follows by letting $c = \max\{b_1,\dots\}$
\end{proof}

\begin{lma}\label{lma:deltaz}
Define $\Delta\mathbf{Z} \in \mathbb{R}^{n\times s}$ and $\Delta\mathbf{Z}_{ij}= \mathbf{z}_{x_i,j}-\mathbf{z}_{x_i+\xi_i,j} = \sigma(\mathbf{w}_j^Tx_i)- \sigma(\mathbf{w}_j^T(x_i+\xi_i))$, recall $\mathbf{Z} = [\mathbf{z}_{x_1},\dots,\mathbf{z}_{x_n}]^T$, then we have with probability greater than $1-2e^{-d/c}$, \[\left\|\frac{1}{n} \Delta\mathbf{Z}^T\mathbf{Z}\right\| \leq c\frac{d^2\sigma_{\xi}}{n},\] where $c$ is some universal constant.
\end{lma}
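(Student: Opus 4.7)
The plan is to use submultiplicativity of operator norms, reducing $\|\Delta\mathbf{Z}^T\mathbf{Z}\|_2$ to a product of a norm of $\Delta\mathbf{Z}$ (controlled via the $1$-Lipschitz property of ReLU plus concentration on the noise-weight inner products) and $\|\mathbf{Z}\|_2$ (controlled by specializing the previous two lemmas to the noise-free setting). A union bound then closes the argument with the claimed exponent.

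First, since $|\sigma(a)-\sigma(b)| \leq |a-b|$, I get $|\Delta\mathbf{Z}_{ij}| \leq |\mathbf{w}_j^T\xi_i|$, so that $\|\Delta\mathbf{Z}\|_F^2 \leq \sum_{i=1}^n \|\mathbf{W}\xi_i\|^2 = \sum_{i=1}^n \xi_i^T\mathbf{W}^T\mathbf{W}\xi_i$. Conditional on $\mathbf{W}$, each summand is a subexponential quadratic form with mean $\sigma_\xi^2\|\mathbf{W}\|_F^2$; combining a Hanson-Wright tail bound across the $n$ independent $\xi_i$ with the standard chi-squared concentration $\|\mathbf{W}\|_F^2 \approx sd\sigma_w^2$ yields $\|\Delta\mathbf{Z}\|_F \leq c_1\sqrt{nsd}\,\sigma_w\sigma_\xi$ on an event of probability at least $1 - 2e^{-d/c_1}$.

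Second, I would bound $\|\mathbf{Z}\|_2$ by reading off the proof of Lemma \ref{lma:zz_ww_diff} with $\xi \equiv 0$, giving $\|\tfrac{1}{n}\mathbf{Z}^T\mathbf{Z} - \mathbf{W}\Sigma\mathbf{W}^T\|_2 = O(\sqrt{d/n}\,d\sigma_w^2)$; combined with $\|\mathbf{W}\Sigma\mathbf{W}^T\|_2 \lesssim d\sigma_w^2$ from Lemma \ref{lma:w_sigam_w}, this produces $\|\mathbf{Z}\|_2 \leq c_2\sqrt{nd}\,\sigma_w$. Using $\|\Delta\mathbf{Z}^T\mathbf{Z}\|_2 \leq \|\Delta\mathbf{Z}\|_F \cdot \|\mathbf{Z}\|_2$ and dividing by $n$, the combined estimate is $c_3\, d\sqrt{s}\,\sigma_w^2\sigma_\xi$; under the typical scaling $\sigma_w^2 = 1/s$ this reduces to $c_3\, d\sigma_\xi/\sqrt{s}$, which is dominated by $c\, d^2\sigma_\xi/n$ in the heavy-overparameterization regime $s \geq (n/d)^2$.

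The main obstacle is matching the specific rate $d^2\sigma_\xi/n$ outside the regime $s \geq n^2/d^2$, where the crude Frobenius bound on $\Delta\mathbf{Z}$ is not tight enough. A sharper analysis would use the identity $\sigma(a+b) - \sigma(a) = \mathbbold{1}_{\{a>0\}}\,b + (\mathbbold{1}_{\{a+b>0\}} - \mathbbold{1}_{\{a>0\}})(a+b)$ to split $\Delta\mathbf{Z}$ into a main linear-in-$\xi$ term and a boundary correction. The main term is a sum of rank-one matrices whose factors are independent across $i$, and can be controlled by matrix Bernstein (exploiting $\mathbb{E}[\xi_i\xi_{i'}^T] = 0$ for $i \neq i'$), while the boundary term is small because $\mathbf{w}_j^T x_i$ and $\mathbf{w}_j^T(x_i+\xi_i)$ rarely differ in sign. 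I expect the constant $c$ and the exponent $d/c$ in the probability bound to emerge from this refined concentration argument.
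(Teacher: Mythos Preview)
Your proposal opens exactly as the paper does---submultiplicativity $\|\tfrac{1}{n}\Delta\mathbf{Z}^T\mathbf{Z}\|_2\leq \tfrac{1}{n}\|\Delta\mathbf{Z}\|_2\|\mathbf{Z}\|_2$, then the ReLU Lipschitz bound to reduce $\|\Delta\mathbf{Z}\|_F^2$ to $\textnormal{Tr}(\mathbf{W}\Xi^T\Xi\mathbf{W}^T)$ with $\Xi=[\xi_1,\dots,\xi_n]^T$---but diverges thereafter. The paper does \emph{not} use your boundary-term refinement; it obtains $d^2\sigma_\xi/n$ directly from the crude route. The difference is in how the trace is factored: where you run Hanson--Wright conditional on $\mathbf{W}$ and pick up $n\sigma_\xi^2\|\mathbf{W}\|_F^2$, the paper instead extracts the operator norm of the $d\times d$ noise Gram matrix, writing $\textnormal{Tr}(\mathbf{W}\Xi^T\Xi\mathbf{W}^T)\leq \mu_1(\Xi^T\Xi)\cdot\textnormal{Tr}(\mathbf{W}\mathbf{W}^T)$, and then invokes Lemma~\ref{lma:eigen_A_bd} to claim $\mu_1(\Xi^T\Xi)\leq b_2\, d\sigma_\xi^2$ together with $\textnormal{Tr}(\mathbf{W}\mathbf{W}^T)\leq b_4\, ds\sigma_w^2=b_4 d$. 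This yields $\|\Delta\mathbf{Z}\|_2\leq b_6\, d\sigma_\xi$, free of $n$ and $s$. For $\|\mathbf{Z}\|_2$ the paper likewise goes through the Frobenius norm (not Lemma~\ref{lma:zz_ww_diff}): $\|\mathbf{Z}\|_F^2\leq \mu_1(U^TU)\cdot\textnormal{Tr}(\mathbf{W}\Sigma\mathbf{W}^T)$, with $\mu_1(U^TU)\leq b_8 d$ again from Lemma~\ref{lma:eigen_A_bd} and $\textnormal{Tr}(\mathbf{W}\Sigma\mathbf{W}^T)\leq b_{10}d$, giving $\|\mathbf{Z}\|_2\leq b_{12}d$. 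Multiplying delivers the stated rate.

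So the paper's shortcut is to bound the spectra of the $d\times d$ Gram matrices $\Xi^T\Xi$ and $U^TU$ by $O(d)$ via Lemma~\ref{lma:eigen_A_bd} as stated (its displayed bound sums only to $N=\min(d,n)=d$). Your Hanson--Wright calculation is the natural one---since $\mathbb{E}[\Xi^T\Xi]=n\sigma_\xi^2 I_d$, one would expect $\mu_1(\Xi^T\Xi)$ of order $n\sigma_\xi^2$, not $d\sigma_\xi^2$---which is exactly why your crude bound lands at $d\sigma_\xi/\sqrt{s}$ rather than $d^2\sigma_\xi/n$. Your sign-change refinement is therefore a genuinely different (and more involved) route; the paper's argument avoids it entirely, but at the cost of reading Lemma~\ref{lma:eigen_A_bd} with the truncated sum $\sum_{i\leq N}\lambda_i$, a step you may wish to scrutinize given that the lemma's own proof centers the quadratic form around $\sum_{i=1}^n\lambda_i$.
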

\begin{proof}
It is easy to see that \[\left\|\frac{1}{n} \Delta\mathbf{Z}^T\mathbf{Z} \right\| \leq \frac{1}{n} \|\Delta\mathbf{Z}\|_2 \|\mathbf{Z}\|_2.\]

For $\|\Delta\mathbf{Z}\|_2$, we notice that 
\begin{IEEEeqnarray*}{rCl}
\|\Delta\mathbf{Z}\|_2 &\leq& \sqrt{\textnormal{Tr}(\Delta\mathbf{Z}^T\Delta\mathbf{Z})}\\
& =& \sqrt{\sum_{i=1}^n\sum_{j=1}^s \Delta\mathbf{Z}_{ij}^2}\\
&\leq & \sqrt{\sum_{i=1}^n\sum_{j=1}^s (2\mathbf{w}_j^T\xi_i)^2}\\
&=& 2\sqrt{\textnormal{Tr} \left(\mathbf{W}\Xi^T\Xi\mathbf{W}^T\right)}.
\end{IEEEeqnarray*}
The first inequality is because the operator norm is no larger than the Frobenius norm. The third inequality is a property of the ReLU function where $|\sigma(X+Y)-\sigma(X)| \leq 2|Y|$. Finally we use $\Xi$ to denote the matrix $[\xi_1,\dots,\xi_n]^T$.

We can write $\Xi^T\Xi = \sum_{i=1}^n \xi_i\xi_i^T$. Applying Lemma \ref{lma:eigen_A_bd} and noticing $d < n$, we have with probability greater than $1- 2e^{-d/b_1}$, \[\mu_1(\Xi^T\Xi ) \leq b_2 d\sigma_{\xi}^2.\]

Hence, 
\begin{IEEEeqnarray*}{rCl}
\|\Delta\mathbf{Z}\|_2 &\leq& b_2\sqrt{d\sigma_{\xi}^2\textnormal{Tr} \left(\mathbf{W}\mathbf{W}^T\right)}\\
&=& b_2\sqrt{d\sigma_{\xi}^2\sum_{i=1}^d \|\mathbf{W}_i\|_2^2}.
\end{IEEEeqnarray*}

Note that $\sum_{i=1}^d \|\mathbf{W}_i\|_2^2$ is a weighted sum of $\sigma_w^2$-subexponential random variables with weights all equal to $1$ given in block size of $d$. Hence we apply Lemma \ref{lma: sub_exp_sum}, and obtain that with probability greater than $1-2e^{-d/b_3}$, \[\sum_{i=1}^d \|\mathbf{W}_i\|_2^2 \leq b_4ds\sigma_w^2 = b_4d.\]

Combining the results together, we have with probability greater than $1-4e^{-d/b_5}$, \[\|\Delta\mathbf{Z}\|_2 \leq b_6 d \sigma_{\xi}.\]

Similarly, 
\begin{IEEEeqnarray*}{rCl}
\|\mathbf{Z}\|_2 &\leq& \sqrt{\textnormal{Tr}\left( \mathbf{Z}^T\mathbf{Z}\right)}\\
&\leq & \sqrt{\sum_{i=1}^n\sum_{j=1}^s (\sigma(\mathbf{w}_j^Tx_i))^2}\\
&\leq & \sqrt{\sum_{i=1}^n\sum_{j=1}^s (\mathbf{w}_j^Tx_i)^2}\\
&\leq & \sqrt{\textnormal{Tr}\left(\mathbf{W}X^TX\mathbf{W}^T\right)}\\
&= &\sqrt{\textnormal{Tr}\left(\mathbf{W}\Sigma^{1/2}U^TU\Sigma^{1/2}\mathbf{W}^T\right)},
\end{IEEEeqnarray*}
where we have defined $U = [u_1,\dots,u_n]^T \in \mathbb{R}^{n\times d}$. Similar to the study of $\Xi^T\Xi$, we apply Lemma \ref{lma:eigen_A_bd} to $U^TU$ so that with probability greater than $1-2e^{-d/b_7}$, we have \[\mu_1(U^TU) \leq b_8d.\] We write $\mathbf{W}\Sigma\mathbf{W}^T = \sum_{i=1}^d \lambda_i \mathbf{W}_i\mathbf{W}_i$, then $\textnormal{Tr}(\mathbf{W}\Sigma\mathbf{W}^T) = \sum_{i=1}^d\|\mathbf{W}_i\|_2^2$. $\sum_{i=1}^d\|\mathbf{W}_i\|_2^2$ is the weighted sum of $\sigma_w^2$-subexponential random variable with weights given by $\lambda_i$ in block size of d. Hence applying Lemma \ref{lma: sub_exp_sum}, we have with probability greater than $1- 2e^{-d/b_9}$, \[\sum_{i=1}^d\|\mathbf{W}_i\|_2^2 \leq b_{10}ds\sigma_w^2 \leq b_{10}d.\]

Combining  the results together, we have with probability greater than $1-4e^{-d/{b_{11}}}$, \[\|\mathbf{Z}\|_{2} \leq b_{12}d.\]

Take $c = \max\{b_1,\dots\}$, we have with probability greater than $1-2e^{-d/c}$, \[\left\|\frac{1}{n} \Delta\mathbf{Z}^T\mathbf{Z}\right\| \leq c \frac{d^2\sigma_{\xi}}{n}.\]
\end{proof}

\section{Concentration Inequality for Subgaussian and Subexponential Random Variables}\label{sec:con_sub}
The next lemma, known as the {\em General Hoeffding's inequality}, is from Theorem 2.6.2 \& 2.6.3 in \cite{vershynin2018high} with some light modifications.
\begin{lma}\label{lma:gen_hoe}
Let $x_{1}, \ldots, x_{n}$ be independent, zero  mean, $\sigma_{x_i}^2$-subgaussian random variables, and let $c,c' >0$ be constants. Then, for every $t \geq 0,$ we have
\[\mathbb{P}\left\{\left|\sum_{i=1}^{n} x_{i}\right| \geq t\right\} \leq 2 \exp \left(-\frac{c t^{2}}{\sum_{i=1}^{n}\sigma_{x_i}^2}\right).\] Equivalently, with probability greater than $1- 2e^{-t}$, we have 
\[\left|\sum_{i=1}^{n} x_{i}\right| \leq c\sqrt{t\sum_{i=1}^n\sigma_{x_i}^2}.\]
In addition, if $\mathbf{a} = [a_1,\dots,a_n]^T \in \mathbb{R}^n$, and letting $\sigma_x^2 = \max_{i \in [n]}\{\sigma_{x_i}^2\}$, then for every $t >0$, we have, \[\mathbb{P}\left\{\left|\sum_{i=1}^{n} a_ix_{i}\right| \geq t\right\} \leq 2 \exp \left(-\frac{c t^{2}}{\sigma_{x}^2\|\mathbf{a}\|_2^2}\right).\] Equivalently, with probability greater than $1- 2e^{-t}$, we have \[\left|\sum_{i=1}^{n}a_i x_{i}\right| \leq c\sqrt{t\sigma_{x}^2\|\mathbf{a}\|_2^2}.\]
\end{lma}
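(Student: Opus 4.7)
The plan is to prove this via the classical Chernoff bounding technique together with the moment generating function (MGF) characterization of subgaussian random variables. The result is standard, so the proposal is essentially to reproduce the textbook argument from \cite{vershynin2018high}, adapted to the constants stated in the lemma.

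First, I would invoke the standard equivalence: a zero-mean, $\sigma_{x_i}^2$-subgaussian random variable $x_i$ satisfies
\begin{equation*}
\mathbb{E}\bigl[\exp(\lambda x_i)\bigr] \;\leq\; \exp\bigl(c_0 \lambda^2 \sigma_{x_i}^2\bigr) \qquad \forall \lambda \in \mathbb{R},
\end{equation*}
for some absolute constant $c_0 > 0$. Using the independence of $x_1,\dots,x_n$, the MGF of the sum $S_n := \sum_{i=1}^n x_i$ factorizes, giving
\begin{equation*}
\mathbb{E}\bigl[\exp(\lambda S_n)\bigr] \;=\; \prod_{i=1}^n \mathbb{E}\bigl[\exp(\lambda x_i)\bigr] \;\leq\; \exp\Bigl(c_0 \lambda^2 \sum_{i=1}^n \sigma_{x_i}^2\Bigr).
\end{equation*}

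Next, I would apply Markov's inequality to $\exp(\lambda S_n)$ to obtain the Chernoff bound
\begin{equation*}
\mathbb{P}(S_n \geq t) \;\leq\; \exp\Bigl(-\lambda t + c_0 \lambda^2 \sum_{i=1}^n \sigma_{x_i}^2\Bigr),
\end{equation*}
and minimize the exponent in $\lambda > 0$ by choosing $\lambda^\star = t/\bigl(2 c_0 \sum_i \sigma_{x_i}^2\bigr)$. This yields $\mathbb{P}(S_n \geq t) \leq \exp\bigl(-c t^2 / \sum_i \sigma_{x_i}^2\bigr)$ with $c = 1/(4 c_0)$. Applying the identical argument to $-S_n$ (which is also a sum of independent zero-mean subgaussians with the same parameters) and taking a union bound supplies the two-sided statement with the factor of $2$ in front of the exponential. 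Rearranging the tail bound with $t \mapsto c\sqrt{t \sum_i \sigma_{x_i}^2}$ produces the equivalent high-probability formulation.

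For the weighted version, I would simply observe that if $x_i$ is $\sigma_{x_i}^2$-subgaussian, then $a_i x_i$ is $a_i^2 \sigma_{x_i}^2$-subgaussian (the MGF inequality scales accordingly), so $\sum_i a_i x_i$ satisfies the hypotheses of the unweighted version with variance proxies $a_i^2 \sigma_{x_i}^2$. Replacing $\sum_i \sigma_{x_i}^2$ by $\sum_i a_i^2 \sigma_{x_i}^2 \leq \sigma_x^2 \|\mathbf{a}\|_2^2$ in the previous display yields the stated bound. There is no real obstacle here—the only mild book-keeping issue is that the "$c$" appearing in the lemma statement absorbs the factor $c_0$ from the MGF characterization, so one must be careful that the same constant $c$ works uniformly for both the tail and high-probability formulations, which is accomplished by taking $c$ to be the maximum of the constants produced in each step.
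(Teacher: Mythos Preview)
Your proposal is correct and matches the paper's approach exactly: the paper does not give its own proof but simply cites Theorems~2.6.2 and~2.6.3 of \cite{vershynin2018high}, and your Chernoff/MGF argument is precisely the textbook proof found there. The only remark is that the paper treats this lemma as a black-box citation, so your written-out argument actually supplies more detail than the paper itself.
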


Below is the concentration result for subexponential random varaible from Corollary S.6 in \cite{bartlett2020benign}.

\begin{lma}\label{lma: sub_exp_sum}
Let $\{\lambda_i\}_{i=1}^{\infty}$ be a sequence of non-increasing and non-negative numbers such that $\sum_{i=1}^{\infty} \lambda_i < \infty$. In addition, suppose we have a sequence of i.i.d centered, $\sigma$-subexponential random variables $\{\xi_i\}_{i=1}^{\infty}$. Then there is a universal constant $a$ such that for probability greater than $1-2e^{-t}$, $t >0$,
\[\left|\sum_{i} \lambda_i\xi_i \right| \leq a\sigma \max\left(\lambda_1 t, \sqrt{t\sum_i\lambda_i^2} \right).\]
\end{lma}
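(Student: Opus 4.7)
The plan is to establish this weighted Bernstein-type inequality via the classical Chernoff/MGF approach, exploiting the monotonicity of $\{\lambda_i\}$ to handle the constraint on the Chernoff parameter. Recall the standard equivalent characterization: a centered $\sigma$-subexponential random variable $\xi$ satisfies $\mathbb{E}[\exp(\mu \xi)] \leq \exp(C_2 \sigma^2 \mu^2)$ for all $|\mu| \leq C_1/\sigma$, for universal constants $C_1, C_2 > 0$. First I would reduce to a finite partial sum $S_N = \sum_{i=1}^N \lambda_i \xi_i$, prove the tail bound uniformly in $N$, and then pass $N \to \infty$ using the fact that $\sum \lambda_i^2 \leq \lambda_1 \sum \lambda_i < \infty$ ensures $S_N \to \sum_i \lambda_i \xi_i$ in $L^2$ (hence in probability), transferring the tail bound to the limit.

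For the partial-sum step, by independence and the subexponential MGF bound, for any $\mu > 0$ with $|\mu \lambda_i| \leq C_1/\sigma$ for every $i \leq N$,
\[
\mathbb{E}\!\left[\exp\!\left(\mu \sum_{i=1}^N \lambda_i \xi_i\right)\right] = \prod_{i=1}^N \mathbb{E}[\exp(\mu \lambda_i \xi_i)] \leq \exp\!\left(C_2 \sigma^2 \mu^2 \sum_{i=1}^N \lambda_i^2\right).
\]
Since $\lambda_1 \geq \lambda_i$ for all $i$, the constraint on $\mu$ reduces to the single condition $\mu \leq C_1/(\sigma \lambda_1)$. The Chernoff bound then gives
\[
\mathbb{P}(S_N \geq t') \leq \exp\!\left(-\mu t' + C_2 \sigma^2 \mu^2 \textstyle\sum_i \lambda_i^2\right).
\]

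Next I would optimize $\mu$ subject to $\mu \leq C_1/(\sigma \lambda_1)$. The unconstrained optimizer is $\mu^\star = t'/(2C_2 \sigma^2 \sum_i \lambda_i^2)$, producing two regimes. In the sub-Gaussian regime ($\mu^\star \leq C_1/(\sigma \lambda_1)$), plugging in $\mu^\star$ yields
\[
\mathbb{P}(S_N \geq t') \leq \exp\!\left(-\tfrac{t'^2}{4 C_2 \sigma^2 \sum_i \lambda_i^2}\right).
\]
In the sub-exponential regime, taking $\mu = C_1/(\sigma \lambda_1)$ gives
\[
\mathbb{P}(S_N \geq t') \leq \exp\!\left(-\tfrac{C_1 t'}{2 \sigma \lambda_1}\right).
\]
Setting each right-hand side equal to $e^{-t}$ and inverting yields the deviation bound $t' \leq a\sigma \max(\lambda_1 t,\, \sqrt{t \sum_i \lambda_i^2})$ for a universal constant $a$ absorbing $C_1, C_2$; symmetry (applying the same argument to $-\xi_i$) and a union bound over the two tails introduce the factor of $2$ in the final probability.

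The main obstacle is the bookkeeping in the two-regime optimization and ensuring the constant $a$ is chosen to dominate both cases simultaneously via the $\max$. A minor subtlety is the passage from finite $N$ to the infinite sum: since the tail bound is independent of $N$, and $S_N$ converges in probability to $\sum_i \lambda_i \xi_i$, the limiting bound follows from lower semicontinuity of probability on open half-lines (or Fatou applied to $\mathbb{1}\{|S_N| \geq t'\}$). Everything else is routine MGF manipulation enabled by the monotonicity assumption on $\lambda_i$, which is precisely what makes the single-parameter constraint $\mu \lambda_1 \leq C_1/\sigma$ suffice for the whole product.
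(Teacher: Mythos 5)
Your proposal is correct, and it is essentially the canonical argument behind this lemma: the paper itself offers no proof, importing the statement verbatim as Corollary S.6 of \cite{bartlett2020benign}, whose underlying derivation is exactly the Bernstein-type Chernoff/MGF argument you give (product of subexponential moment generating functions valid for $\mu \leq C_1/(\sigma\lambda_1)$ by monotonicity of the weights, two-regime optimization, inversion to the $\max$ form, symmetrization for the two-sided bound). Your additional care in passing from finite partial sums to the infinite series via $L^2$ convergence and a subsequence/Fatou argument is sound and is in fact a point the cited source glosses over, so your write-up is, if anything, more self-contained than the paper's treatment.
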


The following lemma gives the lower and upper bounds for the norm of a subgaussian random vector. 
\begin{lma}\label{lma:norm_sug}
Let $\mathbf{u} \in \mathbb{R}^n$ be a random vector with each coordinate being an i.i.d. mean $\mu$, variance $\sigma^2$, subgaussian random variable. Then with probability greater than $1- 2e^{-t}$, for a universal constant $a_1$,  \[n\left(\mu^2 + \sigma^2\right) - a_1\sigma^2 \left(t+ \sqrt{nt}\right) \leq \|\mathbf{u}\|^2 \leq n\left(\mu^2 + \sigma^2\right) + a_1\sigma^2 \left(t+ \sqrt{nt}\right).\] In particular, if $t <\frac{n}{a_2}$ for some sufficiently large constant $a_2$ and a universal constant $a_3$ , we have \[\frac{1}{a_3} n(\mu^2+\sigma^2)\leq \|\mathbf{u}\|^2 \leq  a_3 n(\mu^2+\sigma^2).\] 
\end{lma}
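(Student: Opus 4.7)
}
The plan is to express $\|\mathbf{u}\|^2$ as a sum of i.i.d.\ subexponential random variables, center them, and apply the subexponential concentration bound in Lemma \ref{lma: sub_exp_sum} with all weights equal. First I would write
\[
\|\mathbf{u}\|^2 = \sum_{i=1}^{n} u_i^2,
\qquad
\mathbb{E}[u_i^2] = \mu^2 + \sigma^2,
\]
so that the centered partial sum is
\[
\|\mathbf{u}\|^2 - n(\mu^2+\sigma^2) \;=\; \sum_{i=1}^{n} \xi_i,
\qquad \xi_i := u_i^2 - (\mu^2+\sigma^2).
\]
The $\xi_i$ are i.i.d., centered, and since each $u_i$ is $\sigma$-subgaussian, each $\xi_i$ is subexponential with $\|\xi_i\|_{\psi_1} \leq c_0 \sigma^2$ by the standard fact that the square of a subgaussian random variable is subexponential (e.g.\ Vershynin Lemma~2.7.6), under the usual convention that the subgaussian parameter is of order $\sigma$ and $|\mu| = O(\sigma)$.

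Next I would apply Lemma \ref{lma: sub_exp_sum} with the finite sequence $\lambda_i = 1$ for $1 \le i \le n$ and $\lambda_i = 0$ otherwise, so that $\lambda_1 = 1$, $\sum_i \lambda_i^2 = n$, and the subexponential parameter plays the role of $c_0 \sigma^2$. This yields, with probability at least $1 - 2e^{-t}$,
\[
\left| \sum_{i=1}^{n} \xi_i \right| \;\leq\; a\, c_0 \sigma^2 \, \max\!\left( t, \sqrt{nt} \right) \;\leq\; a_1 \sigma^2 \bigl( t + \sqrt{nt} \bigr).
\]
Rearranging gives the two-sided bound in the first display of the lemma.

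For the ``in particular'' claim, the strategy is to choose $a_2$ large enough so that the deviation is dominated by $\tfrac{1}{2} n(\mu^2+\sigma^2)$. Concretely, if $t \leq n/a_2$ with $a_2 \geq 1$, then $t \leq \sqrt{nt}$ and $\sqrt{nt} \leq n/\sqrt{a_2}$, hence
\[
a_1 \sigma^2 (t + \sqrt{nt}) \;\leq\; \frac{2 a_1}{\sqrt{a_2}} \, n \sigma^2 \;\leq\; \frac{2 a_1}{\sqrt{a_2}} \, n(\mu^2+\sigma^2).
\]
Picking $a_2$ so that $2a_1/\sqrt{a_2} \leq 1/2$ yields $\tfrac{1}{2}\,n(\mu^2+\sigma^2) \leq \|\mathbf{u}\|^2 \leq \tfrac{3}{2}\,n(\mu^2+\sigma^2)$, so one may set $a_3 = 2$.

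The only subtlety I would flag is the conversion from ``mean $\mu$, variance $\sigma^2$ subgaussian'' to ``$\xi_i$ has subexponential norm $\lesssim \sigma^2$.'' If one does not treat $|\mu| \leq C\sigma$ as part of the subgaussian convention, one should split $u_i = (u_i - \mu) + \mu$ and write $\xi_i = (u_i-\mu)^2 - \sigma^2 + 2\mu(u_i-\mu)$, then bound the quadratic part via Lemma \ref{lma: sub_exp_sum} (yielding $O(\sigma^2(t + \sqrt{nt}))$) and the linear part via the generalized Hoeffding bound Lemma \ref{lma:gen_hoe} (yielding $O(|\mu|\sigma \sqrt{nt})$), absorbing the cross term using $2|\mu|\sigma \leq \mu^2+\sigma^2$ if necessary. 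Everything else is routine algebra; the only mildly delicate step is this subexponential norm estimate.
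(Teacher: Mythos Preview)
Your proposal is correct and follows essentially the same route as the paper: center $u_i^2$ to obtain $\xi_i = u_i^2 - (\mu^2+\sigma^2)$, apply Lemma~\ref{lma: sub_exp_sum} with unit weights to get the $\sigma^2(t+\sqrt{nt})$ deviation, and then choose $t \le n/a_2$ with $a_2$ large enough for the second display. Your flagged subtlety about the subexponential norm of $\xi_i$ (and the optional split via $u_i = (u_i-\mu)+\mu$) is in fact more careful than the paper, which simply asserts that $\xi_i$ is $\sigma^2$-subexponential without further comment.
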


\begin{proof}
Notice that $\|\mathbf{u}\|^2 = \sum_{i=1}^n u_i^2$. Let $\xi_i = u_i^2 - (\mu^2+\sigma^2)$, then we can see that $\xi_i$ is $\sigma^2$-subexponential with mean $0$.  Applying Lemma \ref{lma: sub_exp_sum}, we have with probability greater than $1- 2e^{-t}$,\[\left|\sum_{i=1}^n \xi_i \right| \leq b_1\sigma^2\max\left(t,\sqrt{tn}\right).\] This implies that \[\left|\sum_{i=1}^nu_i^2 - n(\mu^2+\sigma^2)\right| \leq b_1\sigma^2(t+\sqrt{nt}),\] which further implies the first argument. For the second one, we let $t \leq \frac{n}{b_2}$, rearranging the terms to obtain \[n(\mu^2+\sigma^2) - b_1n\sigma^2(\frac{1}{b_2}+\sqrt{1/b_2}) \leq \sum_{i=1}^nu_i^2 \leq n(\mu^2+\sigma^2) + b_1n\sigma^2(\frac{1}{b_2}+\sqrt{1/b_2}).\] The second argument follows easily from the above equation. Note that for the lower bound, we require choosing sufficiently large $b_2$ such that $b_1/\sqrt{b_2} < 1$ to ensure that the lower bound is positive.
\end{proof}

Lemma \ref{lma:e-net} is from \cite[Lemma S.8]{bartlett2020benign}.
\begin{lma} \label{lma:e-net} $(\epsilon \text{-net argument}) $  
Let $A \in \mathbb{R}^{n \times n}$ be a symmetric matrix, and $\mathcal{N}_{\epsilon}$ be an $\epsilon$-net on the unit sphere $\mathcal{S}^{n-1}$ with $\epsilon<\frac{1}{2}$. Then we have \[\|A\| \leq(1-\epsilon)^{-2} \max _{\mathbf{a} \in \mathcal{N}_{\epsilon}}\left|\mathbf{a}^{T} A \mathbf{a}\right|.\]
\end{lma}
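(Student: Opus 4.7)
The plan is to reduce the operator norm to a quadratic-form supremum, then show that the supremum is well approximated by its values on the net.

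The first step is to invoke the standard spectral fact that for a symmetric matrix the operator norm coincides with the quadratic-form supremum:
\[
\|A\| \;=\; \sup_{\mathbf{x}\in\mathcal{S}^{n-1}} |\mathbf{x}^{T} A \mathbf{x}|.
\]
This follows from the spectral theorem: diagonalise $A = Q \Lambda Q^{T}$, substitute $\mathbf{y} = Q^{T}\mathbf{x}$, and observe that the sup of $|\sum_i \lambda_i y_i^2|$ over $\|\mathbf{y}\|=1$ equals $\max_i |\lambda_i| = \|A\|$. By compactness, the supremum is attained at some unit vector $\mathbf{x}^{*}$ with $|\mathbf{x}^{*T} A \mathbf{x}^{*}| = \|A\|$.

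The second step is the $\epsilon$-approximation. Pick $\mathbf{a} \in \mathcal{N}_{\epsilon}$ with $\|\mathbf{x}^{*}-\mathbf{a}\|\le \epsilon$ and set $\mathbf{h} := \mathbf{x}^{*}-\mathbf{a}$. Using symmetry, expand
\[
\mathbf{x}^{*T} A \mathbf{x}^{*} \;=\; \mathbf{a}^{T} A \mathbf{a} \;+\; 2\,\mathbf{a}^{T} A \mathbf{h} \;+\; \mathbf{h}^{T} A \mathbf{h}.
\]
Bound the two error terms by the operator norm itself: $|\mathbf{a}^{T} A \mathbf{h}| \le \|A\|\,\|\mathbf{a}\|\,\|\mathbf{h}\| \le \epsilon\|A\|$, and $|\mathbf{h}^{T} A \mathbf{h}| \le \|A\|\,\|\mathbf{h}\|^{2}\le \epsilon^{2}\|A\|$ (the latter because $\mathbf{h}/\|\mathbf{h}\|$ is a unit vector, so the quadratic-form identity from step one applies to it as well).

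Combining, we get
\[
\|A\| \;=\; |\mathbf{x}^{*T}A\mathbf{x}^{*}| \;\le\; \max_{\mathbf{a}\in\mathcal{N}_{\epsilon}} |\mathbf{a}^{T} A \mathbf{a}| \;+\; (2\epsilon+\epsilon^{2})\|A\|,
\]
so that $\|A\|\bigl(1-2\epsilon-\epsilon^{2}\bigr) \le \max_{\mathbf{a}\in\mathcal{N}_{\epsilon}} |\mathbf{a}^{T} A \mathbf{a}|$, and for $\epsilon<\tfrac12$ one can rearrange and absorb the $\epsilon^{2}$ term into a $(1-\epsilon)^{-2}$ prefactor to obtain the stated inequality. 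There is really no hard step here; the only subtlety is ensuring the quadratic-form characterisation of $\|A\|$ (which is where symmetry is used) so that both the middle term and the $\mathbf{h}^{T}A\mathbf{h}$ term can be controlled by $\|A\|$ itself, allowing a self-bounding rearrangement.
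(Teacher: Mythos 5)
Your opening steps are correct: the quadratic-form characterization of $\|A\|$ for symmetric $A$, the compactness argument, the expansion $\mathbf{x}^{*T}A\mathbf{x}^{*}=\mathbf{a}^{T}A\mathbf{a}+2\mathbf{a}^{T}A\mathbf{h}+\mathbf{h}^{T}A\mathbf{h}$, and the bounds $|\mathbf{a}^{T}A\mathbf{h}|\le\epsilon\|A\|$, $|\mathbf{h}^{T}A\mathbf{h}|\le\epsilon^{2}\|A\|$ are all fine. The genuine gap is the last sentence. Your rearrangement gives $\|A\|\le(1-2\epsilon-\epsilon^{2})^{-1}\max_{\mathbf{a}\in\mathcal{N}_{\epsilon}}|\mathbf{a}^{T}A\mathbf{a}|$, and you assert this can be ``absorbed'' into the stated $(1-\epsilon)^{-2}$ prefactor. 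The inequality goes the wrong way: $(1-\epsilon)^{2}-(1-2\epsilon-\epsilon^{2})=2\epsilon^{2}>0$, so $(1-2\epsilon-\epsilon^{2})^{-1}>(1-\epsilon)^{-2}$ for every $\epsilon\in(0,\tfrac12)$ — your constant is strictly weaker than the one claimed, not stronger. Worse, for $\epsilon\in[\sqrt{2}-1,\tfrac12)$ the factor $1-2\epsilon-\epsilon^{2}$ is nonpositive and the self-bounding rearrangement yields nothing, so part of the lemma's stated range is not covered at all. Even the sharper generic grouping $\mathbf{x}^{*T}A\mathbf{x}^{*}-\mathbf{a}^{T}A\mathbf{a}=\langle A\mathbf{x}^{*},\mathbf{x}^{*}-\mathbf{a}\rangle+\langle A(\mathbf{x}^{*}-\mathbf{a}),\mathbf{a}\rangle$, which avoids the quadratic term, only gives $(1-2\epsilon)^{-1}$, still strictly larger than $(1-\epsilon)^{-2}$. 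No perturbation argument treating $\mathbf{x}^{*}$ as a generic unit vector can reach the stated constant, because such arguments incur a first-order loss $2\epsilon\|A\|$ from the cross term.

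The missing idea (and the route taken in the source the paper cites — the lemma is imported from Bartlett et al.\ 2020, Lemma S.8, with no in-paper proof) is to use that the maximizer is an \emph{eigenvector}, which makes the cross term vanish and the loss second order in $\epsilon$. Concretely: replacing $A$ by $-A$ if necessary (neither side changes), let $A\mathbf{v}=\|A\|\mathbf{v}$ with $\|\mathbf{v}\|=1$, pick $\mathbf{a}\in\mathcal{N}_{\epsilon}$ with $\|\mathbf{a}-\mathbf{v}\|\le\epsilon$, and write $\mathbf{a}=c\mathbf{v}+\mathbf{w}$ with $\mathbf{w}\perp\mathbf{v}$. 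Then $2(1-c)=\|\mathbf{a}-\mathbf{v}\|^{2}\le\epsilon^{2}$ gives $1-c^{2}\le\epsilon^{2}$, and since $\mathbf{v}^{T}A\mathbf{w}=\|A\|\langle\mathbf{v},\mathbf{w}\rangle=0$,
\[
\mathbf{a}^{T}A\mathbf{a}=c^{2}\|A\|+\mathbf{w}^{T}A\mathbf{w}\;\ge\;\bigl(c^{2}-(1-c^{2})\bigr)\|A\|\;\ge\;(1-2\epsilon^{2})\|A\|\;\ge\;(1-\epsilon)^{2}\|A\|,
\]
the last step because $1-2\epsilon^{2}\ge 1-2\epsilon+\epsilon^{2}$ iff $2\epsilon\ge 3\epsilon^{2}$, which holds for all $\epsilon\le\tfrac23$, hence on the whole range $\epsilon<\tfrac12$. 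This yields the lemma with its stated constant. For the paper's actual applications (the net radius is $\omega=\tfrac14$ and all constants are absorbed into universal constants), your weaker bound $(1-2\epsilon-\epsilon^{2})^{-1}=\tfrac{16}{7}$ would in fact suffice; but as a proof of the lemma as stated, the proposal does not close.
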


\section{Concentration Inequality for a Subgaussian Matrix}\label{sec:eigen_A}
The following lemma provides the upper and lower bounds for the eigenspectrum of a subgaussian random matrix. The proof of this lemma closely follows \cite{bartlett2020benign}.
\begin{lma}\label{lma:eigen_A_bd}
Let $A = \sum_{i=1}^n \lambda_i \mathbf{w}_i \mathbf{w}_i^T$, where $\mathbf{w}_i \in \mathbb{R}^{s}$ is a random vector with each entry being i.i.d., mean $0$, variance $1$ and $\sigma_w^2$-subgaussian random variable. Then if we let $N = \min\{s,n\}$, there is a universal constant $a_1$ such that with probability at least $1-2e^{-t}$, we have 
\begin{IEEEeqnarray}{rCl}
\sum_{i=1}^{N} \lambda_{i}-\Lambda \leq \mu_{N}(A) \leq \mu_{1}(A) \leq \sum_{i=1}^{N} \lambda_{i}+\Lambda, \label{eigen_concen_1}
\end{IEEEeqnarray}
where
\[\Lambda=a_1\left(\lambda_{1}(t+N \log 9)+\sqrt{(t+N \log 9) \sum_{i=1}^{N} \lambda_{i}^{2}}\right).\]
Further, there is a universal constant $a_2$ such that with probability at least $1-2e^{-N/a_2}$,
\begin{IEEEeqnarray}{rCl}
\frac{1}{a_2}\sum_{i=1}^{N} \lambda_{i}-&a_2& \lambda_{1} N \leq \mu_{N}(A) \leq \mu_{1}(A) \leq a_2\sum_{i=1}^{N} \lambda_{i}+ a_2\lambda_{1} N.  \label{eigen_concen_2}
\end{IEEEeqnarray}
In addition, with the same probability bound, we have 
\begin{IEEEeqnarray}{rCl}
\frac{1}{a_2}\sum_{i> k}^{N} &\lambda_{i}-&a_2  \lambda_{k+1} N \leq \mu_{N}(A_k) \leq \mu_{1}(A_k) \leq a_2\sum_{i>k}^{N} \lambda_{i}+ a_2\lambda_{k+1}N.  \label{eigen_concen_3}
\end{IEEEeqnarray}
\end{lma}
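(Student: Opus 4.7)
The plan is a two-step concentration argument: pointwise control of the quadratic form $\mathbf{v}^T A \mathbf{v}$ via a subexponential tail bound, then an $\epsilon$-net lift to the unit sphere of the column span of $A$.

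For the pointwise step, fix a unit vector $\mathbf{v}$ and expand
\[
\mathbf{v}^T A \mathbf{v} - \sum_{i=1}^n \lambda_i = \sum_{i=1}^n \lambda_i \bigl((\mathbf{v}^T \mathbf{w}_i)^2 - 1\bigr).
\]
Since $\mathbf{v}$ is a unit vector, each $\mathbf{v}^T \mathbf{w}_i$ is $\sigma_w^2$-subgaussian with mean $0$ and variance $1$, so each summand is a centred $\sigma_w^2$-subexponential random variable with weight $\lambda_i$. Lemma \ref{lma: sub_exp_sum} then yields, with probability at least $1 - 2e^{-t}$,
\[
\Bigl|\mathbf{v}^T A \mathbf{v} - \sum_{i=1}^n \lambda_i\Bigr| \leq a \sigma_w^2 \Bigl(\lambda_1 t + \sqrt{t \sum_{i=1}^n \lambda_i^2}\Bigr).
\]

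For the uniform step, observe that $A$ annihilates the orthogonal complement of its column span, which has dimension at most $N = \min(s,n)$, so $\mu_1(A)$ and $\mu_N(A)$ are attained by $\mathbf{v}^T A \mathbf{v}$ on the unit sphere of that subspace. Pick a $1/4$-net $\mathcal{N}$ of that sphere with $|\mathcal{N}| \leq 9^N$, inflate the deviation level to $t + N \log 9$, and union-bound to obtain the pointwise estimate simultaneously on $\mathcal{N}$. Lemma \ref{lma:e-net} then lifts this to the full unit sphere of the column span, with the multiplicative loss $(1-1/4)^{-2}$ absorbed into the universal constant, producing Eq.~(\ref{eigen_concen_1}). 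Eq.~(\ref{eigen_concen_2}) follows by specialising $t = N/a_2$ for $a_2$ sufficiently large, so that $\Lambda = O(\sigma_w^2(\lambda_1 N + \sqrt{N \sum_i \lambda_i^2}))$, then using $\sum_i \lambda_i^2 \leq \lambda_1 \sum_i \lambda_i$ together with $\sqrt{xy} \leq (x+y)/2$ to collapse the square-root term into the cleaner $\lambda_1 N + \sum_i \lambda_i$ form, exactly as in the proof of Lemma \ref{lma:w_sigam_w}. Eq.~(\ref{eigen_concen_3}) is the identical argument applied to $A_k = \sum_{i>k}^n \lambda_i \mathbf{w}_i \mathbf{w}_i^T$, whose leading weight is $\lambda_{k+1}$ and whose summation runs from $i=k+1$ to $n$.

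The main obstacle is the $\epsilon$-net step in the rank-deficient regime $s > n$: the column span of $A$ is itself random, so the net cannot be fixed in advance. The argument is saved by the fact that the pointwise bound is uniform in $\mathbf{v}$, so after conditioning on the realisation of $\mathbf{w}_1, \ldots, \mathbf{w}_n$ one builds the net inside the now-deterministic $N$-dimensional column span and union-bounds over those $\leq 9^N$ directions. This keeps the log-covering factor equal to $N \log 9$ rather than the $s \log 9$ one would incur from a naive net on the full unit sphere of $\mathbb{R}^s$, which is what makes the final bound truly dimension-free in $s$.
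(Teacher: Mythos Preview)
Your proof is essentially identical to the paper's: pointwise subexponential concentration via Lemma~\ref{lma: sub_exp_sum}, a $1/4$-net on the unit sphere of the column span of $A$, union bound plus Lemma~\ref{lma:e-net}, then the AM--GM simplification for Eq.~(\ref{eigen_concen_2}) and the same argument applied to $A_k$ for Eq.~(\ref{eigen_concen_3}). The paper carries out exactly these steps in the same order.

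One caution about your final paragraph: you correctly flag that the column span is random when $s>n$, but your proposed fix---condition on $\mathbf{w}_1,\ldots,\mathbf{w}_n$, then build the net and union-bound---is circular. Once you condition on the $\mathbf{w}_i$'s there is no randomness left, so the pointwise tail bound from Lemma~\ref{lma: sub_exp_sum} is no longer a probabilistic statement you can union over; it either holds or it doesn't for each net point. The paper simply places the net on the (random) range of $A$ without addressing this measurability issue, so you are not worse off than the paper, but your attempted justification does not resolve the subtlety. A rigorous route would be to pass to the $N\times N$ matrix with the same nonzero spectrum (e.g., $\Lambda^{1/2}W^TW\Lambda^{1/2}$ when $s>n$) and net the fixed sphere $\mathcal{S}^{N-1}$ there.
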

\begin{proof}
We first assume $s > n$, i.e., $N = n$. For any unit vector $\mathbf{v} \in \mathbb{R}^s$, we have $\mathbf{v}^T\mathbf{w}_i$ is $\sigma_w^2$-subgaussian, implying that $\mathbf{v}^T\mathbf{u}_i\mathbf{w}_i^T\mathbf{v}-1 = (\mathbf{v}^T\mathbf{w}_i)^2-1$ is centered and $\sigma_w^2$ subexponential. By Lemma \ref{lma: sub_exp_sum}, for any unit vector $\mathbf{v}$, there is a universal constant $b_1$, such that with probability at least $1-2 e^{-t}$, \[|\mathbf{v}^TA\mathbf{v} - \sum_{i=1}^n \lambda_i| \leq b_1 \left(\lambda_1 t +\sqrt{t \sum_{i=1}^n \lambda_{i}^{2}}\right).\] Since $A$ has at most $n$ non-negative eigenvalues, we let the $n$ dimensional subspace spanned by $A$ be $\mathcal{A}^n$, and let $\mathcal{N}_{\omega}$  be the $\omega$-net of $\mathcal{S}^{n-1}$ with respect to the Euclidean distance, where $\mathcal{S}^{n-1}$ is the unit sphere in $\mathcal{A}^n$. We let $\omega = \frac{1}{4}$, implying that $|\mathcal{N}_{\omega}| \leq 9^n$. Applying the union bound, for every $\mathbf{v} \in \mathcal{N}_{\epsilon}$, we have with probability at least $1-2e^{-t}$, 
\begin{IEEEeqnarray}{rCl}
\left|\mathbf{v}^TA\mathbf{v} - \sum_{i=1}^n \lambda_i\right| \leq b_1&\Bigg(&\lambda_{1}(t+n \log 9)+\sqrt{(t+n \log 9) \sum_{i=1}^{n} \lambda_{i}^{2}}\Bigg).\nonumber
\end{IEEEeqnarray}

Applying the $\epsilon$-net argument (Lemma \ref{lma:e-net}), and since $\omega = \frac{1}{4}$, then for any $\mathbf{v} \in \mathcal{S}^{n-1}$, 
\begin{IEEEeqnarray}{rCl}
|\mathbf{v}^TA\mathbf{v} - \sum_{i=1}^n \lambda_i| &\leq& b_2\bigg(\lambda_{1}(t+n \log 9)+\sqrt{(t+n \log 9) \sum_{i=1}^{n} \lambda_{i}^{2}}\bigg)
:= \Lambda.\nonumber
\end{IEEEeqnarray}

Thus, with probability $1- 2e^{-t}$, \[\|A - \sum_{i=1}^n \lambda_i I_n \| \leq \Lambda.\]
We now further simplify $\Lambda$. Note that when $t \leq \frac{n}{b_3}$, $(t+n \log 9) \leq b_4 n$. Hence,
\begin{IEEEeqnarray}{rCl}
\Lambda & \leq & b_5\lambda_1n + \sqrt{b_6n \lambda_1 \sum_{i=1}^n \lambda_i} \nonumber \\
& \leq & b_5\lambda_1n + \frac{1}{2}b_6b_7\lambda_1n + \frac{1}{2b_7}\sum_{i=1}^n \lambda_i.\nonumber
\end{IEEEeqnarray}
Combining this with Eq.(\ref{eigen_concen_1}) yields Eq.(\ref{eigen_concen_2}). Using the same proof with $A_{k}$ replacing $A$, we obtain Eq.(\ref{eigen_concen_3}). The case for $s < n$ follows the same procedure, hence we omit its proof here.
\end{proof}

\section{Matrix Bernstein Inequality}\label{sec:mat_bern}
The following matrix Bernstein inequality is a result from Lemma 27 in \cite{avron2017random}, which is a restatement of Corollary 7.3.3 in \cite{tropp2015introduction} with some fix in the typos.
\begin{lma}\cite[Corollary 7.3.3, Bernstein Inequality ]{tropp2015introduction}\label{lma:matx_con}
Let $\mathbf{R}$ be a fixed $d_1 \times d_2$ matrix over the set of complex/real numbers. Suppose that $\{\mathbf{R}_1,\cdots,\mathbf{R}_n\}$ are i.i.d samples of $d_1 \times d_2$ matrices such that \[\mathbb{E}[\mathbf{R}_i] = \mathbf{R} \qquad \text{and} \qquad \|\mathbf{R}_i\|_2 \leq L,\]
where $L>0$ is a constant independent of the sample.
Furthermore, let $\mathbf{M}_1, \mathbf{M}_2$ be semidefinite upper bounds for the matrix-valued variances 
\begin{align*}
\begin{aligned}
&  \mathbb{E}[\mathbf{R}_i\mathbf{R}_i^{T}] \preceq \mathbf{M}_1 & \\
&  \mathbb{E}[\mathbf{R}_i^{T}\mathbf{R}_i]\preceq \mathbf{M}_2. &
\end{aligned}
\end{align*}
Let $m = \max(\|\mathbf{M}_1\|_2,\|\mathbf{M}_2\|_2)$ and $d =\frac{\text{Tr}(\mathbf{M}_1)+ \text{Tr}(\mathbf{M}_2)}{m}.$ 
Then, for $\epsilon \geq \sqrt{m/n}+2L/3n$, we can bound \[\bar{\mathbf{R}}_n = \frac{1}{n}\sum_{i=1}^{n}\mathbf{R}_i\] around its mean using the concentration inequality \[P(\|\bar{\mathbf{R}}_n - \mathbf{R}\|_2 \geq \epsilon) \leq 4d\exp\Bigg(\frac{-n\epsilon^2/2}{m+2L\epsilon/3}\Bigg).\]
\end{lma}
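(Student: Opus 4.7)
The plan is to follow the standard matrix Laplace transform argument in the style of Ahlswede--Winter and Tropp, but using the intrinsic-dimension refinement so that the prefactor is $4d$ rather than $d_1 + d_2$. Since the summands $\mathbf{X}_i := \mathbf{R}_i - \mathbf{R}$ are rectangular (hence not Hermitian), the first step is to pass to the Hermitian dilation
\[
\mathcal{H}(\mathbf{X}) \;=\; \begin{bmatrix} 0 & \mathbf{X} \\ \mathbf{X}^{T} & 0 \end{bmatrix} \in \mathbb{R}^{(d_1+d_2)\times(d_1+d_2)},
\]
which satisfies $\|\mathcal{H}(\mathbf{X})\|_2 = \|\mathbf{X}\|_2$ and $\mathcal{H}(\mathbf{X})^2 = \operatorname{diag}(\mathbf{X}\mathbf{X}^T,\mathbf{X}^T\mathbf{X})$. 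Thus $\|\sum_i\mathbf{X}_i\|_2 = \lambda_{\max}\bigl(\mathcal{H}(\sum_i\mathbf{X}_i)\bigr)$, and the variance proxy for the dilated sum is block-diagonal with blocks upper-bounded by $\mathbf{M}_1$ and $\mathbf{M}_2$, so its operator norm is at most $m$ and its trace is $m d$.

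Next I would invoke the matrix Chernoff/Laplace inequality: for any Hermitian random matrix $\mathbf{Y}$ and $\theta>0$,
\[
\mathbb{P}(\lambda_{\max}(\mathbf{Y})\ge t) \;\le\; e^{-\theta t}\,\mathbb{E}\,\operatorname{tr}\exp(\theta\mathbf{Y}).
\]
Applying this to $\mathbf{Y} = \sum_i \mathcal{H}(\mathbf{X}_i)$ and using Lieb's concavity theorem together with independence gives the subadditivity bound
\[
\mathbb{E}\,\operatorname{tr}\exp\!\Bigl(\theta\sum_i \mathcal{H}(\mathbf{X}_i)\Bigr) \;\le\; \operatorname{tr}\exp\!\Bigl(\sum_i \log\mathbb{E}\exp(\theta\,\mathcal{H}(\mathbf{X}_i))\Bigr).
\]
The bounded-difference assumption $\|\mathbf{X}_i\|_2\le 2L$ then yields the Bennett-type operator inequality $\log\mathbb{E}\exp(\theta\mathcal{H}(\mathbf{X}_i)) \preceq g(\theta)\,\mathbb{E}\,\mathcal{H}(\mathbf{X}_i)^2$, where $g(\theta) = (e^{\theta L}-1-\theta L)/L^2 \le \theta^2/(2(1-L\theta/3))$ for $0<\theta<3/L$.

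Combining these bounds and using the trace-exponential monotonicity, I obtain a scalar tail bound controlled by the variance proxy $\mathbf{V} := \sum_i \mathbb{E}\,\mathcal{H}(\mathbf{X}_i)^2$, whose operator norm is at most $n m$ and whose trace is $nm d$. The refinement that replaces the ambient dimension $d_1+d_2$ by the intrinsic dimension $d = \operatorname{tr}(\mathbf{V})/\|\mathbf{V}\|$ comes from writing $\operatorname{tr}\exp(f(\mathbf{V})) \le \operatorname{tr}(\mathbf{V})\cdot e^{f(\|\mathbf{V}\|)}/\|\mathbf{V}\|$ for the suitable convex $f$, which is the content of Tropp's intrinsic-dimension Chernoff lemma. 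Optimizing $\theta$ then produces the stated bound
\[
\mathbb{P}\bigl(\|\bar{\mathbf{R}}_n - \mathbf{R}\|_2 \ge \epsilon\bigr) \;\le\; 4d\exp\!\Bigl(\tfrac{-n\epsilon^2/2}{m+2L\epsilon/3}\Bigr),
\]
where the factor $4$ accounts for the dilation (doubling the trace) together with the constant in the intrinsic-dimension lemma, and the restriction $\epsilon\ge\sqrt{m/n}+2L/(3n)$ is the regime in which the intrinsic-dimension refinement is informative (otherwise the prefactor swamps the exponential).

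The main obstacle is not any single inequality but the careful handling of the intrinsic-dimension refinement: one must track constants through the dilation and through the bound $\operatorname{tr}\exp(f(\mathbf{V})) \le (\operatorname{tr}(\mathbf{V})/\|\mathbf{V}\|)e^{f(\|\mathbf{V}\|)}$, which only holds for $f$ with $f(0)=0$, forcing a decomposition $\exp(x) = 1 + (\exp(x)-1)$ before applying trace monotonicity. Since this result is Tropp's Corollary 7.3.3 and the statement is quoted verbatim, I would in practice simply cite it rather than reproduce the full calculation; a self-contained proof amounts to reassembling the ingredients above as in Chapter 7 of \cite{tropp2015introduction}.
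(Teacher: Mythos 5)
Your proposal is correct and, in effect, takes the same route as the paper: the paper offers no proof of this lemma at all, importing the statement verbatim (via Lemma 27 of \cite{avron2017random}) from Corollary 7.3.3 of \cite{tropp2015introduction}, which is exactly the citation you fall back on in your final paragraph. Your sketch of the underlying argument --- Hermitian dilation, the Laplace-transform method with Lieb's subadditivity, the Bennett-type bound $\log\mathbb{E}\exp(\theta\mathcal{H}(\mathbf{X}_i)) \preceq g(\theta)\,\mathbb{E}\,\mathcal{H}(\mathbf{X}_i)^2$, and the intrinsic-dimension refinement $\mathrm{tr}\, f(\mathbf{V}) \le \bigl(\mathrm{tr}(\mathbf{V})/\|\mathbf{V}\|\bigr) f(\|\mathbf{V}\|)$ for convex $f$ with $f(0)=0$ --- faithfully reproduces the proof in Chapter 7 of the cited source, so there is nothing further to compare.
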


\section{A Statistical Property of the ReLU Activation Function}
\begin{lma}\label{lma:relu_stats}
Let $\mathsf{w} \sim \mathcal{N}(0,\sigma^2)$, and define the random variable $\mathsf{x} = \mathsf{w}\mathbbold{1}_{\{\mathsf{w}>0\}}$, where $\mathbbold{1}_{A}$ is the indicator function for event $A$. Then $\mathsf{x}$ has the following cumulative distribution function:
\[F(\mathsf{x} \leq x)=\left\{\begin{array}{ll}
0, & x < 0 ; \\
\frac{1}{2}, &x=0;\\
\frac{1}{2} + \int_{0}^x \frac{1}{\sqrt{2\pi \sigma^2}} e^{-\frac{t^2}{2\sigma^2}} dt & x>0. 
\end{array}\quad\right.\] Furthermore, $\mathsf{x}$ has mean and variance as: \[\mathbb{E}(\mathsf{x}) = \sqrt{\frac{\sigma^2}{2\pi}}, ~~~~~\text{Var}(\mathsf{x}) = \frac{\sigma^2}{2}(1- \frac{1}{\pi}).\] Moreover, $\mathsf{x}$ is $\sigma^2$-subgaussian in the sense that: \[P(|\mathsf{x}| \geq t) \leq 2 \exp\left(-\frac{t^2}{2\sigma^2}\right).\]
\end{lma}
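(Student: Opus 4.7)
The plan is to directly compute the distribution of $\mathsf{x}$ from that of $\mathsf{w}$, derive the moments by elementary Gaussian integrals, and obtain the subgaussian tail bound from the standard Gaussian tail bound on $\mathsf{w}$. Nothing here requires machinery beyond undergraduate probability.

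First, I would decompose on the sign of $\mathsf{w}$. Since $\mathsf{w} \sim \mathcal{N}(0,\sigma^2)$ is symmetric, $P(\mathsf{w} \leq 0) = 1/2$. On $\{\mathsf{w} \leq 0\}$, $\mathsf{x} = 0$, producing an atom of mass $1/2$ at the origin; this immediately yields $F(x) = 0$ for $x<0$ and $F(0) = 1/2$. For $x > 0$, I would write $F(x) = P(\mathsf{w} \leq 0) + P(0 < \mathsf{w} \leq x) = \tfrac{1}{2} + \int_0^x \tfrac{1}{\sqrt{2\pi\sigma^2}} e^{-t^2/(2\sigma^2)}\,dt$, matching the claimed piecewise expression.

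Second, I would compute the first two moments by integrating against the truncated Gaussian density. Using the substitution $u = t^2/(2\sigma^2)$,
\[
\mathbb{E}(\mathsf{x}) = \int_0^\infty t\,\tfrac{1}{\sqrt{2\pi\sigma^2}}\,e^{-t^2/(2\sigma^2)}\,dt = \tfrac{\sigma}{\sqrt{2\pi}} = \sqrt{\tfrac{\sigma^2}{2\pi}}.
\]
For the second moment, symmetry of the Gaussian gives $\mathbb{E}(\mathsf{x}^2) = \tfrac{1}{2}\mathbb{E}(\mathsf{w}^2) = \sigma^2/2$, so $\mathrm{Var}(\mathsf{x}) = \sigma^2/2 - \sigma^2/(2\pi) = \tfrac{\sigma^2}{2}(1 - 1/\pi)$, as claimed.

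Third, for the subgaussian tail, I would use that $\mathsf{x} \geq 0$ so $|\mathsf{x}| = \mathsf{x}$, and on $\{\mathsf{x} > 0\}$ we have $\mathsf{x} = \mathsf{w}$. Hence for $t > 0$, $P(|\mathsf{x}| \geq t) = P(\mathsf{w} \geq t) \leq \exp(-t^2/(2\sigma^2))$ by the standard Mills-ratio Gaussian tail bound, which is in fact strictly sharper than the stated $2\exp(-t^2/(2\sigma^2))$ bound; for $t \leq 0$ the inequality is trivial. No step here is a genuine obstacle — the entire lemma is a bookkeeping exercise on the half-Gaussian distribution, and the looseness of the factor $2$ in the tail bound leaves ample room for a direct argument.
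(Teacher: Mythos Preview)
Your proof is correct, and for the CDF, mean, and variance it is essentially identical to the paper's argument (same sign decomposition, same substitution for $\mathbb{E}(\mathsf{x})$, same symmetry trick for $\mathbb{E}(\mathsf{x}^2)$).

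The one place you differ is the subgaussian tail. The paper first bounds the moment generating function, showing $\mathbb{E}(e^{\lambda\mathsf{x}}) \leq 2e^{\sigma^2\lambda^2/2}$, and then applies the Chernoff/Markov argument, which is where the factor $2$ originates. You instead observe directly that $P(|\mathsf{x}|\geq t)=P(\mathsf{w}\geq t)$ for $t>0$ and invoke the one-sided Gaussian tail bound $P(\mathsf{w}\geq t)\leq e^{-t^2/(2\sigma^2)}$. Your route is shorter and yields the sharper constant $1$; the paper's route has the minor advantage of explicitly controlling the MGF, which is sometimes the form of subgaussianity one wants downstream, but since the lemma as stated only asks for the tail inequality, your argument fully suffices. (Minor quibble: the bound you use is the Chernoff bound for Gaussians rather than the Mills-ratio bound proper, but the inequality itself is correct.)
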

\begin{proof}
We first investigate the CDF of $\mathsf{x}$. It is easy to see that $\mathsf{x}$ is non-negative. In addition, $\mathsf{x} =0$ if and only if $\mathsf{w} \leq 0$, hence $\mathsf{x} = 0$ with probability $\frac{1}{2}$. Finally, if $x > 0$, then we have
\begin{IEEEeqnarray}{rCl}
P(\mathsf{x} \leq x) &=& P(\mathsf{w}\leq 0) + P(0 < \mathsf{w} \leq x),\nonumber\\
&=& \frac{1}{2} + \int_{0}^x \frac{1}{\sqrt{2\pi \sigma^2}} e^{-\frac{t^2}{2\sigma^2}} dt.\nonumber
\end{IEEEeqnarray}
We now compute its mean using the CDF.
\begin{IEEEeqnarray}{rCl}
\mathbb{E}(\mathsf{x}) &=& \frac{1}{2} \times 0 + \int_{0}^{\infty} t\frac{1}{\sqrt{2\pi \sigma^2}} e^{-\frac{t^2}{2\sigma^2}} dt,\nonumber\\
&=& \int_{0}^{\infty} t\frac{1}{\sqrt{2\pi \sigma^2}} e^{-\frac{t^2}{2\sigma^2}} dt,\nonumber\\
& = & \frac{\sigma}{\sqrt{2\pi}}\int_{0}^{\infty}e^{-\frac{t^2}{2\sigma^2}} d\left(\frac{t^2}{2\sigma^2}\right),\nonumber\\
& =& \sqrt{\frac{\sigma^2}{2\pi}}. \nonumber
\end{IEEEeqnarray}
Similarly, we have 
\begin{IEEEeqnarray}{rCl}
\mathbb{E}(\mathsf{x}^2) &=& \frac{1}{2} \times 0 + \int_{0}^{\infty} t^2\frac{1}{\sqrt{2\pi \sigma^2}} e^{-\frac{t^2}{2\sigma^2}} dt,\nonumber\\
&=& \int_{0}^{\infty} t^2\frac{1}{\sqrt{2\pi \sigma^2}} e^{-\frac{t^2}{2\sigma^2}} dt,\nonumber\\
&=& \frac{1}{2} \int_{-\infty}^{\infty} t^2\frac{1}{\sqrt{2\pi \sigma^2}} e^{-\frac{t^2}{2\sigma^2}} dt,\nonumber \\
& =&\frac{1}{2} \mathbb{E}(\mathsf{w}^2)= \frac{1}{2}\sigma^2, \nonumber
\end{IEEEeqnarray}
where for the third equality, we used the symmetry of the integration function. The variance can now easily be  derived. We can also upper bound its moment generating function as:
\begin{IEEEeqnarray}{rCl}
\mathbb{E}(\exp(\lambda \mathsf{x})) &=& \frac{1}{2} + \int_{0}^{\infty} e^{\lambda t} \frac{1}{\sqrt{2\pi \sigma^2}} e^{-\frac{t^2}{2\sigma^2}} dt,\nonumber\\
&\leq& 1 + \int_{-\infty}^{\infty} e^{\lambda t} \frac{1}{\sqrt{2\pi \sigma^2}} e^{-\frac{t^2}{2\sigma^2}} dt,\nonumber\\
& \leq &  e^{\frac{\sigma^2}{2}\lambda^2} + e^{\frac{\sigma^2}{2}\lambda^2}, \nonumber\\
&=& 2 e^{\frac{\sigma^2}{2}\lambda^2}. \nonumber
\end{IEEEeqnarray}
Finally, we have 
\begin{IEEEeqnarray}{rCl}
P(|\mathsf{x}| \geq t) &=& P(\mathsf{x} \geq t) = P(e^{s\mathsf{x}} \geq e^{st}), \nonumber \\
& \leq & \frac{\mathbb{E}(e^{s\mathsf{x}})}{e^{st}} \nonumber ~~~~~\text{(Markov~Inequality)},\\
& \leq & 2 \exp\left(\frac{\sigma^2s^2}{2}-st\right). \nonumber
\end{IEEEeqnarray}
If we let $s = \frac{t}{\sigma^2}$, then we get that \[P(|\mathsf{x}| \geq t) \leq 2 \exp\left(-\frac{t^2}{2\sigma^2}\right) .\] 
\end{proof}

\begin{lma}\label{lma:zx_stats}
Let $\mathbf{W}\in \mathbb{R}^{s\times d}$ be a Gaussian random matrix with each entry i.i.d $\sim \mathcal{N}(0,\sigma_w^2)$. Recall $\mathbf{z}_{x} = \sigma(\mathbf{W}x) \in \mathbb{R}^s$. Then each entry of $\mathbf{z}_{x}$ is i.i.d subgaussian with mean $c_1 \sigma_w\|x\|_2$ and variance $c_2 \sigma_w^2\|x\|_2^2$ where $c_1,c_2$ are some universal constants.
\end{lma}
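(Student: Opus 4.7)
The plan is to reduce this directly to Lemma \ref{lma:relu_stats} by identifying the correct Gaussian parameter for each coordinate of $\mathbf{z}_x$. The $i$-th coordinate of $\mathbf{z}_x$ is $\sigma(\mathbf{w}_i^T x)$, where $\mathbf{w}_i \in \mathbb{R}^d$ is the $i$-th row of $\mathbf{W}$. Since the rows of $\mathbf{W}$ are independent and each has i.i.d.\ $\mathcal{N}(0,\sigma_w^2)$ entries, the random variables $\{\mathbf{w}_i^T x\}_{i=1}^s$ are mutually independent. Therefore the entries of $\mathbf{z}_x$ are i.i.d., and it suffices to analyze one of them.

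First I would observe that for a fixed $x \in \mathbb{R}^d$, the linear combination $\mathbf{w}_i^T x = \sum_{j=1}^d \mathbf{W}_{ij} x_j$ is a weighted sum of independent mean-zero Gaussians, and hence $\mathbf{w}_i^T x \sim \mathcal{N}(0, \sigma_w^2 \|x\|_2^2)$. This places us in the setting of Lemma \ref{lma:relu_stats} with the parameter $\sigma^2$ of that lemma specialized to $\sigma_w^2 \|x\|_2^2$.

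Next I would simply substitute $\sigma^2 \leftarrow \sigma_w^2 \|x\|_2^2$ into the conclusions of Lemma \ref{lma:relu_stats}. The mean becomes $\sqrt{\sigma_w^2 \|x\|_2^2/(2\pi)} = (1/\sqrt{2\pi})\,\sigma_w \|x\|_2$, giving $c_1 = 1/\sqrt{2\pi}$. The variance becomes $\tfrac{1}{2}(1 - 1/\pi)\,\sigma_w^2\|x\|_2^2$, giving $c_2 = \tfrac{1}{2}(1-1/\pi)$. The subgaussian tail bound $P(|\mathsf{x}|\geq t) \leq 2\exp(-t^2/(2\sigma_w^2\|x\|_2^2))$ is exactly the statement that each entry of $\mathbf{z}_x$ is $\sigma_w^2\|x\|_2^2$-subgaussian.

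There is no real obstacle here: the lemma is essentially a corollary of Lemma \ref{lma:relu_stats} once one notes (i) the correct Gaussian variance of $\mathbf{w}_i^T x$ and (ii) row-independence of $\mathbf{W}$ yielding coordinate-wise independence in $\mathbf{z}_x$. The only potential subtlety is that the subgaussian constant inherited from Lemma \ref{lma:relu_stats} has the prefactor $2$ in the tail bound rather than the usual $1$, but this is absorbed into the universal constants and does not affect the $\sigma_w^2\|x\|_2^2$ subgaussian parameter claimed.
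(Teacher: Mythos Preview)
Your proof is correct and follows exactly the same approach as the paper: identify that $\mathbf{w}_j^T x \sim \mathcal{N}(0,\sigma_w^2\|x\|_2^2)$, invoke Lemma~\ref{lma:relu_stats}, and note independence of the rows of $\mathbf{W}$. If anything, you are more explicit than the paper, which simply cites Lemma~\ref{lma:relu_stats} without writing out the constants $c_1 = 1/\sqrt{2\pi}$ and $c_2 = \tfrac{1}{2}(1-1/\pi)$.
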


\begin{proof}
Notice that $\mathbf{z}_{x,j} = \sigma(\mathbf{w}_j^Tx) $ where $\mathbf{w}_j$ is the $j$-th row of $\mathbf{W}$. It is easy to see that $\mathbf{w}_j^Tx \sim \mathcal{N}(0,\sigma_w^2\|x\|_2^2)$. Applying Lemma \ref{lma:relu_stats} yields the mean and variance. Independence follows from the fact that each of the rows of $\mathbf{W}$ are independent.
\end{proof}

\end{document}